\pgfplotsset{compat = newest}
\theoremstyle{plain}
\newtheorem{theorem}{Theorem}[section]
\newtheorem{proposition}[theorem]{Proposition}
\newtheorem{lemma}[theorem]{Lemma}
\newtheorem{corollary}[theorem]{Corollary}
\theoremstyle{definition}
\newtheorem{definition}[theorem]{Definition}
\theoremstyle{remark}
\newcommand{\alink}[1]{\href{#1}{paper-link}}
\definecolor{codebg}{rgb}{0.95,0.95,0.95}
\definecolor{codeblue}{rgb}{0.13,0.13,1}
\definecolor{codegreen}{rgb}{0,0.5,0}
\definecolor{codegray}{rgb}{0.5,0.5,0.5}
\definecolor{codered}{rgb}{0.7,0,0}
\tiny\color{codegray},
\definecolor{citecolor}{HTML}{0071BC}
\definecolor{linkcolor}{HTML}{ED1C24}
\def\eqref#1{equation~\ref{#1}}
\def\1{\bm{1}}
\DeclareMathAlphabet{\mathsfit}{\encodingdefault}{\sfdefault}{m}{sl}
\SetMathAlphabet{\mathsfit}{bold}{\encodingdefault}{\sfdefault}{bx}{n}
\definecolor{citecolor}{HTML}{0071BC}
\definecolor{linkcolor}{HTML}{ED1C24}
\title{Understanding Grokking Through A Robustness Viewpoint}
\author{%
    \textbf{Zhiquan Tan$^1$
    \quad
    Weiran Huang$^{2}$\thanks{Correspondence to Weiran Huang (weiran.huang@outlook.com).}}\\[0.3cm]
    $^1$ Department of Mathematical Sciences, Tsinghua University\\
    $^2$ Qing Yuan Research Institute, SEIEE, Shanghai Jiao Tong University
}
\begin{document}

\maketitle


\begin{abstract}
Recently, an interesting phenomenon called grokking has gained much attention, where generalization occurs long after the models have initially overfitted the training data. We try to understand this seemingly strange phenomenon through the robustness of the neural network. From a robustness perspective, we show that the popular $l_2$ weight norm (metric) of the neural network is actually a \emph{sufficient} condition for grokking. Based on the previous observations, we propose perturbation-based methods to speed up the generalization process. In addition, we examine the standard training process on the modulo addition dataset and find that it hardly learns other basic group operations before grokking, for example, the commutative law. Interestingly, the speed-up of generalization when using our proposed method can be explained by learning the commutative law, a \emph{necessary} condition when the model groks on the test dataset. We also empirically find that $l_2$ norm
correlates with grokking on the test data not in a timely way, we propose new metrics based on robustness and information theory and find that our new metrics correlate well with the grokking phenomenon and may be used to predict grokking.

\end{abstract}

\section{Introduction}

The generalization of overparameterized neural networks has been a fascinating topic in the machine learning community, challenging classical learning theory intuitions. Recently, researchers have discovered a interesting phenomenon called grokking, initially observed in training on unconventional Algorithmic datasets \citep{power2022grokking}, and later found in traditional tasks such as image classification \citep{liu2022omnigrok}. Grokking refers to the unexpected generalization of learning tasks that occurs long after the models have initially overfitted the training data. This phenomenon has garnered increasing attention due to its resemblance to ``phase transition'' \citep{nanda2023progress}. Since its initial report \citep{power2022grokking}, this phenomenon has garnered numerous explanations from various aspects ~\citep{nanda2023progress, liu2022omnigrok,merrill2023tale,barak2022hidden,davies2023unifying,thilak2022slingshot,gromov2023grokking,notsawo2023predicting,varma2023explaining}. 

One popular explanation for grokking is based on the decay of the network's ($l_2$) weight norm ~\citep{liu2022omnigrok, nanda2023progress}. We plot the accuracies and weight norm curve during training on two standard datasets in Figure \ref{fig:initial_grok}. We can see that when the test accuracy quickly increases, the weight norm quickly drops correspondingly. Then a natural question arises: Is this metric a necessary and sufficient signal for grokking?  From Figure \ref{fig:initial_grok} we can see the decrease of weight norm usually happens before the grokking on the test dataset, making it seemingly a sufficient condition for grokking but not a necessary condition. Indeed, we can prove that this is the case. Roughly speaking, when the network's robustness increases to a certain level, then grokking happens. When the weight norm decreases, the robustness increases, indicating that decay of the weight norm is a sufficient condition for grokking. Motivated by the goal of enhancing robustness, we explore the application of perturbation-based training to accelerate generalization.

For one of the tasks (on Modulo Addition Dataset) in Figure \ref{fig:initial_grok}, comprehending the commutative law of modulo addition is necessary for grokking on this task. One may expect the model to first comprehend this ``easy'' (necessary condition) task and then understand the full task of modulo addition. However, we discover an unusual observation during standard training on the modulo addition dataset \citep{power2022grokking}: the model fails to recognize the commutative law of modulo addition before fully understanding the task, which goes against our intuition. Leveraging this finding, we can explain the effectiveness of our perturbation-based approach by verifying that it successfully learns commutative law on the training dataset before grokking.

On the other hand, recall that in Figure \ref{fig:initial_grok}, we have observed that there is no \emph{simultaneous} change between the $l_2$ norm decay and grokking on the test dataset. In light of this, we propose the utilization of novel metrics derived from robustness theory. Our findings indicate that these new metrics exhibit a strong correlation with the grokking phenomenon. Moreover, we have discovered evidence suggesting that these metrics could be valuable in predicting the presence of grokking.

\begin{figure*}
    \centering
    \begin{subfigure}[b]{0.49\columnwidth}
        \centering
        \includegraphics[width=\linewidth]{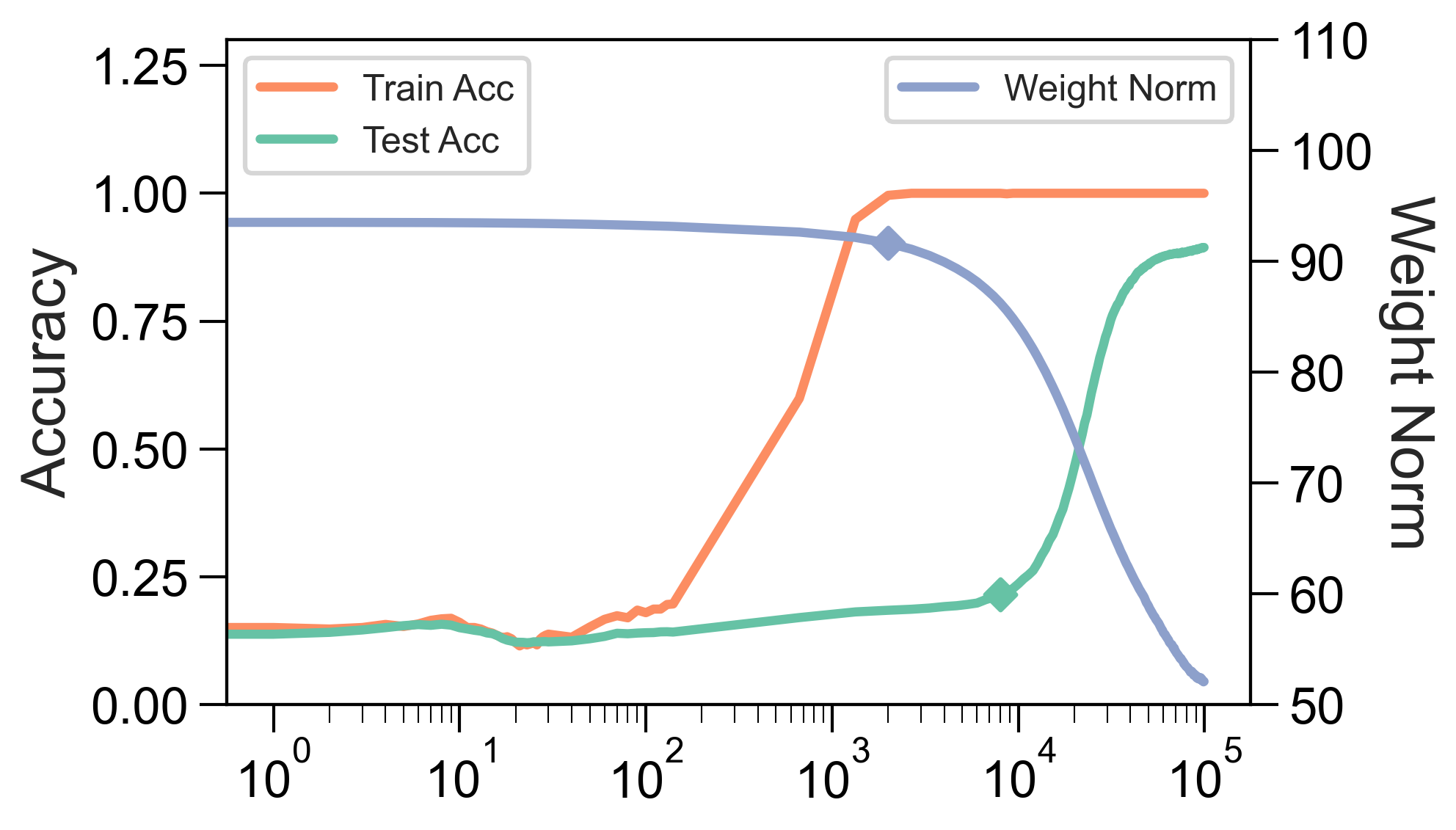}
        \caption{MNIST Dataset}
    \end{subfigure}
    \begin{subfigure}[b]{0.49\columnwidth}
        \centering
        \includegraphics[width=\linewidth]{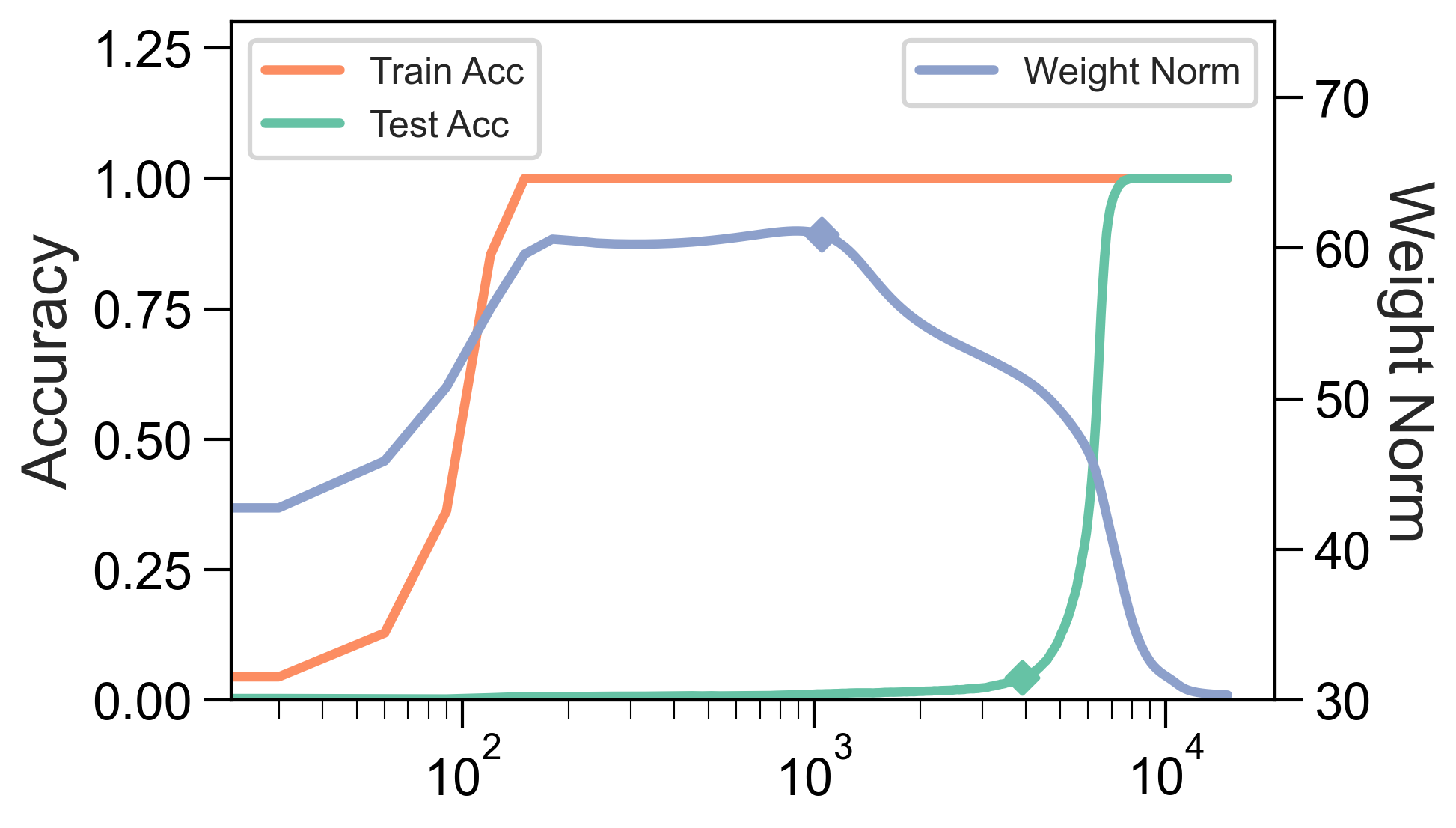}
        \caption{Modulo Addition Dataset}
    \end{subfigure}
    \caption{Typical grokking cases on MNIST and Modulo Addition Dataset: Grokking phenomena refer to sudden and unforeseen enhancements of test accuracy far beyond the point when training accuracy reaches 100\%.}
    \label{fig:initial_grok}
\end{figure*}

Our contributions can be summarized as follows:

\begin{itemize}
\item We theoretically explain why decay of $l_2$ weight norm leads to grokking through the robustness of network. 

\item Motivated by improving robustness, we use perturbation-based training to speed up generalization. 

\item We find a surprising fact that standard training on Modulo Addition Dataset fails to learn commutative law before grokking, which contradicts intuition. We further use this to explain the success of our new perturbation-based strategy.

\item Borrowing ideas from robustness and information theory, we introduce new metrics that correlate better with the grokking process and may be used to predict grokking. 
\end{itemize}

\section{Preliminary}

\subsection{Problem setting and notations}

In this paper, we mainly consider the setting of supervised classification, the most common case in grokking \citep{liu2022omnigrok, nanda2023progress}. Let $\mathcal{D}_{train} = \{(\mathbf{x}_i, y_i)\}_{i=1}^n$ be the training dataset, and $W$ denote the weight parameters of the neural network, and $f(\mathbf{x}_i, W)$ be the output of neural network on sample $\mathbf{x}_i$. The empirical training loss is defined as $\frac{1}{n} \sum^n_{i=1} \mathcal{\ell}(f(\mathbf{x}_i, W), y_i)$, where $\mathcal{\ell}$ is a loss function such as mean squared error (MSE) loss and cross-entropy loss. In this paper, we consider two canonical grokking settings for experiments: One is the MNIST image classification task introduced by \citep{liu2022omnigrok}; another is the modulo addition dataset \citep{power2022grokking}.

To train our model on the MNIST Dataset, we follow the setup of \citep{liu2022omnigrok}. We adopt a width-200 depth-3 ReLU MLP architecture and employ MSE loss. The optimization is performed using the AdamW optimizer with a learning rate of 0.001, and we utilize a batch size of 200 during the training process.
Our experiments on the Modulo Addition Dataset follow the setup of \citep{nanda2023progress}. We train transformers to perform addition modulo $P$, where the input format is ``$a b =$'' (sometimes we call it ``$a+b=$'', which is more intuitive), with $a$ and $b$ encoded as $P$-dimensional one-hot vectors. The output $c$ is read above the special token ``$=$''. We use $P = 113$ and a one-layer ReLU transformer. The token embeddings have a dimension of $d = 128$, with learned positional embeddings. The model includes 4 attention heads of dimension $\frac{d}{4} = 32$ and an MLP with $k = 512$ hidden units. For evaluation, we use a training dataset comprising $30 \%$ of all possible inputs pairs $(a,b)$, training loss is the standard cross-entropy loss for classification.

Due to the space limitation, all the proofs are deferred to Appendix \ref{proofs}.

\subsection{Matrix information theory}

In this section, we briefly summarize the matrix information-theoretic quantities that we will use  ~\citep{skean2023dime}.

\begin{definition}[$\alpha$-order matrix entropy] Suppose a positive semi-definite matrix $\mathbf{R} \in \mathbb{R}^{n \times n}$ which $\mathbf{R}(i,i)=1$ for every $i=1, \cdots, n$ and $\alpha>0$. The $\alpha$-order (R\'enyi) entropy for matrix $\mathbf{R}$ is defined as follows:
$$
\operatorname{H}_\alpha\left(\mathbf{R}\right)=\frac{1}{1-\alpha} \log \left[\operatorname{tr}\left(\left(\frac{1}{n} \mathbf{R} \right)^\alpha\right)\right],
$$
where $\mathbf{R}^{\alpha}$ is the matrix power.

The case of $\alpha=1$ recovers the von Neumann (matrix) entropy, i.e., 
$$
\operatorname{H}_1\left(\mathbf{R}\right)=-\operatorname{tr}\left(\frac{1}{n} \mathbf{R} \log \frac{1}{n} \mathbf{R} \right).
$$
\end{definition}

In this paper, if not stated otherwise, we will always use $\alpha=1$. Using the definition of matrix entropy, we can define matrix mutual information as follows.

\begin{definition}[Matrix mutual information] Suppose positive semi-definite matrices $\mathbf{R}_1, \mathbf{R}_2 \in \mathbb{R}^{n \times n}$ which $\mathbf{R}_1(i,i) = \mathbf{R}_2(i,i) = 1$ for every $i=1, \cdots, n$. $\alpha$ is a positive real number. The $\alpha$-order matrix mutual information for matrix $\mathbf{R}_1$ and $\mathbf{R}_2$ is defined as follows:
$$
\operatorname{I}_{\alpha}(\mathbf{R}_1; \mathbf{R}_2) = \operatorname{H}_{\alpha}(\mathbf{R}_1) + \operatorname{H}_{\alpha}(\mathbf{R}_2) - \operatorname{H}_{\alpha}(\mathbf{R}_1 \odot \mathbf{R}_2).
$$
    
\end{definition}

\section{Grokking: a robustness perspective}

\subsection{Understanding the role of $l_2$ weight norm decay in Grokking}

From Figure \ref{fig:initial_grok}, we can see that when the network starts to grok, there is usually a sharp decrease in the network's $l_2$ weight norm. However, the decrease in the network $l_2$ weight norm usually starts \emph{before} the test accuracy quickly increases, making this metric a seemingly sufficient condition for grokking. In the following, we will show that this is indeed the case from a theoretical viewpoint.

For the simplicity of theoretical analysis, we consider the MSE loss in this section, e.g., the empirical risk can be written as $\mathcal{L}(W):=\frac{1}{2 n} \sum_{i=1}^n\left \|f\left (\mathbf{x}_i, W\right)-\text{onehot}(y_i)\right\|^2_2$.

Let $W^*=\left(W_1^*, W_2^*\right)$ be an interpolation solution, i.e., $\mathcal{L}\left(W^*\right)=0$, namely, $f\left(\mathbf{x}_i, W^*\right)= \text{onehot}(y_i)$ on the entire training dataset.
Define sharpness of the loss function at $W^*$ as the sum of the eigenvalues of the Hessian $\nabla^2 \mathcal{L}\left(W^*\right)$, i.e.,  $S\left(W^*\right):=\operatorname{Tr}\left(\nabla^2 \mathcal{L}\left(W^*\right)\right)$.

\begin{figure*}
\centering
\begin{subfigure}[b]{0.24\columnwidth}
\includegraphics[width=\linewidth]{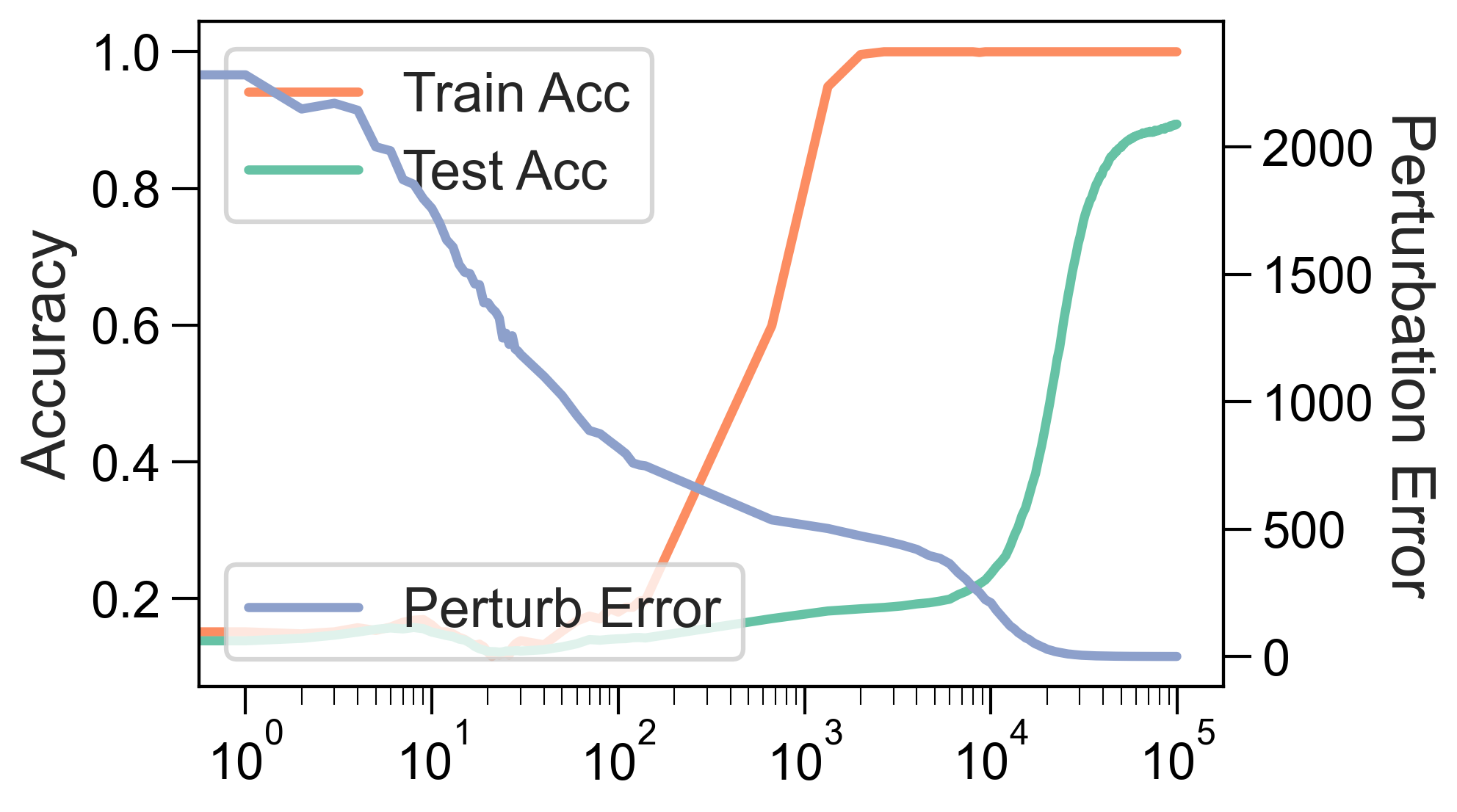}
\caption{MNIST perturb error}
\end{subfigure}
\begin{subfigure}[b]{0.24\columnwidth}
\includegraphics[width=\linewidth]{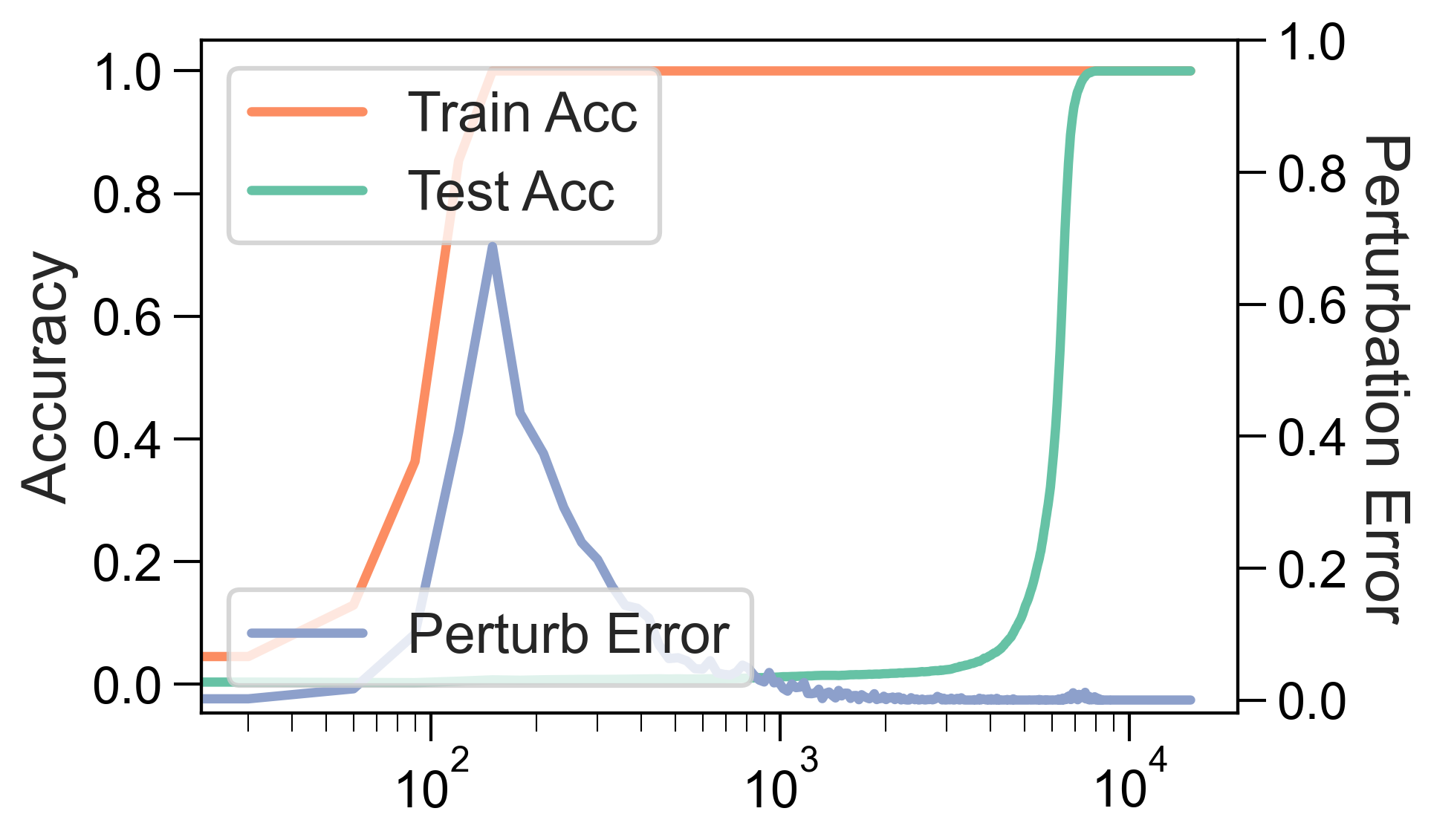}
\caption{Algorithmic perturb error}
\end{subfigure}
\begin{subfigure}[b]{0.24\columnwidth}
\includegraphics[width=\linewidth]{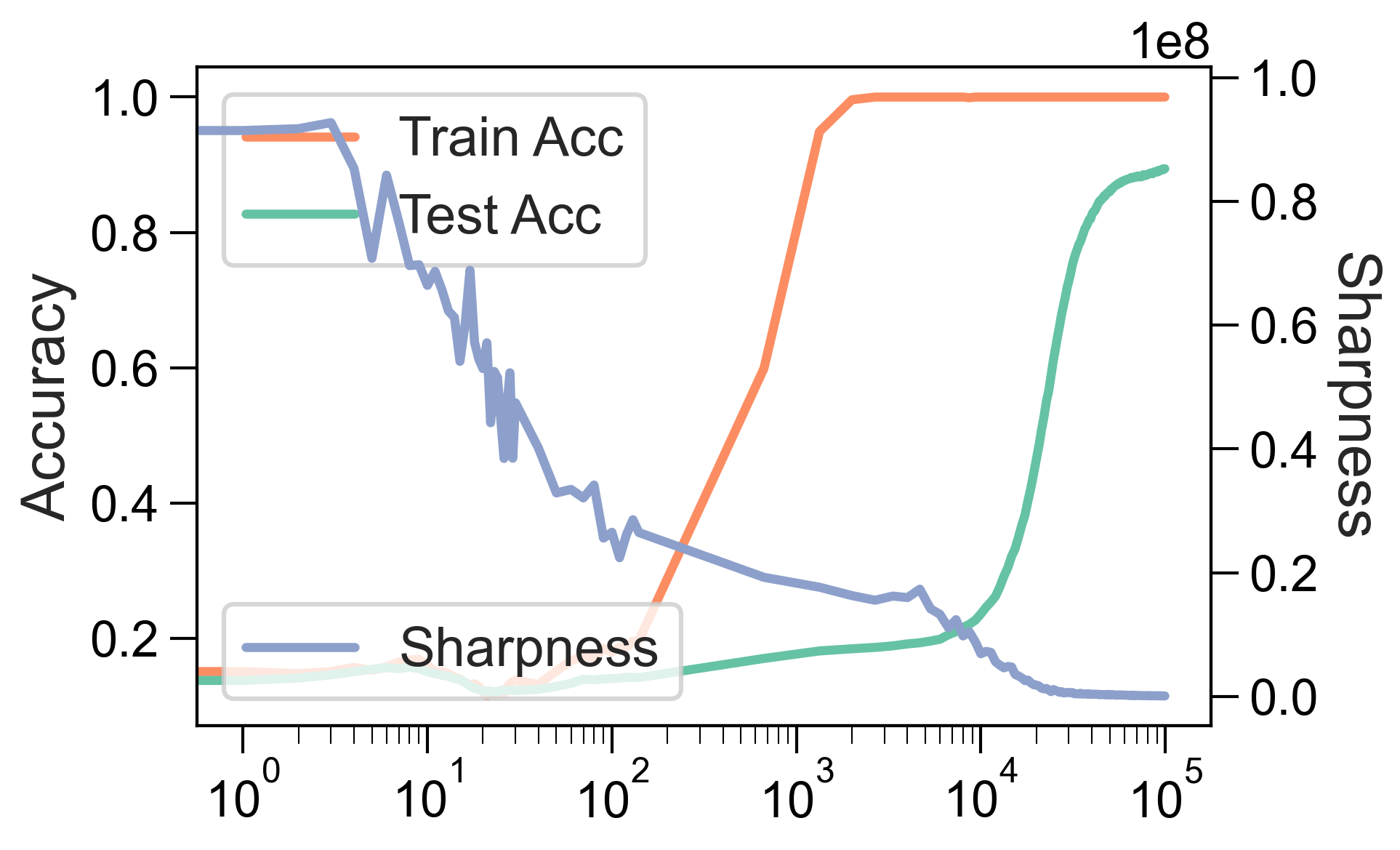}
\caption{MNIST sharpness}
\end{subfigure}
\begin{subfigure}[b]{0.24\columnwidth}
\includegraphics[height=0.58\linewidth, width=\linewidth]{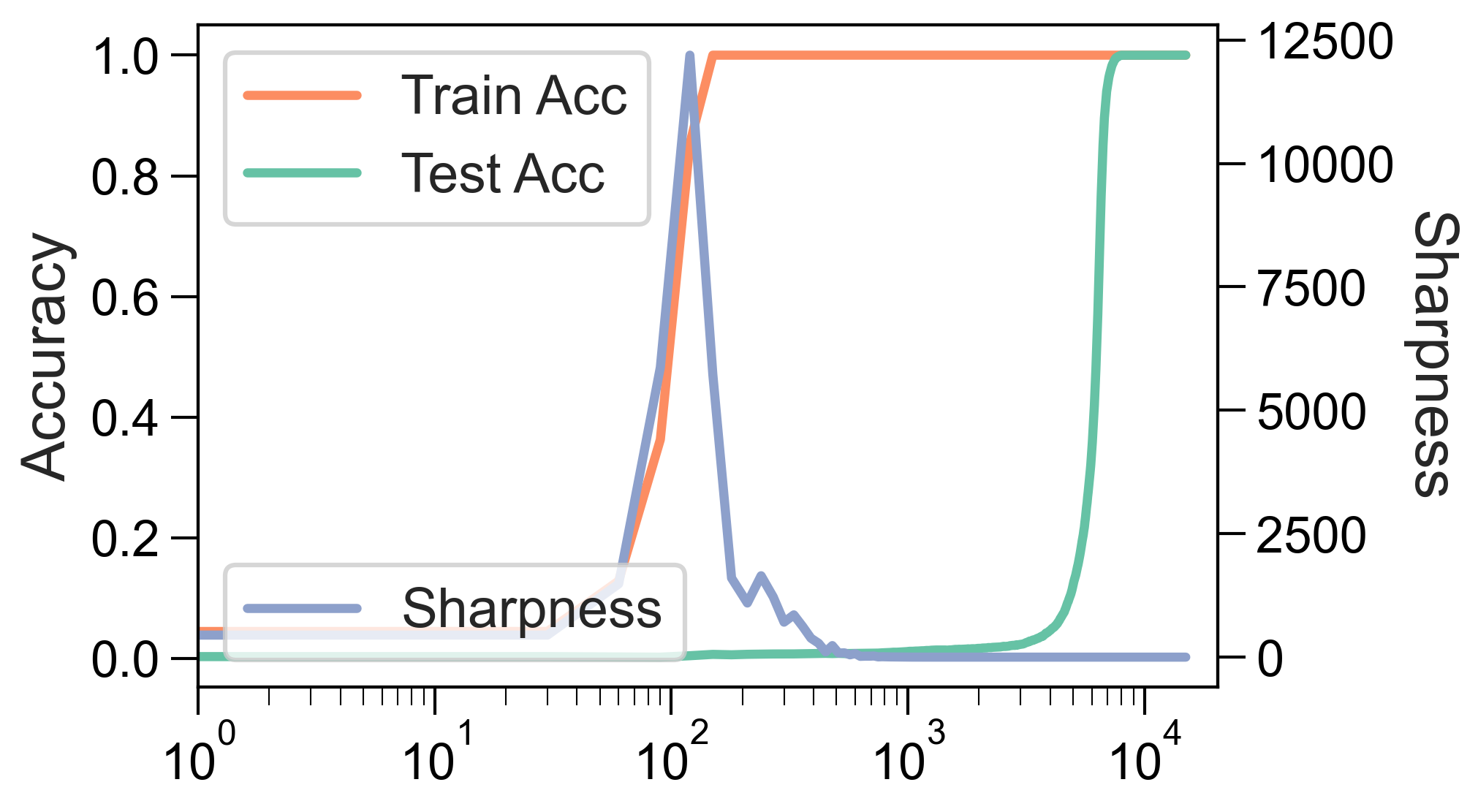}
\caption{Algorithmic sharpness}
\end{subfigure}

\caption{Robustness and sharpness on MNIST and Modulo Addition training dataset, we use Algorithmic to indicate Modulo Addition dataset to save the space for image caption. (Set the deviation of Gaussian perturbation as $\sigma= 0.04$). We plot the perturbation error as a robustness indicator in (a) and (b), a \textbf{smaller} perturbation error means more robust.}
\label{fig:hessian grok}
\end{figure*}

We will first present a lemma which is mainly adapted from \citep{ma2021linear}. This lemma describes the relationship of weight norm, sharpness, and the robustness of a neural network.
\begin{lemma} \label{robust lemma}

Suppose $W^*$ is an interpolation solution, then the following inequality holds:

\begin{equation*}
    \frac{1}{n} \sum_{i=1}^n\left\|\nabla_{\mathbf{x}} f\left(\mathbf{x}_i, W^*\right)\right\|^2_F \leq \frac{\left\|W^*\right\|_F^2}{\min _i\left\|\mathbf{x}_i\right\|^2_2} S\left(W^*\right) .
\end{equation*}   
\end{lemma}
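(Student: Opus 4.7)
The plan is to reduce the lemma to a per-example inequality by combining two ingredients: at an interpolation point the MSE Hessian simplifies to a Gauss--Newton form (so its trace is controlled by parameter-Jacobian norms), and for feed-forward networks the input Jacobian can be bounded by the parameter Jacobian through Frobenius submultiplicativity, with the factor $\|W^*\|_F^2 / \|\mathbf{x}_i\|^2$ appearing naturally from the layer structure.

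First I would expand the Hessian of $\mathcal{L}$ in its standard decomposition into a Gauss--Newton piece $\tfrac{1}{n}\sum_i \nabla_W f(\mathbf{x}_i,W)^\top \nabla_W f(\mathbf{x}_i,W)$ plus a term weighted by the per-coordinate residual $f_k(\mathbf{x}_i,W)-\text{onehot}(y_i)_k$. Since $W^*$ is an interpolation solution, all residuals vanish and the second term drops out. Taking trace yields
\[
S(W^*) = \frac{1}{n}\sum_{i=1}^n \|\nabla_W f(\mathbf{x}_i, W^*)\|_F^2.
\]
The claim thus reduces to showing, for each $i$, the per-example inequality
\[
\|\nabla_{\mathbf{x}} f(\mathbf{x}_i, W^*)\|_F^2 \cdot \|\mathbf{x}_i\|^2 \leq \|W^*\|_F^2 \cdot \|\nabla_W f(\mathbf{x}_i, W^*)\|_F^2.
\]

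Next I would exploit the decomposition $W^*=(W_1^*, W_2^*)$ and the feed-forward form $f(\mathbf{x},W)=W_2\,\phi(W_1 \mathbf{x})$. Writing $D_i=\mathrm{diag}(\phi'(W_1^* \mathbf{x}_i))$, the chain rule gives $\nabla_{\mathbf{x}} f(\mathbf{x}_i, W^*) = W_2^* D_i W_1^*$, while the first-layer parameter Jacobian equals the outer product $(W_2^* D_i)\otimes \mathbf{x}_i^\top$, so a direct computation produces the identity $\|\nabla_{W_1} f(\mathbf{x}_i, W^*)\|_F^2 = \|W_2^* D_i\|_F^2 \cdot \|\mathbf{x}_i\|^2$. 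Applying Frobenius submultiplicativity $\|AB\|_F\leq \|A\|_F\|B\|_F$ to $W_2^* D_i W_1^*$ and then substituting $\|W_1^*\|_F^2\leq \|W^*\|_F^2$ and $\|\nabla_{W_1} f\|_F^2\leq \|\nabla_W f\|_F^2$ gives the per-example bound. Averaging over $i$ and lower-bounding $\|\mathbf{x}_i\|^2$ by $\min_i \|\mathbf{x}_i\|^2$ yields the stated inequality.

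The main delicacy lies in the per-example factorization: the clean identity $\|\nabla_{W_1} f\|_F^2 = \|W_2^* D_i\|_F^2 \|\mathbf{x}_i\|^2$ and the submultiplicativity step both hinge on the feed-forward two-layer structure. Extending the argument to deeper networks is not conceptually harder but requires iterating the factorization one layer at a time, keeping track of the intermediate activation-derivative diagonals; this bookkeeping, rather than any new idea, is where care is most needed. The Gauss--Newton simplification and the final averaging are otherwise routine.
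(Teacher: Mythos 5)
Your proof is correct and is essentially the argument of \citet{ma2021linear}, from which the paper adapts this lemma without reproving it: the Gauss--Newton collapse of the Hessian at an interpolation point gives $S(W^*)=\frac{1}{n}\sum_i\|\nabla_W f(\mathbf{x}_i,W^*)\|_F^2$, and the per-example factorization through the first layer gives the rest. One simplification: the deeper-network bookkeeping you flag as delicate is unnecessary, because writing $f(\mathbf{x},W)=g(W_1\mathbf{x})$ with $g$ the remainder of the network (of arbitrary depth) already yields $\|\nabla_{W_1}f(\mathbf{x}_i,W^*)\|_F^2=\|\nabla_z g\|_F^2\,\|\mathbf{x}_i\|_2^2$ and $\|\nabla_{\mathbf{x}}f(\mathbf{x}_i,W^*)\|_F\le\|\nabla_z g\|_F\,\|W_1^*\|_F$, so only the first layer ever enters and no layer-by-layer iteration is needed.
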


To see how the robustness of network behaves during training, we will try to directly evaluate the perturbation error (a \textbf{smaller} perturbation error means more robust): $\sum^n_{i=1} \|f(\mathbf{x}_i + \Delta_i, W) - f(\mathbf{x}_i, W)  \|^2_F$, where $\Delta_i \sim \mathcal{N}(0, \sigma^2 \mathbf{I})$ is Gaussian noise. We plot the perturbation error in Figure \ref{fig:hessian grok} (a) and (b) and find it decreases after train accuracy reaches $100 \%$. As lemma \ref{robust lemma} also involves the sharpness. We plot the sharpness in Figure \ref{fig:hessian grok} (c) and (d) and find it continuously decreases during training. Then as the weight norm also decreases, lemma \ref{robust lemma} shows that the perturbation error will decrease which fits empirical results.

Then we use lemma \ref{robust lemma} to understand grokking, \emph{informally} we show: Under some mild assumptions, we can successfully classify all the test samples in a certain ``diameter'' of training samples, lemma \ref{robust lemma} is used to determine the expression of ``diameter''. Formally, we have theorem \ref{l2 decay}. This theorem can be seen as showing that decreasing of $l_2$ weight norm is a sufficient condition to explain grokking. 

\begin{theorem} \label{l2 decay}
Suppose $W^*$ is a interpolation solution and the gradient of $f(\mathbf{x}, W^*)$ is $L$-Lipschitz about $\mathbf{x}$. Suppose at least $\delta$-fraction of test data has a train dataset neighbour whose distance is at most $\epsilon(W^*)$, where $\epsilon(W^*) = \min\{1, \frac{1}{2(\sqrt{\frac{n}{\min _i\left\|\mathbf{x}_i\right\|^2_2}\left\|W^*\right\|_F^2 S\left(W^*\right)}+L)}  \}$. Then the test accuracy will be at least $\delta$.  
\end{theorem}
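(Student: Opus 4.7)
The plan is to argue sample by sample: for any test point $\mathbf{x}_{\text{test}}$ that has a training neighbor $\mathbf{x}_i$ within distance $\epsilon(W^*)$, I would show that the network's prediction at $\mathbf{x}_{\text{test}}$ coincides with the one at $\mathbf{x}_i$, which by the interpolation assumption equals $y_i$. Since by assumption at least a $\delta$-fraction of test points admit such a neighbor (and under the implicit label-smoothness assumption that nearby points share labels), the test accuracy is at least $\delta$.

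For a fixed pair $(\mathbf{x}_{\text{test}}, \mathbf{x}_i)$ with $\|\mathbf{x}_{\text{test}} - \mathbf{x}_i\|_2 \leq \epsilon(W^*)$, I would use the $L$-Lipschitz assumption on $\nabla_{\mathbf{x}} f(\cdot, W^*)$ to bound the first-order Taylor remainder, yielding
\[
\|f(\mathbf{x}_{\text{test}}, W^*) - f(\mathbf{x}_i, W^*)\|_2 \leq \|\nabla_{\mathbf{x}} f(\mathbf{x}_i, W^*)\|_F \, \|\mathbf{x}_{\text{test}} - \mathbf{x}_i\|_2 + \frac{L}{2} \|\mathbf{x}_{\text{test}} - \mathbf{x}_i\|_2^2 .
\]
To get a pointwise bound on the Jacobian at index $i$, I would use the crude step $\|\nabla_{\mathbf{x}} f(\mathbf{x}_i, W^*)\|_F^2 \leq \sum_{j=1}^n \|\nabla_{\mathbf{x}} f(\mathbf{x}_j, W^*)\|_F^2$ and then apply Lemma \ref{robust lemma} to conclude
\[
\|\nabla_{\mathbf{x}} f(\mathbf{x}_i, W^*)\|_F \leq M := \sqrt{\frac{n \|W^*\|_F^2 S(W^*)}{\min_i \|\mathbf{x}_i\|_2^2}} .
\]
Substituting and using $\epsilon(W^*) \leq 1$ (so $\epsilon(W^*)^2 \leq \epsilon(W^*)$), the right-hand side is at most $(M + L/2)\epsilon(W^*) \leq (M+L)\epsilon(W^*) \leq \tfrac{1}{2}$, using the definition $\epsilon(W^*) \leq \tfrac{1}{2(M+L)}$.

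The last step is to convert the $L_2$-closeness into argmax preservation. Since $f(\mathbf{x}_i, W^*) = \text{onehot}(y_i)$, a perturbation of Euclidean norm at most $1/2$ changes the $y_i$-th coordinate from $1$ by at most $1/2$ and any other coordinate from $0$ by at most $1/2$, so $\argmax_k f_k(\mathbf{x}_{\text{test}}, W^*) = y_i$ (with the boundary tie broken in favor of the correct label). Counting over the $\delta$-fraction of qualifying test points gives the stated accuracy bound.

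The main obstacle I anticipate is the step from Lemma \ref{robust lemma}, which only controls the \emph{average} squared Jacobian norm across the training set, down to a pointwise bound at a fixed index $i$. The naive sum-bound costs a factor of $n$ inside the square root defining $M$, and this is exactly the $\sqrt{n}$ that appears in the theorem's expression for $\epsilon(W^*)$; a tighter route (for instance, restricting the nearest-neighbor requirement to training indices whose Jacobian norm is below average) could yield a sharper $\epsilon(W^*)$. A minor secondary subtlety is that the derived inequality is $\leq \tfrac{1}{2}$ rather than $< \tfrac{1}{2}$, so strict argmax preservation on the boundary requires either a tie-breaking convention or a marginal strengthening of the strict inequality.
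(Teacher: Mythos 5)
Your proposal is correct and follows essentially the same route as the paper's proof: a first-order expansion with the Lipschitz-gradient remainder, the factor-$n$ pointwise Jacobian bound extracted from Lemma \ref{robust lemma}, the $\epsilon(W^*)\le 1$ step to absorb the quadratic term, and the argmax-preservation argument from $\|f(\mathbf{x}_{\text{test}},W^*)-f(\mathbf{x}_i,W^*)\|\le \tfrac12$. The two caveats you flag (the implicit assumption that a test point shares its neighbour's label, and the non-strict $\le \tfrac12$) are present in the paper's proof as well, and the tie issue in fact resolves itself since two coordinates cannot each move by $\tfrac12$ under a perturbation of Euclidean norm $\tfrac12$.
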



From theorem \ref{l2 decay}, it is clear that when $l_2$ norm decays, the distance threshold $\epsilon(W^*)$ increases thus the ratio of test samples who has a train dataset neighbour of distance at most $\epsilon(W^*)$ increases. Then it is clear that the test accuracy increases, which means that $l_2$ weight decay is indeed a sufficient condition for the generalization on test dataset.

The name grokking consists of two things. One is that it generalizes on test data eventually, which we have discussed previously. Another is that when it generalizes, the accuracy increases sharply, mimicking a sort of ``phase transition'' phenomenon. We will further analyze the latter in detail in the following.

Define the distance function to the training dataset as follows:
\begin{equation*}
d(\mathbf{x}, \mathcal{D}_{train} ) = \min_{\mathbf{y} \in \mathcal{D}_{train}} \|\mathbf{x} - \mathbf{y}  \|_2.
\end{equation*}

Motivated by theorem \ref{l2 decay}, we can define the neighbouring probability of test data distribution $P_{test}$ as:
$P(r) = \mathbb{E}_{\mathbf{X} \sim P_{test}} \mathbb{I}(d(\mathbf{X}, \mathcal{D}_{train} ) \leq r) = \Pr(d(\mathbf{X}, \mathcal{D}_{train} ) \leq r).$

Then theorem \ref{l2 decay} shows that the test accuracy will be at least $P(\epsilon(W^*))$. Therefore, we have the following corollary.

\begin{corollary} \label{phase transition}
Under the same assumption of theorem \ref{l2 decay}. Suppose $L \geq \frac{1}{2}$, $\| \mathbf{x}_i \|_2=1$ and $\left\|W^*\right\|_F^2 S\left(W^*\right) = \frac{\max^2 \{a-b \log_{10} (\text{train-steps} ), 0 \} }{4n}$. The the test accuracy will be at least $P(\frac{1}{2L+\max \{a-b \log_{10} (\text{train-steps} ), 0 \}})$.
\end{corollary}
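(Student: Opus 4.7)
The plan is to observe that this corollary is a direct substitution into Theorem \ref{l2 decay} combined with the definition of the neighbouring probability $P(r)$. First, I would compute $\epsilon(W^*)$ under the stated assumptions. Since $\|\mathbf{x}_i\|_2 = 1$, we have $\min_i \|\mathbf{x}_i\|_2^2 = 1$, so the quantity inside the square root becomes $n \cdot \|W^*\|_F^2 S(W^*) = n \cdot \frac{\max^2\{a - b\log_{10}(\text{train-steps}), 0\}}{4n} = \frac{\max^2\{a - b\log_{10}(\text{train-steps}), 0\}}{4}$, whose square root is $\frac{\max\{a - b\log_{10}(\text{train-steps}), 0\}}{2}$ (using that the maximum is non-negative). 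Therefore, after multiplying by $2$ and adding $2L$, the reciprocal becomes $\frac{1}{2L + \max\{a - b\log_{10}(\text{train-steps}), 0\}}$.

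Next, I would justify that the $\min\{1, \cdot\}$ in the definition of $\epsilon(W^*)$ is attained by the second argument. Since $L \geq \frac{1}{2}$, we have $2L \geq 1$, and the additive $\max\{\cdot, 0\}$ term is non-negative, so the denominator is at least $1$ and the reciprocal is at most $1$. Hence $\epsilon(W^*) = \frac{1}{2L + \max\{a - b\log_{10}(\text{train-steps}), 0\}}$ exactly.

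Finally, I would apply Theorem \ref{l2 decay} with $\delta$ equal to the fraction of test points having a training neighbour within distance $\epsilon(W^*)$. By definition of $P(r)$ as $\Pr(d(\mathbf{X}, \mathcal{D}_{train}) \leq r)$ for $\mathbf{X} \sim P_{test}$, this fraction equals $P(\epsilon(W^*))$. Substituting the expression computed above gives the stated lower bound $P\bigl(\tfrac{1}{2L + \max\{a - b\log_{10}(\text{train-steps}), 0\}}\bigr)$ on the test accuracy.

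The proof is essentially mechanical, so there is no substantive obstacle; the only care required is to keep track of the two cases in the $\max\{\cdot, 0\}$ and to verify the clipping by $1$ in $\epsilon(W^*)$. The conceptual content of the corollary lies in the parameterization of $\|W^*\|_F^2 S(W^*)$ as a decaying function of training steps, which is chosen precisely so that the resulting threshold $\epsilon(W^*)$ takes a clean closed form; this is a modelling assumption rather than a fact to be proved.
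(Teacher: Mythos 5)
Your proof is correct and matches the paper's intended argument exactly: the paper gives no separate proof for this corollary, deriving it directly from Theorem \ref{l2 decay} via the observation that the test accuracy is at least $P(\epsilon(W^*))$, and your substitution computation (including the check that $2L \geq 1$ makes the $\min\{1,\cdot\}$ clip inactive) fills in the only details required.
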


Assume $Y \sim \mathcal{N}(0, \mu)$ is a normal distribution of variance $\mu$, as the distance function is always non-negative. A very natural form of $P(r)$ will be $P(r) = \Pr(-r \leq Y \leq r)$. We will see that this assumption will make us simulate the ``phase transition'' very closely.

Take $L=\frac{1}{2}$, $\mu = \frac{1}{100}$, $a=1925$ and $b=500$. We take the total training steps as $15000$, which is the same as the algorithmic dataset. We plot the predicted accuracy given by corollary \ref{phase transition} in Figure \ref{fig: theoretical grok}. It is interesting to see that the predicted accuracy closely matches that of the real accuracy. This means that our theory provides an explanation for the ``phase transition'' phenomenon in grokking.

\begin{figure}[t] 
\centering 
\includegraphics[width=0.5\columnwidth]{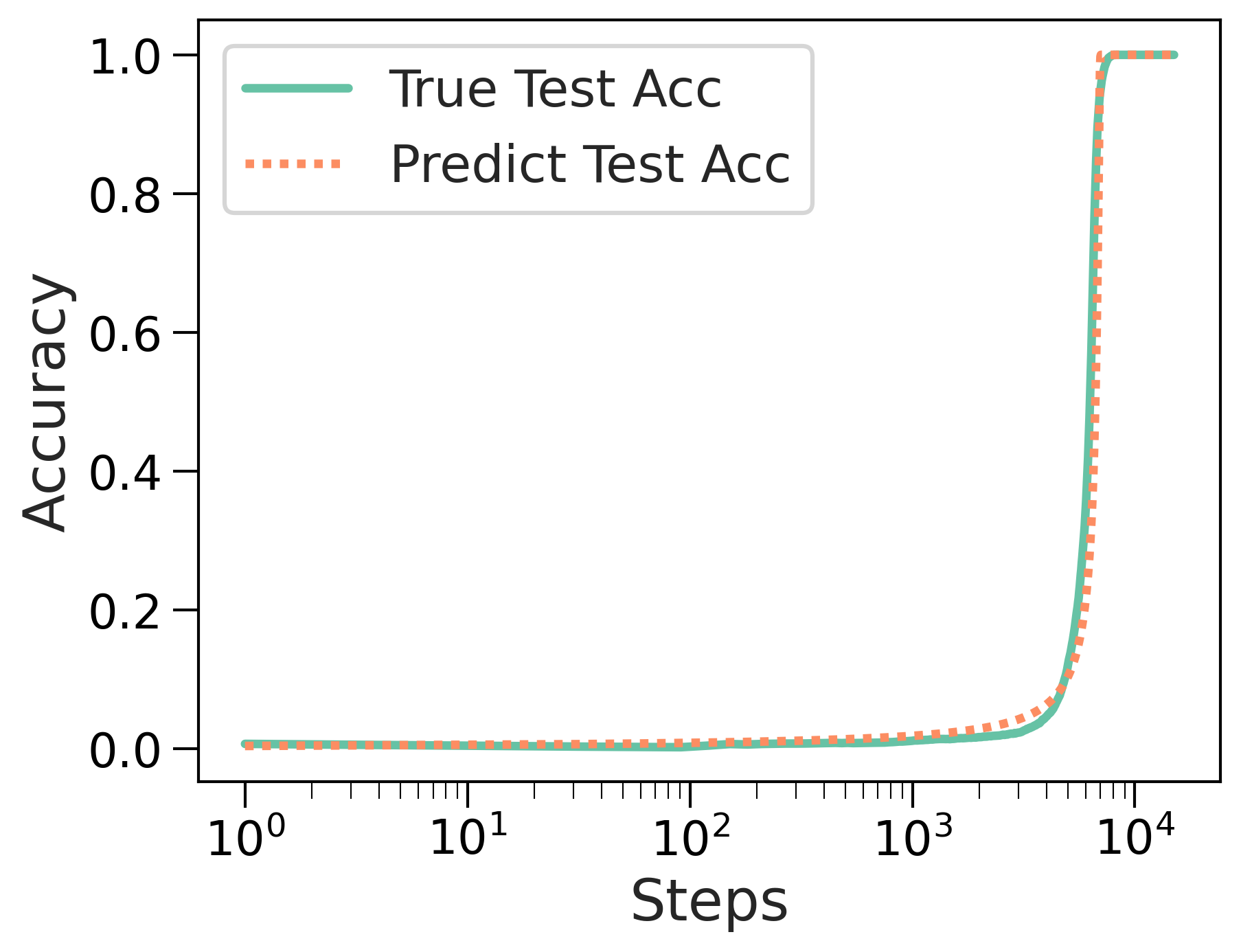}
\caption{The predicted accuracy matches the real test accuracy.}
\label{fig: theoretical grok}
\end{figure}

We will then analyze another type of regularization techniques, for example, $l_1$ weight decay.

\begin{corollary} \label{l1 decay}
Under the same assumption of theorem \ref{l2 decay}.  Suppose at least $\delta$-fraction of test data has a train dataset neighbour whose distance is at most $\hat{\epsilon}(W^*)$, where $\hat{\epsilon}(W^*) = \min\{1, \frac{1}{2(\sqrt{\frac{n}{\min _i\left\|\mathbf{x}_i\right\|^2_2}\left\|W^*\right\|_1^2 S\left(W^*\right)}+L)}  \}$. Then the test accuracy will be at least $\delta$.     
\end{corollary}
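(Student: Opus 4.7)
The plan is to mimic the proof of Theorem \ref{l2 decay} essentially verbatim, substituting a single norm inequality at the outset so that the Frobenius-norm control of the input gradient becomes an $l_1$-norm control. First, I would apply Lemma \ref{robust lemma} to obtain
$$\frac{1}{n}\sum_{i=1}^n \|\nabla_{\mathbf{x}} f(\mathbf{x}_i, W^*)\|_F^2 \leq \frac{\|W^*\|_F^2}{\min_i \|\mathbf{x}_i\|_2^2} S(W^*).$$
Next, I would invoke the elementary inequality $\|W\|_F \leq \|W\|_1$, which follows by applying $\|v\|_2 \leq \|v\|_1$ to the vectorisation of $W$ (all cross terms in the square of the entrywise $l_1$-norm are non-negative). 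This immediately yields
$$\frac{1}{n}\sum_{i=1}^n \|\nabla_{\mathbf{x}} f(\mathbf{x}_i, W^*)\|_F^2 \leq \frac{\|W^*\|_1^2}{\min_i \|\mathbf{x}_i\|_2^2} S(W^*),$$
which is the $l_1$ analogue of the robustness bound; this is the only place in the original proof where $W^*$ enters through its norm.

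Then I would reuse the remaining steps of the proof of Theorem \ref{l2 decay} without modification. For every test point $\mathbf{x}$ that admits a training neighbour $\mathbf{x}_i$ with $\|\mathbf{x} - \mathbf{x}_i\|_2 \leq \hat{\epsilon}(W^*)$, the $L$-Lipschitz-gradient hypothesis together with the interpolation condition $f(\mathbf{x}_i, W^*) = \text{onehot}(y_i)$ gives a second-order Taylor-type bound of the form
$$\|f(\mathbf{x}, W^*) - \text{onehot}(y_i)\|_2 \leq \|\nabla_{\mathbf{x}} f(\mathbf{x}_i, W^*)\|_F \cdot \hat{\epsilon}(W^*) + \tfrac{L}{2} \hat{\epsilon}(W^*)^2.$$
After substituting the $l_1$ version of the robustness bound and using $\hat{\epsilon}(W^*) \leq 1$ (so $\hat{\epsilon}^2 \leq \hat{\epsilon}$), the definition of $\hat{\epsilon}(W^*)$ is exactly what is needed to push the right-hand side below the classification margin, so $\mathbf{x}$ is classified correctly. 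Summing over the assumed $\delta$-fraction of covered test points yields the stated lower bound on test accuracy.

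The only non-routine step is verifying that the norm swap $\|W^*\|_F \leq \|W^*\|_1$ passes cleanly through the constant tracking that produces $\hat{\epsilon}(W^*)$, and in particular that the $\min\{1,\cdot\}$ truncation still absorbs the second-order Lipschitz remainder. Since the structural argument is unchanged from Theorem \ref{l2 decay}, this is pure bookkeeping rather than a genuine obstacle; the inequality $\|W^*\|_F \leq \|W^*\|_1$ is the whole mathematical content of the corollary.
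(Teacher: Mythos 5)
Your proposal is correct and rests on exactly the same single ingredient as the paper's proof: the inequality $\|W^*\|_F \le \|W^*\|_1$ obtained by vectorising $W^*$ and applying $\|w\|_2 \le \|w\|_1$. The only cosmetic difference is that the paper invokes Theorem \ref{l2 decay} as a black box (since the inequality gives $\hat{\epsilon}(W^*) \le \epsilon(W^*)$, the corollary's hypothesis implies the theorem's), whereas you re-run its proof with the norm swapped in; both are fine.
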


\begin{proof}
Denote $\operatorname{vec}$ as the operation of reshaping the tensor to a column vector. Then define $w = \operatorname{vec}(W^*)$, it is clear that $\| W^* \|^2_F = \| w \|^2_2$ and $\| W^* \|_1 = \| w \|_1$. As $\| w \|^2_2 = \sum_i w^2_i = \sum_i | w_i |^2 \leq (\sum_i | w_i |)(\sum_i | w_i |) = \| w \|^2_1$, combined with theorem \ref{l2 decay} the conclusion follows.
\end{proof}

The corollary \ref{l1 decay} can show that $l_1$ norm decay will also result in grokking. \citet{vzunkovivc2022grokking} discuss that $l_1$ is better than $l_2$ decay in $1D$ exponential model, this can be understood by the proof of corollary \ref{l1 decay} as the decay of $l_1$ norm imposes more constraint than $l_2$ norm.

\subsection{A direct degrokking strategy based on robustness} \label{perturb degrok}

Our previous findings suggest that the robustness of a neural network plays a significant role in the process of grokking. In light of this, we need an approach to enhance the robustness of neural networks directly and thus speed up the generalization process (which we termed ``degrokking'').

To achieve this, we have designed a method that introduces controlled perturbations to the input data during the training process. By doing so, we aim to induce the neural network to increase its resilience to variations and uncertainties in the input data, thus increasing the robustness of the network. This approach allows us to maintain the integrity of the initial training while making minimum modifications. We made minimum changes to the initial training by only adding $\Delta \sim  \mathcal{N}(0, \sigma^2 \mathbf{I})$ to the input. As grokking has an extremely unequal speed of convergence on training and testing dataset compared to non-grokking standard training cases, adding a constant strength perturbation to the training input may result in over-perturbation. To better consider the training process, we adaptively update $\sigma = \max(\lambda_1(1-\text{train acc}), \lambda_2)$, this makes the perturbation strength transits much more smoothly.  Formally, the training objective function is as follows:
\begin{equation*}
\frac{1}{n} \sum^n_{i=1} \mathcal{\ell}(f(\mathbf{x}_i +\Delta_i, W), y_i),  
\end{equation*}
where $\mathbf{x}_i$ is the image in MNIST and the input token embedding in the Modulo Addition Dataset.

We plot the curves in Figure \ref{fig:perturb degrok} (a) and (b), it is clear that this perturbation-based strategy speeds up the generalization. On MNIST, $\lambda_1=0.06$ and $\lambda_2=0.03$. On Modulo Addition Dataset, $\lambda_1=0.5$ and $\lambda_2=0.4$. We attribute the staircase-like curve on the algorithmic dataset to ``first-memorize, then generalize'' and we left the detailed study as future work. Compared to \citep{liu2022omnigrok}, our strategy can be seen as a traditional training technique. It may also explain why grokking is not common in usual settings, where data augmentations can be seen as adding a sort of perturbation to the input.

\begin{figure}[htb]
\centering
\begin{subfigure}[b]{0.49\columnwidth}
\includegraphics[width=\linewidth]{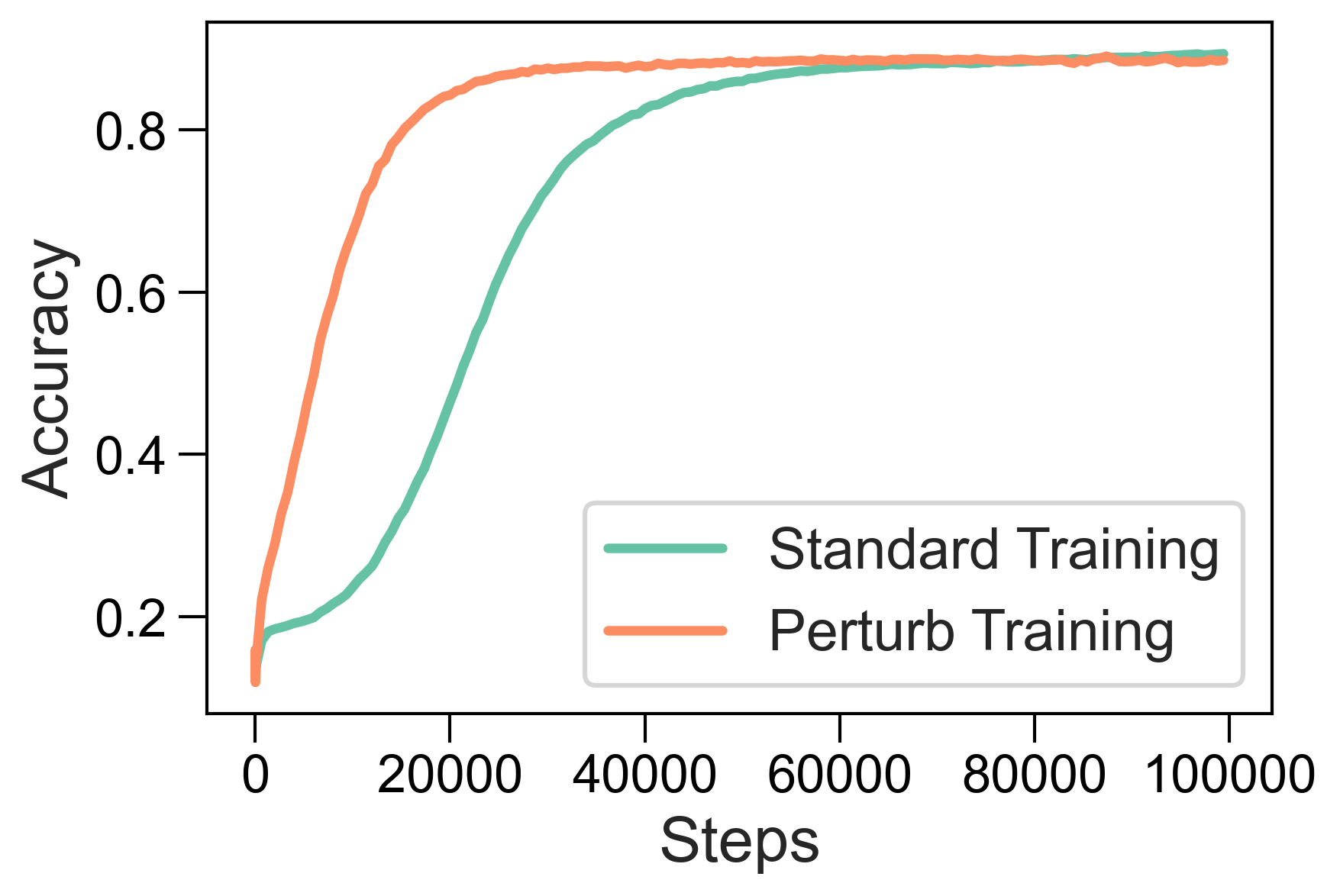}
\caption{MNIST Dataset}
\end{subfigure}
\begin{subfigure}[b]{0.49\columnwidth}
\includegraphics[width=\linewidth]{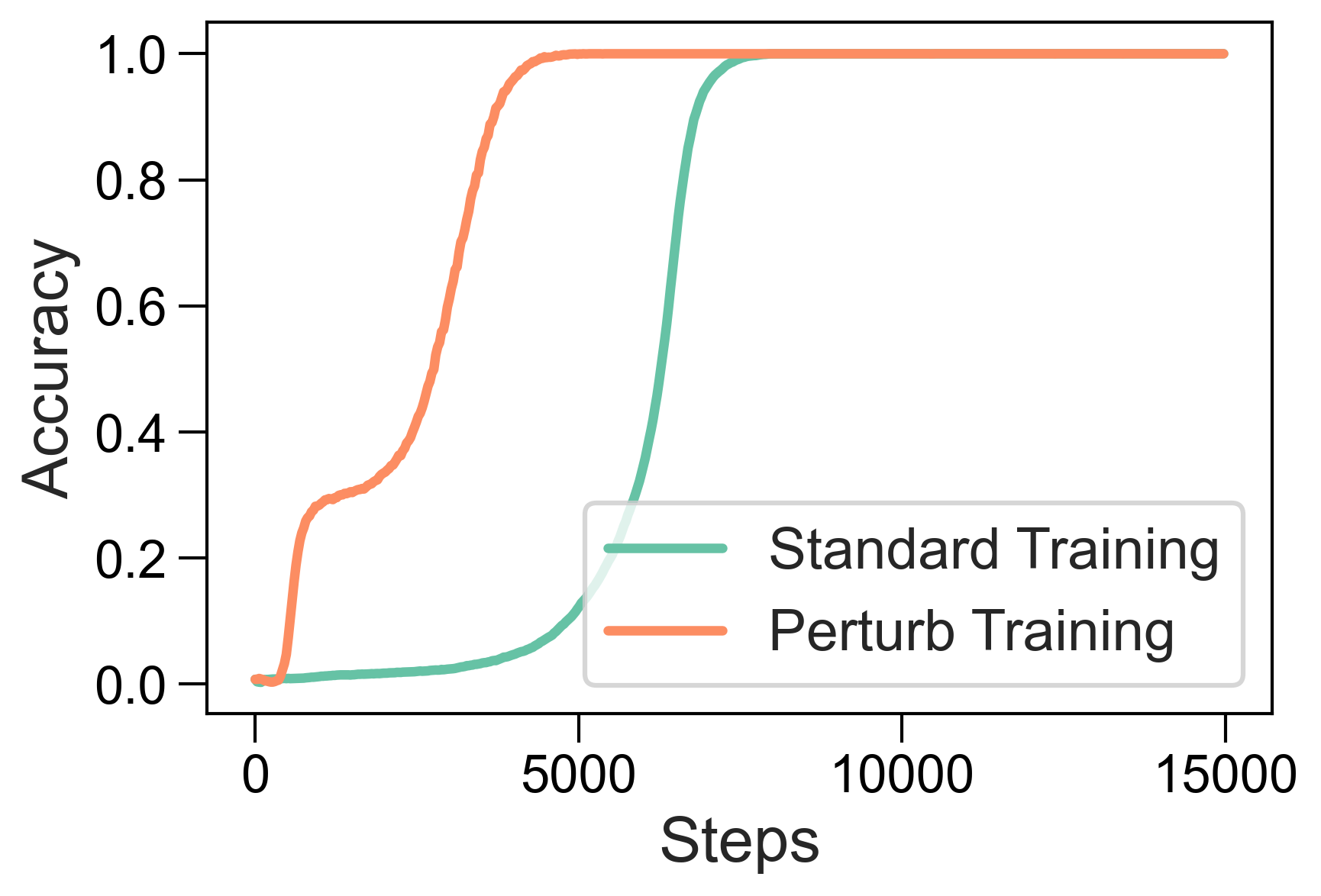}
\caption{Modulo Addition Dataset}
\end{subfigure}
\caption{Our perturbed training strategy speeds up generalization (``degrokking'').}
\label{fig:perturb degrok}
\end{figure}

\section{A closer look at grokking on Modulo Addition Dataset}

\subsection{Necessary condition from group theory}

As the task is modulo adding, from group theory, it shall obey the commutative law once it actually learns the general addition rule.

Formally, the commutative law in an abelian group is
\begin{equation*}
    a + b = b + a,
\end{equation*}
where $+$ is the binary group operation considered.

\begin{figure*}
\centering
\begin{subfigure}[b]{0.49\columnwidth}
\includegraphics[width=\linewidth]{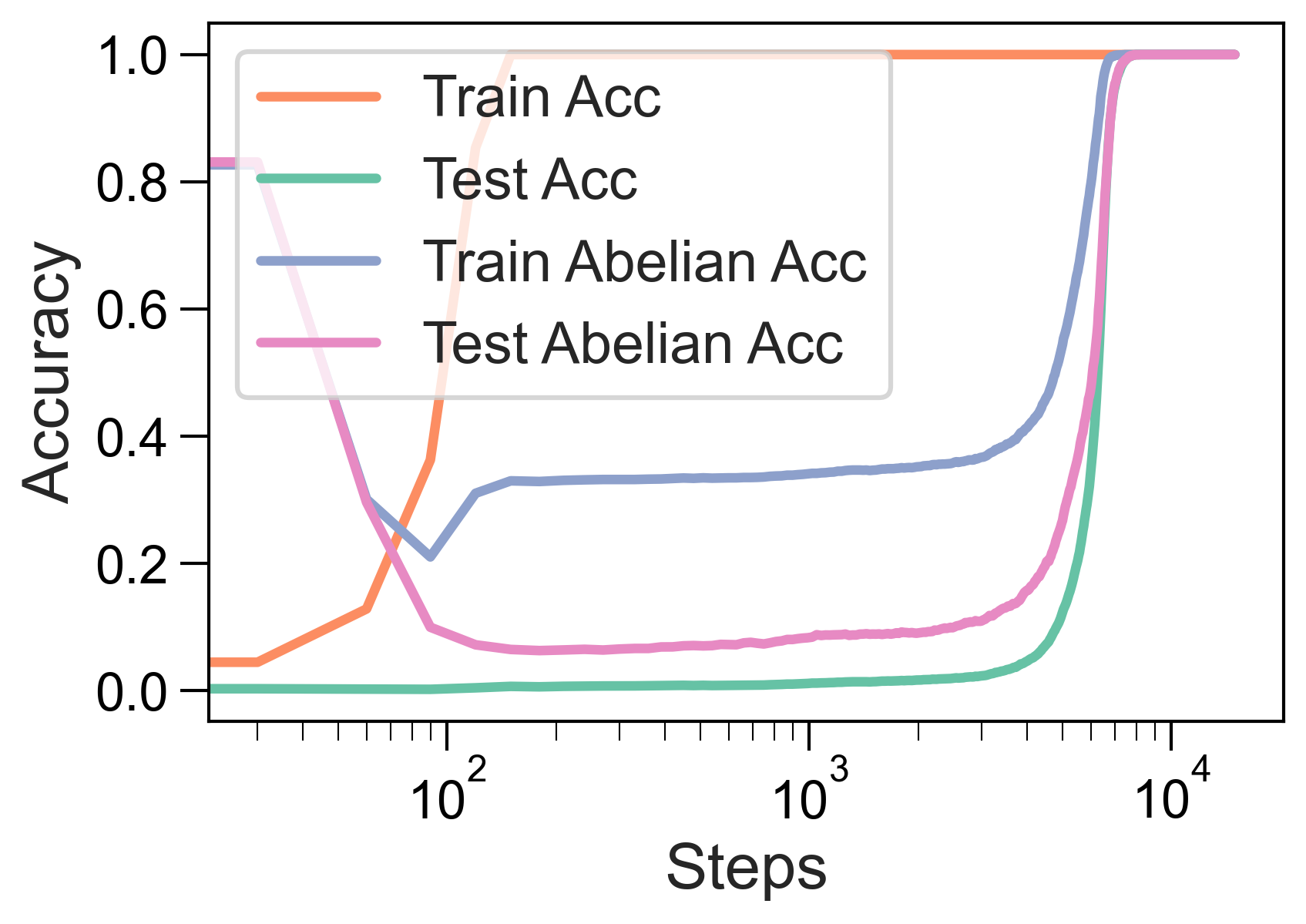}
\caption{Standard training}
\end{subfigure}
\begin{subfigure}[b]{0.49\columnwidth}
\includegraphics[width=\linewidth]{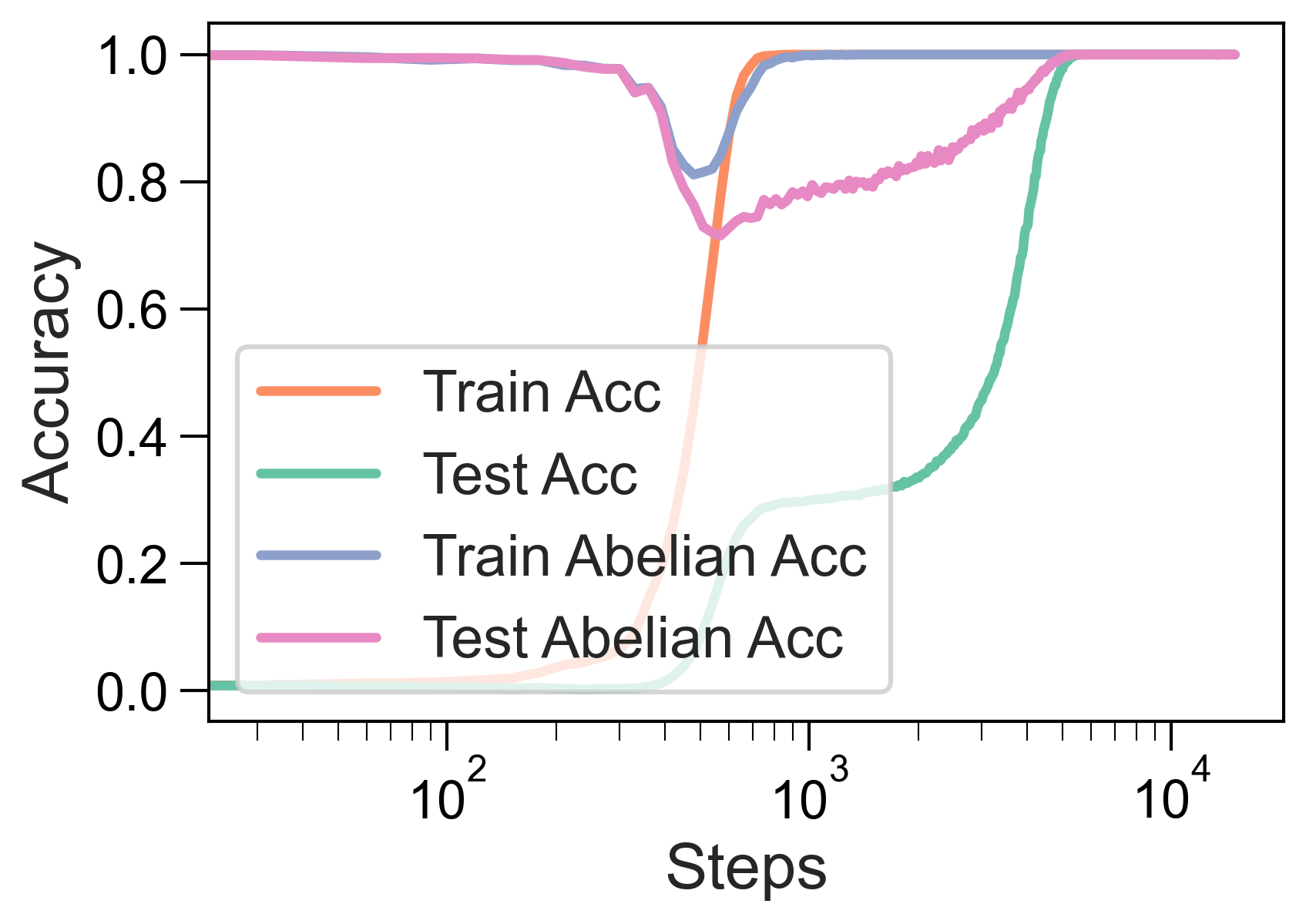}
\caption{Perturb training}
\end{subfigure}
\caption{Abelian test shows that perturb training will help model comprehend commutative rule.}
\label{fig:abelian test}
\end{figure*}

In Figure \ref{fig:abelian test}, we plot the accuracy of when the prediction of $a+b$ is equal to that of $b+a$ on samples $(a, b)$s. Note the prediction is based on the maximal index of the ``$a+b=$'' (``$b+a=$'') logits. We call this ``abelian test''. Surprisingly, from Figure \ref{fig:abelian test}, it is clear that the standard training process does not commutative rule on the training data until it groks. However, the perturbed training strategy comprehends commutative rule on the training data right after the training accuracy reaches $100 \%$. 

In Figure \ref{fig: abelian degrok}, we further add the logits level MSE loss of training samples ``$a+b=$'' and ``$b+a=$'' as regularizer on the initial loss. We call this strategy abelian degrok, in the experiment we set the regularization coefficient of the regularizer to $100$. We find it speeds up training and performs even better than the perturb-based strategy. Therefore, combining the observations in Figures \ref{fig: abelian degrok} and \ref{fig:abelian test}, we think the reason why our perturb-based strategy is effective is that it comprehends the commutative law.

\begin{figure}[t] 
\centering 
\includegraphics[width=0.5\columnwidth]{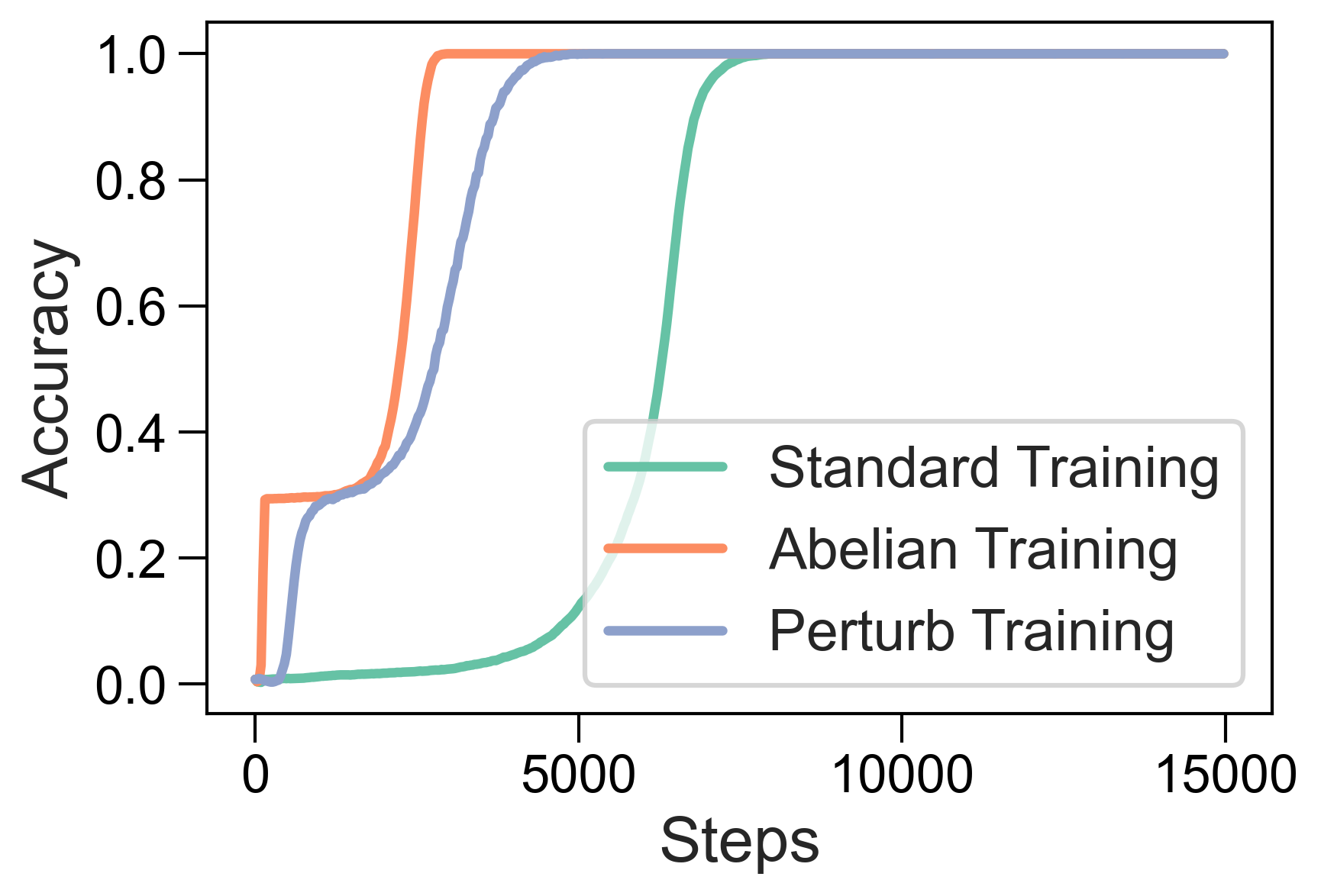}
\caption{Abelian degrok strategy.}
\label{fig: abelian degrok}
\end{figure}

\subsection{Does the robustness-based degrokking strategy change the representation?}

At first glance, the robustness (perturbation)-based degrokking strategy made a minimum change to the initial training. Thus, one may doubt whether it has learned a similar representation as the initial one. In Figure \ref{fig:abelian test} (a) and (b), we can see that the abelian test accuracy of both standard and perturb training is stable after it first reaches a $100 \%$ test accuracy. But as our design of abelian accuracy only implicitly involves representation, we need more detailed quantities to see whether the representation of these two types of training coincides or not.

In Figure \ref{fig:abelian logit test}, we plot the distance of the logits between ``$a+b=$'' and ``$b+a=$'' on samples $(a, b)$s. This quantity depends directly on the representation and we call this ``Abelian test on logits''. Figure \ref{fig:abelian logit test} (a) shows that the logits distance of standard training is stable once it reaches 0, but (b) has a chaotic behavior when the test accuracy reaches $100 \%$. It clearly shows that the representations obtained by standard and perturb training are different, as the dynamic differs a lot.

\begin{figure}[htb]
\centering
\begin{subfigure}[b]{0.495\columnwidth}
\includegraphics[width=\linewidth]{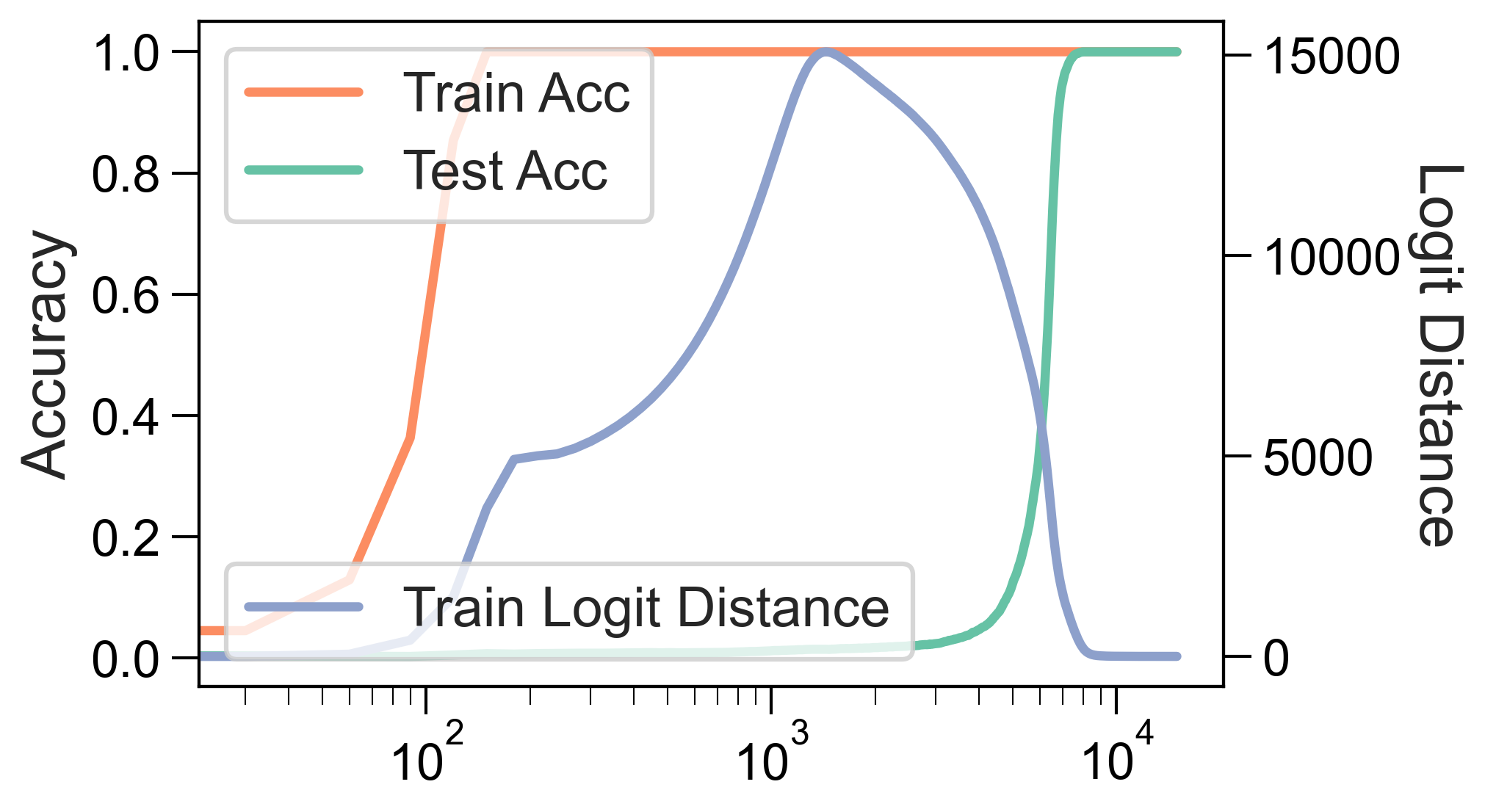}
\caption{Standard training}
\end{subfigure}
\begin{subfigure}[b]{0.495\columnwidth}
\includegraphics[width=0.97\linewidth]{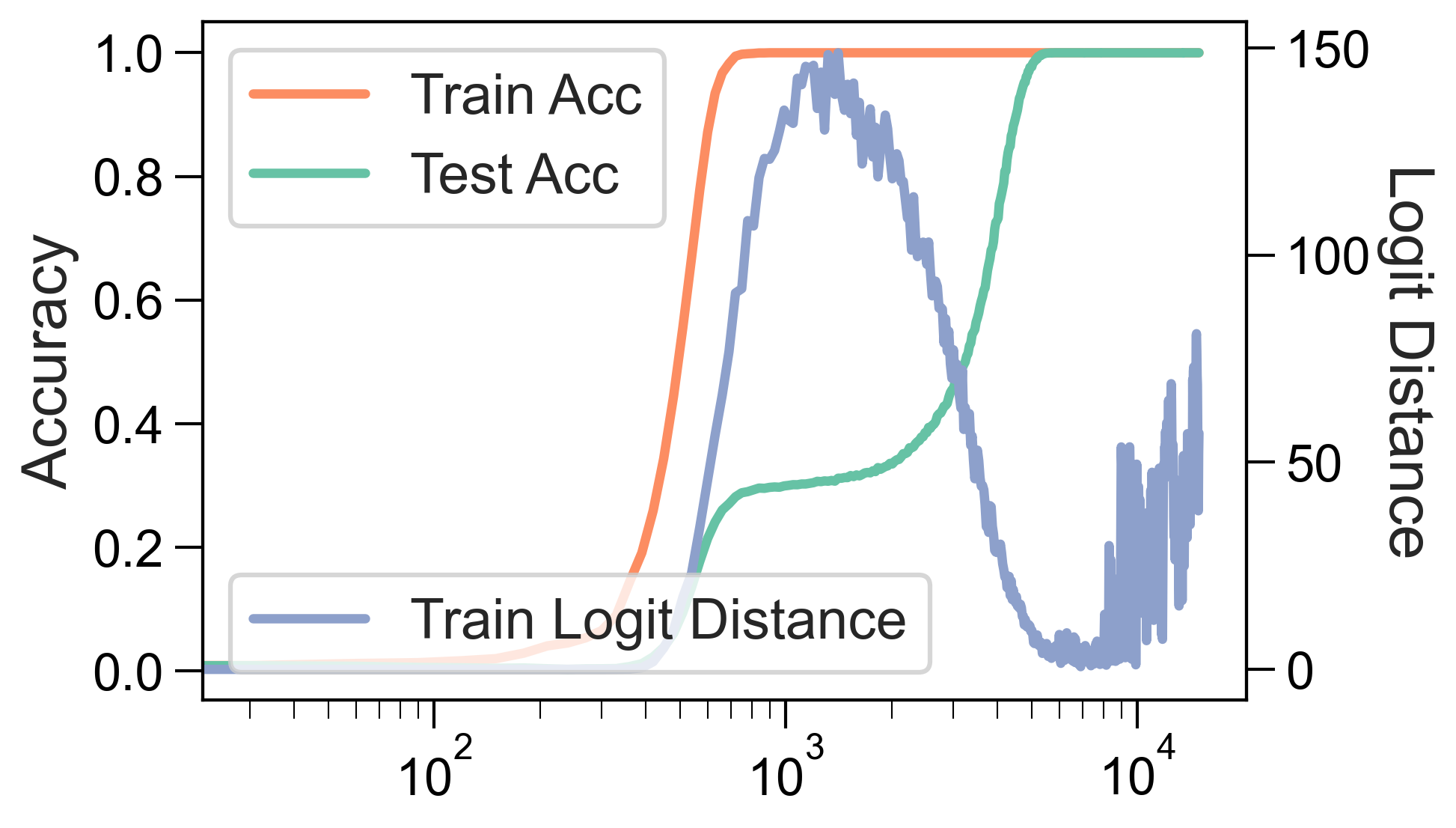}
\caption{Perturb training}
\end{subfigure}
\caption{Abelian test on logits.}
\label{fig:abelian logit test}
\end{figure}

\section{New metrics}

We found that the decay of $l_2$ weight norm is a sufficient condition to indicate grokking on the test dataset (theorem \ref{l2 decay}). One may wonder can we propose new metrics that better correlate with the grokking process? We answer this question in the affirmative by borrowing ideas from robustness and information theory.

\subsection{New metrics that correlate better to test accuracy} \label{metrics}

As we want to have metrics that correlate more closely with the grokking process, motivated by the use of information theory to explain neural network learning \citep{tishby2015deep}, we want to use metrics that are information-theoretically understandable. But as information-theoretic quantities have usually hard to compute exactly, we find that matrix information-theoretic quantities may be better choices as they are simpler to compute. 

Note when computing matrix information-theoretic quantities, we need to satisfy its requirement (i.e. an all $1$ diagonal and positive semi-definiteness). A very easy construction of a matrix satisfying these requirements is the ($l_2$ normalized) feature gram matrix. Note features are evaluated on specific datasets, during training we only have access to the training dataset. As we have discussed the usage of robustness in explaining grokking, we would like the features calculated based on a perturbed training dataset.

We will give the definite of our metrics ``perturbed mutual information'' based on the intuition discussed above. Informally, perturbed mutual information is defined as the (feature $l_2$ normalized gram matrix) mutual information of the input and output layers on dataset $\mathcal{D}_{train}$, when input is perturbed by $\Delta \sim \mathcal{N}(0, \sigma^2 \mathbf{I})$. \textbf{Denote $W=(W_1, W_2)$, where $W_1$ is the first layer of the neural network.} 

\begin{definition}[Perturb Mutual Information] Suppose the perturbation is sampled as $\Delta = (\Delta_i)^n_{i=1} \sim \mathcal{N}(0, \sigma^2 \mathbf{I})$. Denote $\mathbf{Z}_1(\Delta) = [\mathbf{z}^{(1)}_1 \cdots \mathbf{z}^{(1)}_n]$, where $\mathbf{z}^{(1)}_i =\frac{f(\mathbf{x}_i + \Delta_i, W_1 )}{\|f(\mathbf{x}_i + \Delta_i, W_1 )\|}$. Also denote $\mathbf{Z}_2(\Delta) = [\mathbf{z}^{(2)}_1 \cdots \mathbf{z}^{(2)}_n]$, where $\mathbf{z}^{(2)}_i =\frac{f(\mathbf{x}_i + \Delta_i, W )}{\|f(\mathbf{x}_i + \Delta_i, W )\|}$. Define $\mathbf{G}_1(\Delta) = \mathbf{Z}^T_1(\Delta)\mathbf{Z}_1(\Delta)$ and $\mathbf{G}_2(\Delta) = \mathbf{Z}^T_2(\Delta)\mathbf{Z}_2(\Delta)$. The perturb mutual information is defined as:
\begin{equation*}
\operatorname{PMI}(f, \sigma, \alpha)  = \mathbb{E}_{\Delta  \sim \mathcal{N}(0, \sigma^2 \mathbf{I})} \operatorname{I}_{\alpha}(\mathbf{G}_1(\Delta), \mathbf{G}_2(\Delta)).
\end{equation*}
\end{definition}

\begin{figure}[htb]
\centering
\begin{subfigure}[b]{0.49\columnwidth}
\includegraphics[width=\linewidth]{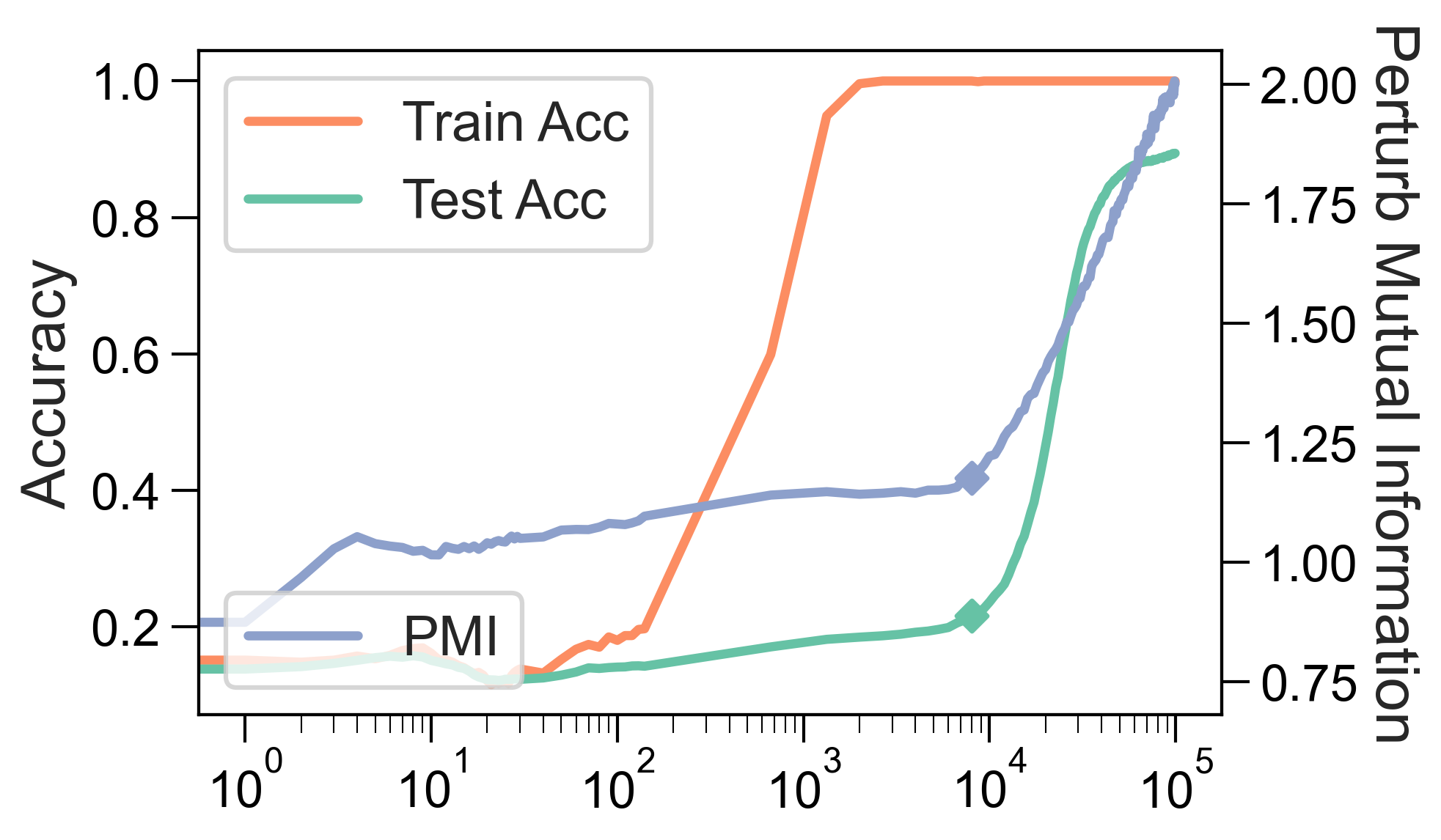}
\caption{MNIST Dataset}
\end{subfigure}
\begin{subfigure}[b]{0.49\columnwidth}
\includegraphics[width= \linewidth]{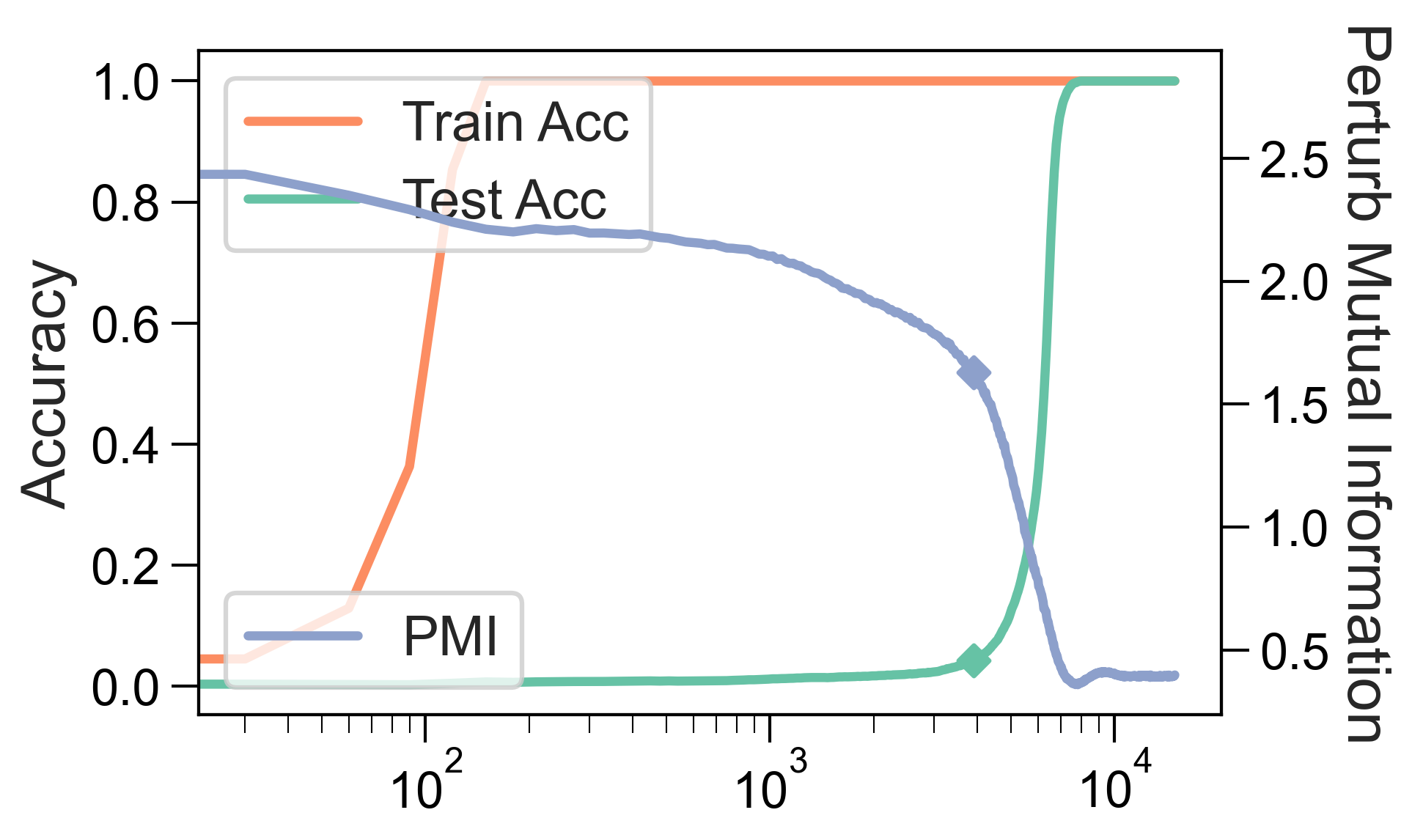}
\caption{Modulo Addition Dataset}
\end{subfigure}
\caption{Perturb mutual information on the training dataset.}
\label{fig:perturb mi}
\end{figure}

Similarly, we define the perturb entropy metric as follows.

\begin{definition}[Perturb Entropy] Suppose the perturbation is sampled as $\Delta = (\Delta_i)^n_{i=1} \sim \mathcal{N}(0, \sigma^2 \mathbf{I})$. Denote $\mathbf{Z}(\Delta) = [\mathbf{z}_1 \cdots \mathbf{z}_n]$, where $\mathbf{z}_i =\frac{f(\mathbf{x}_i + \Delta_i, W )}{\|f(\mathbf{x}_i + \Delta_i, W )\|}$. Define $\mathbf{G}(\Delta) = \mathbf{Z}^T(\Delta)\mathbf{Z}(\Delta)$. The perturb entropy is defined as:
\begin{equation*}
\operatorname{PE}(f, \sigma, \alpha)  = \mathbb{E}_{\Delta  \sim \mathcal{N}(0, \sigma^2 \mathbf{I})} \operatorname{H}_{\alpha}(\mathbf{G}(\Delta)).
\end{equation*}
\end{definition}

From the definition of perturb mutual information and entropy, we can see it has a relatively high computation overhead. Therefore, we will make a few approximations to accelerate the computing. The parameter $\alpha$ is usually set as $1$ as this closely aligns with the classical Shannon entropy. As the number of samples is relatively high, we instead calculate the quantity within each batch and the average among batches. Note for ease of calculation, we sample $\Delta$ only once. 

We plot the perturb mutual information (PMI) in Figure \ref{fig:perturb mi} (a) and (b), it is clear that perturb mutual information changes sharply at the time test accuracy undergoes a quick increase. On MNIST, $\sigma=0.1$. On Modulo Addition Dataset, $\sigma=0.4$.

We plot the perturb entropy (PE) in Figure \ref{fig:perturb entropy} (a) and (b), it is clear that perturb entropy changes sharply at the time test accuracy undergoes a quick increase.

From the above observations, we find that the proposed two metrics PMI and PE change sharply \emph{at the time} of grokking. This makes them better indicators for grokking than the $l_2$ weight norm.

\begin{figure}[htb]
\centering
\begin{subfigure}[b]{0.49\columnwidth}
\includegraphics[width=\linewidth]{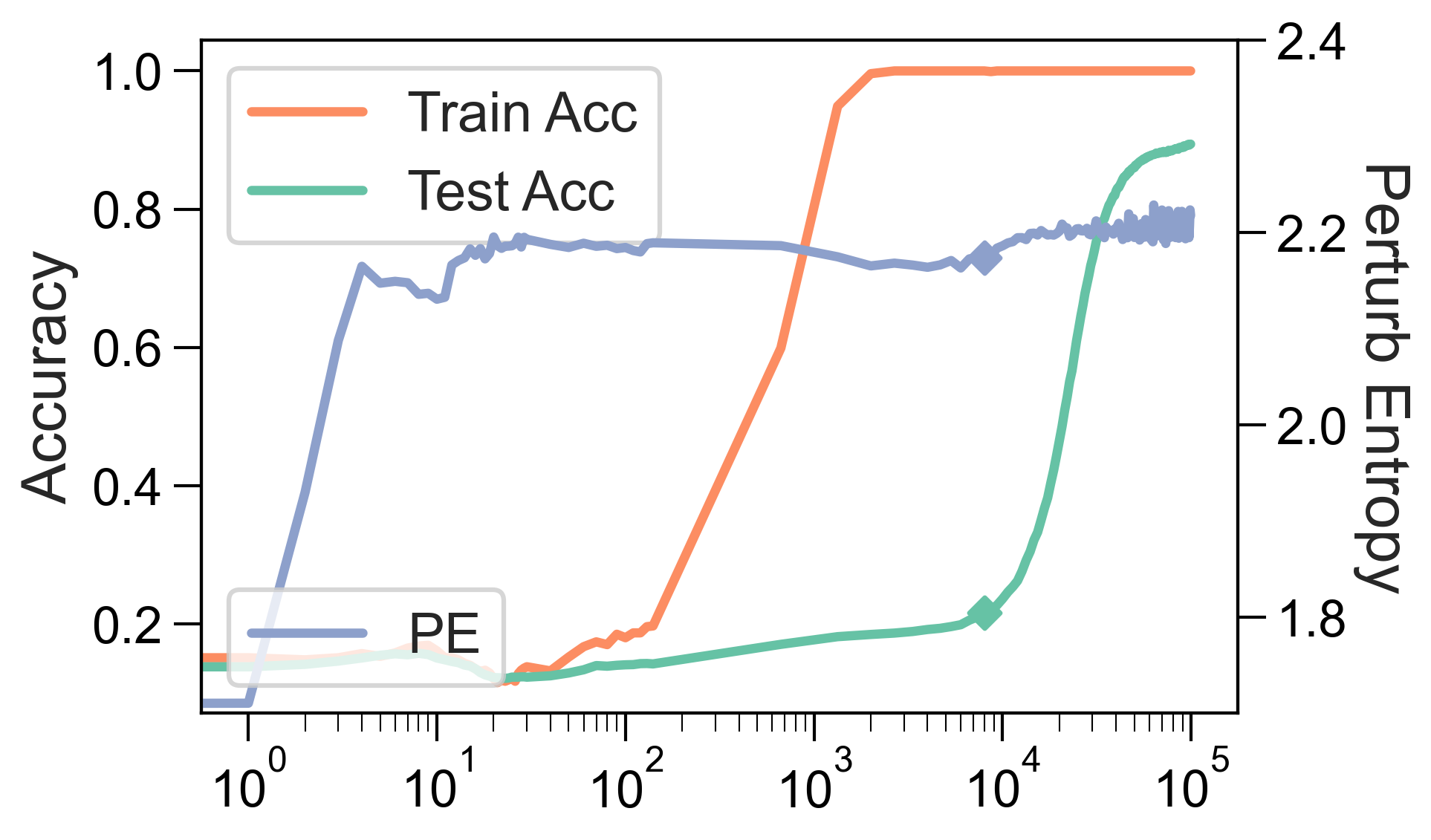}
\caption{MNIST Dataset}
\end{subfigure}
\begin{subfigure}[b]{0.49\columnwidth}
\includegraphics[width=\linewidth]{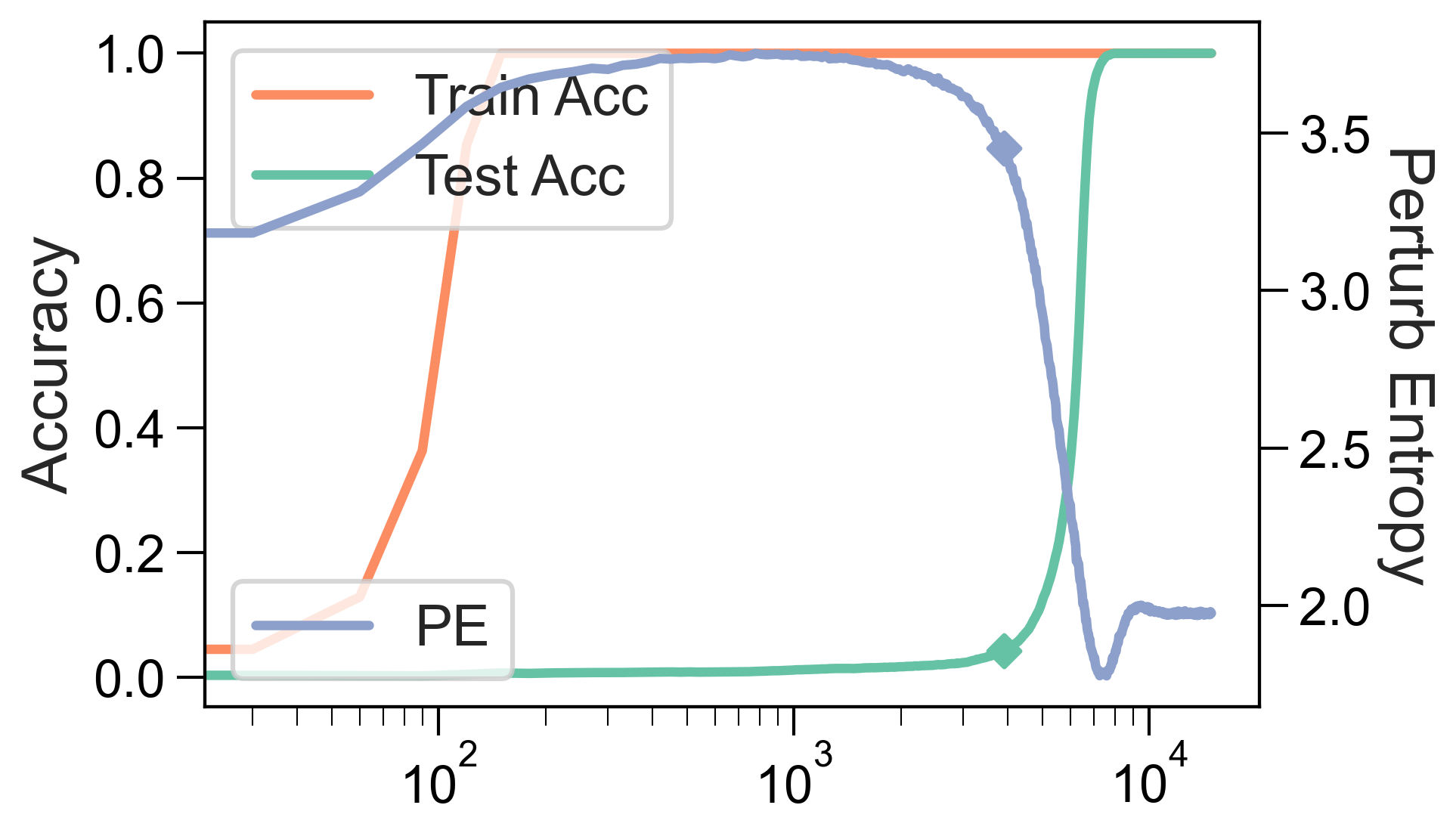}
\caption{Modulo Addition Dataset}
\end{subfigure}
\caption{Perturb entropy on the training dataset.}
\label{fig:perturb entropy}
\end{figure}

\subsection{Predict grokking}

As the metrics PMI and PE closely correlate with the grokking process. One may wonder can these metrics help predict whether a learning process will eventually grok or not by observing the dynamics in the early stage of training. Motivated by the fact that robustness is closely related to the grokking process, we find that the difference between perturbed and un-perturbed information-theoretic quantities may help us identify processes that will eventually grok.

\begin{definition}
(Mutual Information Difference) $\text{MID}(f, \sigma, \alpha)= | \text{PMI}(f, \sigma, \alpha) - \text{PMI}(f, 0, \alpha) |$.
\end{definition}

\begin{definition}
(Entropy Difference) $\text{ED}(f, \sigma, \alpha)= | \text{PE}(f, \sigma, \alpha) - \text{PE}(f, 0, \alpha) |$.
\end{definition}

As we are caring about ``predicting'', this means that we should focus only on the early stage training. Therefore we are considering the training process \textbf{before} the training accuracy reaches $100 \%$. We set $\alpha$ and $\sigma$ just as section \ref{metrics}. In Figure \ref{fig:predict mi} and \ref{fig:predict entropy}, we can see when both MID or ED have a phase of consistent decreasing. Additionally, on MNIST, the MID and ED are very small throughout the whole training process.

\begin{figure}[htb]
\centering
\begin{subfigure}[b]{0.49\columnwidth}
\includegraphics[width=\linewidth]{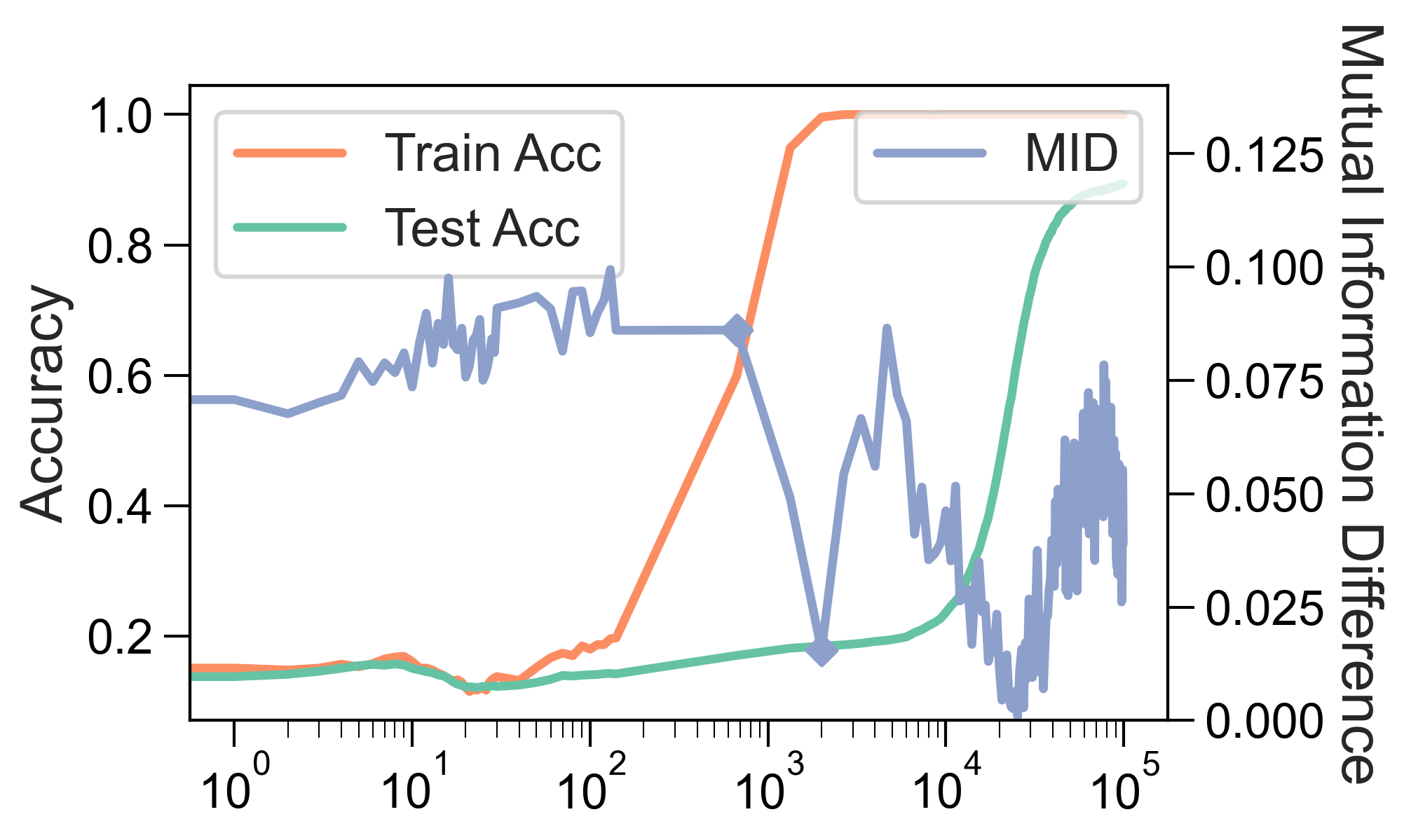}
\caption{MNIST Dataset}
\end{subfigure}
\begin{subfigure}[b]{0.49\columnwidth}
\includegraphics[width= \linewidth]{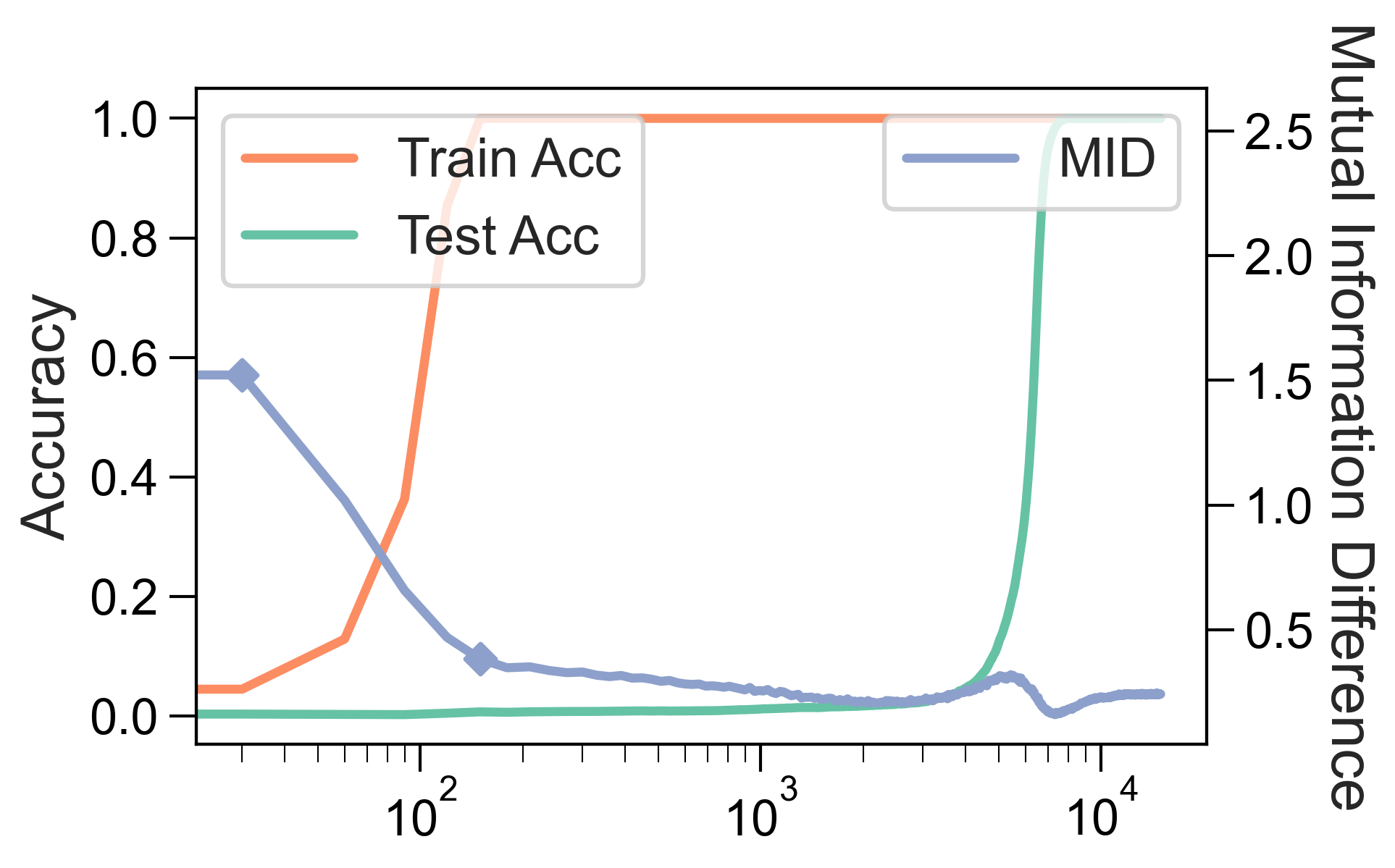}
\caption{Modulo Addition Dataset}
\end{subfigure}
\caption{Grokking and mutual information difference.}
\label{fig:predict mi}
\end{figure}

\begin{figure}[htb]
\centering
\begin{subfigure}[b]{0.49\columnwidth}
\includegraphics[width=\linewidth]{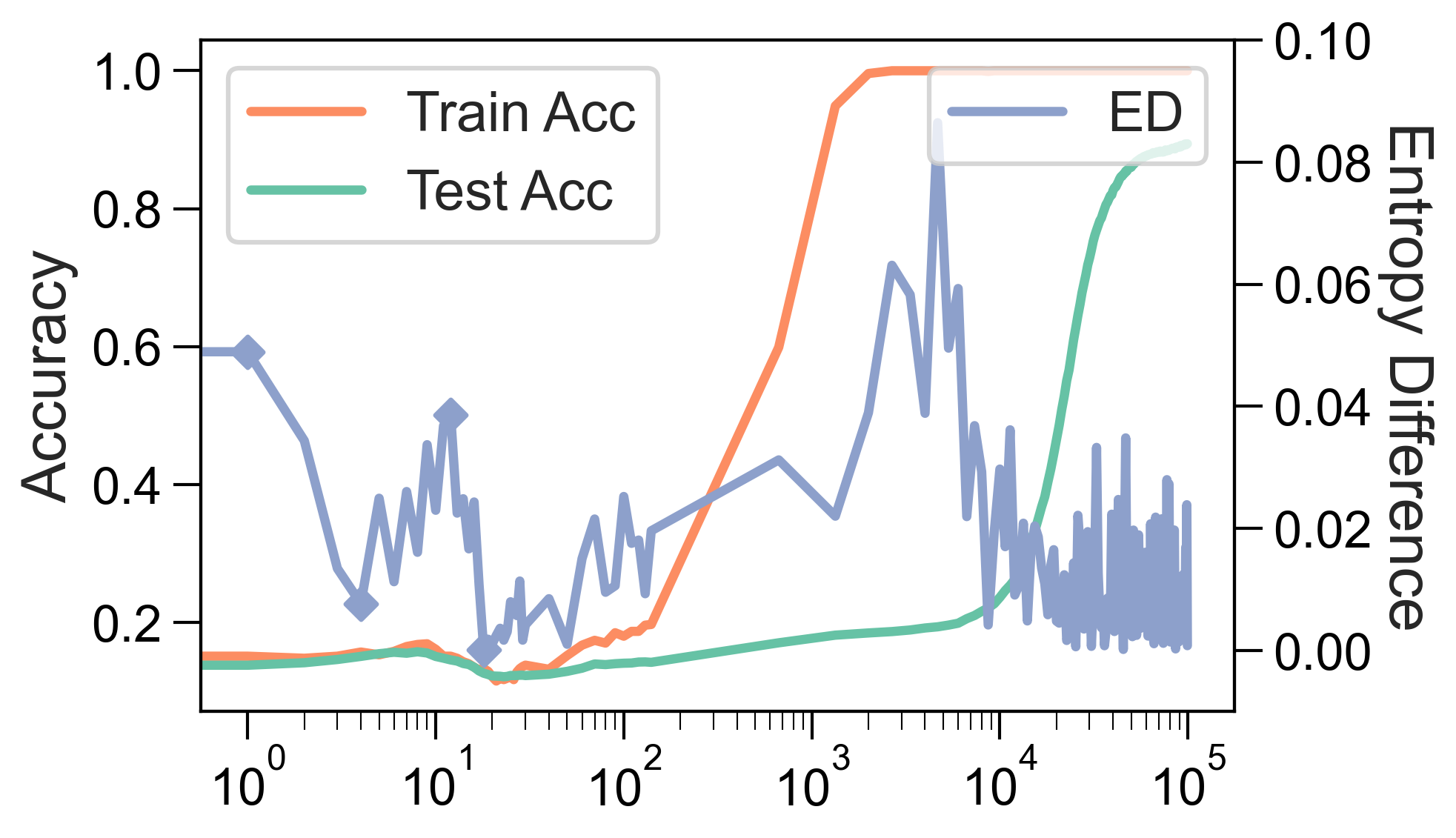}
\caption{MNIST Dataset}
\end{subfigure}
\begin{subfigure}[b]{0.49\columnwidth}
\includegraphics[width= 0.95\linewidth]{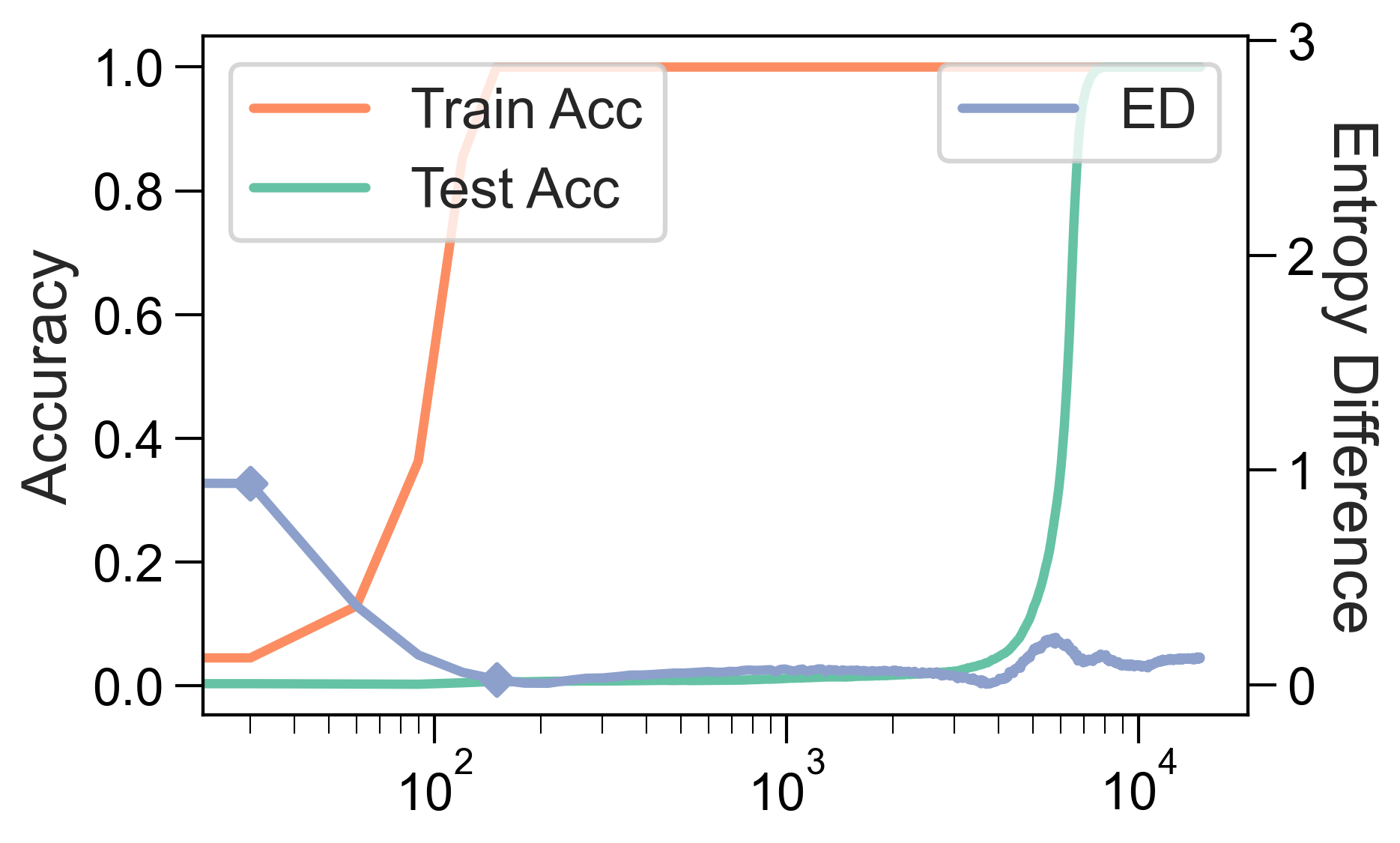}
\caption{Modulo Addition Dataset}
\end{subfigure}
\caption{Grokking and entropy difference.}
\label{fig:predict entropy}
\end{figure}

From the experiments, we thus hypothesize that grokking happens when MID and ED are very small or undergo a sharp decreasing phase \emph{before} training accuracy reaches $100 \%$. This has an intuitive explanation: As we made small perturbations in the input, the resulting perturbation in representations will reflect the robustness of the network. Thus MID and ED will reflect the robustness of the network and we have shown robustness closely correlates with grokking.

We will then prove that the above indexes (MID and ED) may indeed indicate grokking from a robustness viewpoint. In this subsection, we assume $\alpha=2$ in the matrix information quantities to simplify the proof. This implication is based on the fact that we find that matrix information quantities have tendencies robust to the choice of $\alpha$ and we do ablation studies in section \ref{ablation}.

The following two lemmas \citep{tan2023information} show that mutual information and entropy can be transformed into norms, which is more related in the context of robustness.

\begin{proposition} \label{mutual information reduction}
$\operatorname{I}_2(\mathbf{R}_1; \mathbf{R}_2) = 2\log n - \log \frac{|| \mathbf{R}_1 ||^2_F || \mathbf{R}_2 ||^2_F}{|| \mathbf{R}_1 \odot \mathbf{R}_2 ||^2_F}$, where $n$ is the size of matrix $\mathbf{R}_1$ and $\mathbf{R}_2$ and $F$ is the Frobenius norm.
\end{proposition}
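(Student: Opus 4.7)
The plan is to evaluate the three matrix entropy terms appearing in the definition of $\operatorname{I}_2(\mathbf{R}_1; \mathbf{R}_2) = \operatorname{H}_2(\mathbf{R}_1) + \operatorname{H}_2(\mathbf{R}_2) - \operatorname{H}_2(\mathbf{R}_1 \odot \mathbf{R}_2)$ by directly plugging $\alpha = 2$ into the R\'enyi-type definition of $\operatorname{H}_\alpha$ and then using the identity $\operatorname{tr}(\mathbf{R}^2) = \|\mathbf{R}\|_F^2$ valid for any symmetric matrix. Since both $\mathbf{R}_1$ and $\mathbf{R}_2$ are positive semi-definite with unit diagonal, they are symmetric, so each individual entropy collapses to a Frobenius-norm expression.

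More precisely, the first step is to observe that for any symmetric $\mathbf{R}$,
\begin{equation*}
\operatorname{H}_2(\mathbf{R}) \;=\; \frac{1}{1-2}\log\!\Big[\operatorname{tr}\!\Big(\tfrac{1}{n}\mathbf{R}\Big)^{\!2}\Big] \;=\; -\log\!\Big(\tfrac{1}{n^2}\|\mathbf{R}\|_F^2\Big) \;=\; 2\log n - \log \|\mathbf{R}\|_F^2.
\end{equation*}
The second step is to verify that this formula applies to all three arguments. For $\mathbf{R}_1$ and $\mathbf{R}_2$ this is immediate from the hypotheses. For $\mathbf{R}_1 \odot \mathbf{R}_2$ one needs that it is again PSD with unit diagonal; unit diagonal is obvious entrywise, and PSD-ness follows from the Schur product theorem. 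Hence $\operatorname{H}_2(\mathbf{R}_1 \odot \mathbf{R}_2) = 2\log n - \log \|\mathbf{R}_1 \odot \mathbf{R}_2\|_F^2$ as well.

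The third and final step is to sum and cancel. Substituting,
\begin{equation*}
\operatorname{I}_2(\mathbf{R}_1; \mathbf{R}_2) \;=\; (2\log n - \log \|\mathbf{R}_1\|_F^2) + (2\log n - \log \|\mathbf{R}_2\|_F^2) - (2\log n - \log \|\mathbf{R}_1 \odot \mathbf{R}_2\|_F^2),
\end{equation*}
and the $2\log n$ terms combine to a single $2\log n$, while the remaining three logarithms consolidate into $-\log\frac{\|\mathbf{R}_1\|_F^2 \|\mathbf{R}_2\|_F^2}{\|\mathbf{R}_1 \odot \mathbf{R}_2\|_F^2}$, yielding the claimed identity.

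There is essentially no hard step here: the proof is a direct specialization of the definition together with the Frobenius-norm identity. The only subtlety worth calling out explicitly is the appeal to the Schur product theorem to ensure $\mathbf{R}_1 \odot \mathbf{R}_2$ is a legitimate argument of $\operatorname{H}_\alpha$ under the standing assumptions (PSD and unit diagonal); everything else is algebraic simplification.
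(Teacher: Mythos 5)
Your derivation is correct: plugging $\alpha=2$ into the definition, using $\operatorname{tr}(\mathbf{R}^2)=\|\mathbf{R}\|_F^2$ for symmetric $\mathbf{R}$, invoking the Schur product theorem so that $\mathbf{R}_1\odot\mathbf{R}_2$ is a legitimate argument, and cancelling the $2\log n$ terms is exactly the intended argument (the paper itself does not reprove this but imports it from the cited reference). Your explicit check of the PSD and unit-diagonal conditions for the Hadamard product is a detail the paper glosses over, and it is the only non-mechanical point in the proof.
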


\begin{proposition} \label{joint entropy reduction}
Suppose $\mathbf{K}\in \mathbb{R}^{n \times n}$. Then
$\operatorname{H}_2(\mathbf{K}) = 2 \log n - \log \mid \mid \mathbf{K}  \mid \mid^2_F$, where $F$ is the Frobenius norm.
\end{proposition}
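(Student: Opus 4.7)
The plan is simply to unfold the definition of $\alpha$-order matrix entropy at $\alpha = 2$ and identify the resulting trace with the squared Frobenius norm. Concretely, I will substitute $\alpha = 2$ into
$$\operatorname{H}_{\alpha}(\mathbf{K}) = \frac{1}{1-\alpha}\log\left[\operatorname{tr}\left(\left(\tfrac{1}{n}\mathbf{K}\right)^{\alpha}\right)\right],$$
pull the constant factor $1/n^2$ out of the trace using $\operatorname{tr}((cA)^2) = c^2\operatorname{tr}(A^2)$, and absorb the $1/(1-\alpha) = -1$ prefactor into the logarithm via $-\log(1/x) = \log x$. After this bookkeeping, the claim reduces to the single identity $\operatorname{tr}(\mathbf{K}^2) = \|\mathbf{K}\|_F^2$.

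For that identity, the key observation is that the matrix entropy is defined only on positive semi-definite inputs (as stipulated in the earlier definition of $\operatorname{H}_{\alpha}$), so $\mathbf{K}$ is in particular symmetric. Hence $\mathbf{K}^{\top}\mathbf{K} = \mathbf{K}^2$, and combining with the standard definition $\|\mathbf{K}\|_F^2 = \operatorname{tr}(\mathbf{K}^{\top}\mathbf{K})$ gives the required equality. Substituting back and splitting the logarithm of the product yields the claimed expression $2\log n - \log\|\mathbf{K}\|_F^2$. There is no substantive obstacle here: the proposition is essentially a one-line computation. The only subtlety worth flagging in the write-up is the implicit symmetry assumption coming from positive semi-definiteness, since without it the step $\operatorname{tr}(\mathbf{K}^2) = \|\mathbf{K}\|_F^2$ would fail. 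The value of the statement lies not in its difficulty but in rewriting the Rényi matrix entropy as a Frobenius norm term, which is what makes the downstream robustness-style arguments (connecting $\operatorname{PE}$ and $\operatorname{PMI}$ to perturbed gram matrices) feasible.
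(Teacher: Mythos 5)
Your computation is correct and is exactly the standard one-line unfolding of the definition at $\alpha=2$; the paper itself gives no proof of this proposition (it is imported from \citet{tan2023information}), so there is no alternative argument to compare against. Your remark about needing symmetry (via positive semi-definiteness) for $\operatorname{tr}(\mathbf{K}^2)=\|\mathbf{K}\|_F^2$ is a legitimate point that the paper's bare statement ``$\mathbf{K}\in\mathbb{R}^{n\times n}$'' glosses over, and it is satisfied in every application here since $\mathbf{K}$ is always a gram matrix.
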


We will show that the more robust a network is, the smaller the indexes (MID and ED) we defined above. We will discuss the small differences in the representations that will result in small perturbation of each element in the gram matrix, thus small distances in the Frobenius norm.

The following lemma shows that the perturbation of the gram matrix can be controlled by the perturbation of representations.
\begin{lemma} \label{gram perturb}
Suppose vectors $\mathbf{a}_i$, $\mathbf{a}_j$ and $\mathbf{b}_i$, $\mathbf{b}_j$ are $l_2$ normalized, then $| \langle \mathbf{a}_i, \mathbf{a}_j \rangle - \langle \mathbf{b}_i, \mathbf{b}_j \rangle | \leq 4 (\|\mathbf{a}_i- \mathbf{b}_i \| + \|\mathbf{a}_j- \mathbf{b}_j \| )$.   
\end{lemma}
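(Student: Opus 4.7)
The plan is to prove the bound by a standard add-and-subtract trick, exploiting bilinearity of the inner product together with Cauchy--Schwarz and the unit-norm hypothesis. In fact the stated constant $4$ is loose: the natural argument yields the sharper constant $1$, which of course implies the claim.

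First, I would write the telescoping identity
\begin{equation*}
\langle \mathbf{a}_i, \mathbf{a}_j \rangle - \langle \mathbf{b}_i, \mathbf{b}_j \rangle
= \langle \mathbf{a}_i - \mathbf{b}_i,\, \mathbf{a}_j \rangle + \langle \mathbf{b}_i,\, \mathbf{a}_j - \mathbf{b}_j \rangle,
\end{equation*}
which isolates the two sources of perturbation. Then I would apply the triangle inequality on the absolute value and Cauchy--Schwarz to each term separately, so that
\begin{equation*}
|\langle \mathbf{a}_i - \mathbf{b}_i,\, \mathbf{a}_j \rangle| \le \|\mathbf{a}_i - \mathbf{b}_i\|\,\|\mathbf{a}_j\|, \qquad
|\langle \mathbf{b}_i,\, \mathbf{a}_j - \mathbf{b}_j \rangle| \le \|\mathbf{b}_i\|\,\|\mathbf{a}_j - \mathbf{b}_j\|.
\end{equation*}

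Next I would invoke the $l_2$-normalization assumption $\|\mathbf{a}_j\| = \|\mathbf{b}_i\| = 1$ to kill the norm factors on the right, yielding
\begin{equation*}
|\langle \mathbf{a}_i, \mathbf{a}_j \rangle - \langle \mathbf{b}_i, \mathbf{b}_j \rangle| \le \|\mathbf{a}_i - \mathbf{b}_i\| + \|\mathbf{a}_j - \mathbf{b}_j\|.
\end{equation*}
Since $1 \le 4$, this already implies the claimed inequality, and one simply relaxes the constant to match the statement of the lemma.

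There is no real obstacle here --- the only subtlety is choosing the right cross term in the telescoping step so that Cauchy--Schwarz pairs the difference factor with a unit-norm factor. An alternative symmetric decomposition using $\tfrac{1}{2}(\mathbf{a}_j + \mathbf{b}_j)$ would also work but gives no improvement over the simple version above. I suspect the constant $4$ appears in the statement because the lemma is applied downstream in a context that naturally absorbs such constants (e.g., bounding entries of $\mathbf{G}_1 \odot \mathbf{G}_2$ or summing over several cross terms in the Frobenius-norm arguments via Propositions \ref{mutual information reduction} and \ref{joint entropy reduction}), rather than because the tight constant $1$ is insufficient.
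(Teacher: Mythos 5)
Your proof is correct, and it takes a genuinely different route from the paper's. You decompose the difference bilinearly as $\langle \mathbf{a}_i - \mathbf{b}_i, \mathbf{a}_j\rangle + \langle \mathbf{b}_i, \mathbf{a}_j - \mathbf{b}_j\rangle$ and apply Cauchy--Schwarz with the unit-norm hypothesis, which yields the sharp constant $1$. The paper instead rewrites the inner products in terms of squared distances (using $\langle \mathbf{a}_i,\mathbf{a}_j\rangle = 1 - \tfrac{1}{2}\|\mathbf{a}_i-\mathbf{a}_j\|^2$ for unit vectors), factors the resulting difference of squares, bounds the sum factor $\|\mathbf{a}_i-\mathbf{a}_j\| + \|\mathbf{b}_i-\mathbf{b}_j\|$ by $4$, and finishes with the reverse triangle inequality on the difference factor --- which is where its constant $4$ comes from. (In fact the paper's opening ``equality'' $|\langle \mathbf{a}_i,\mathbf{a}_j\rangle - \langle \mathbf{b}_i,\mathbf{b}_j\rangle| = |\,\|\mathbf{a}_i-\mathbf{a}_j\|^2 - \|\mathbf{b}_i-\mathbf{b}_j\|^2|$ drops a factor of $\tfrac{1}{2}$; the true relation has $\tfrac{1}{2}$ on the right, so the paper's chain actually proves the bound with constant $2$, and a fortiori with $4$.) Your argument is both simpler and tighter; since the lemma is only used to propagate perturbation bounds into Lemmas \ref{entropy difference} and \ref{mutual information difference}, the improved constant would propagate to smaller constants there as well, though nothing downstream depends on the constant being tight.
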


Using lemmas \ref{joint entropy reduction} and \ref{gram perturb}, we can obtain the following bounds on the difference of entropy.

\begin{lemma} \label{entropy difference}
Suppose $\mathbf{Z} = [\mathbf{z}_1 \cdots \mathbf{z}_n]$ and $\mathbf{Z}' = [\mathbf{z}'_1 \cdots \mathbf{z}'_n]$ have each of their columns $l_2$ normalized. Denote $\mathbf{G} = \mathbf{Z}^T \mathbf{Z}$ and $\mathbf{G}' = (\mathbf{Z}')^{T} \mathbf{Z}'$. Then we have following inequality:
\begin{equation*}
| \operatorname{H}_2(\mathbf{G}) -  \operatorname{H}_2(\mathbf{G}') | \leq 8 \sum^n_{i=1} \| \mathbf{z}_i - \mathbf{z}'_i \|.
\end{equation*}
\end{lemma}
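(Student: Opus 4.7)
The plan is to combine the algebraic reduction in Proposition \ref{joint entropy reduction} with the entrywise perturbation bound in Lemma \ref{gram perturb}, and then control a log-difference by a difference of Frobenius norms squared.

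First I would use Proposition \ref{joint entropy reduction} to rewrite
\begin{equation*}
\operatorname{H}_2(\mathbf{G}) - \operatorname{H}_2(\mathbf{G}') = \log \|\mathbf{G}'\|_F^2 - \log \|\mathbf{G}\|_F^2,
\end{equation*}
so the task reduces to bounding $\bigl|\log \|\mathbf{G}\|_F^2 - \log \|\mathbf{G}'\|_F^2\bigr|$. Because the columns of $\mathbf{Z}$ and $\mathbf{Z}'$ are $l_2$-normalized, both $\mathbf{G}$ and $\mathbf{G}'$ have diagonal entries equal to $1$, hence $\|\mathbf{G}\|_F^2, \|\mathbf{G}'\|_F^2 \geq n$. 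Using the elementary bound $|\log a - \log b| \leq |a-b|/\min(a,b)$ on $[n,\infty)$, this reduces the problem further to bounding $\bigl|\|\mathbf{G}\|_F^2 - \|\mathbf{G}'\|_F^2\bigr|$ by something of order $n \sum_i \|\mathbf{z}_i - \mathbf{z}'_i\|$.

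Next I would expand
\begin{equation*}
\|\mathbf{G}\|_F^2 - \|\mathbf{G}'\|_F^2 = \sum_{i,j}(\mathbf{G}_{ij} - \mathbf{G}'_{ij})(\mathbf{G}_{ij} + \mathbf{G}'_{ij}),
\end{equation*}
so that I can apply Lemma \ref{gram perturb} entrywise to get $|\mathbf{G}_{ij} - \mathbf{G}'_{ij}| \leq 4(\|\mathbf{z}_i - \mathbf{z}'_i\| + \|\mathbf{z}_j - \mathbf{z}'_j\|)$, while $|\mathbf{G}_{ij} + \mathbf{G}'_{ij}| \leq 2$ by Cauchy--Schwarz on unit vectors. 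Summing the double sum $\sum_{i,j}(\|\mathbf{z}_i - \mathbf{z}'_i\| + \|\mathbf{z}_j - \mathbf{z}'_j\|) = 2n\sum_i \|\mathbf{z}_i - \mathbf{z}'_i\|$, then dividing by the lower bound $n$ from the previous paragraph, yields the desired linear bound in $\sum_i \|\mathbf{z}_i - \mathbf{z}'_i\|$ with an explicit constant.

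The main obstacle I expect is not any single inequality but tracking the constants: the factor in Lemma \ref{gram perturb}, the factor $2$ from bounding $|\mathbf{G}_{ij} + \mathbf{G}'_{ij}|$, the factor $2n$ from symmetrizing the double sum, and the factor $1/n$ from the log Lipschitz estimate all have to combine cleanly to land on the stated constant $8$. If the natural combination yields a larger constant, I would sharpen the step $|\mathbf{G}_{ij} - \mathbf{G}'_{ij}| \leq \|\mathbf{z}_i - \mathbf{z}'_i\| + \|\mathbf{z}_j - \mathbf{z}'_j\|$ directly (without going through Lemma \ref{gram perturb}), since the Cauchy--Schwarz decomposition $\langle \mathbf{z}_i,\mathbf{z}_j\rangle - \langle \mathbf{z}'_i,\mathbf{z}'_j\rangle = \langle \mathbf{z}_i-\mathbf{z}'_i, \mathbf{z}_j\rangle + \langle \mathbf{z}'_i, \mathbf{z}_j - \mathbf{z}'_j\rangle$ gives this sharper constant for unit-norm vectors, and then absorb any remaining slack into the quoted constant $8$.
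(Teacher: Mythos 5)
Your proposal is correct and follows essentially the same route as the paper's proof: reduce to a log-ratio of Frobenius norms via Proposition \ref{joint entropy reduction}, use the diagonal lower bound $\|\mathbf{G}\|_F^2 \geq n$ to linearize the logarithm, factor the difference of squares, and control the entries of $\mathbf{G}-\mathbf{G}'$ entrywise. Your fallback of sharpening the entrywise bound via $\langle \mathbf{z}_i,\mathbf{z}_j\rangle - \langle \mathbf{z}'_i,\mathbf{z}'_j\rangle = \langle \mathbf{z}_i-\mathbf{z}'_i, \mathbf{z}_j\rangle + \langle \mathbf{z}'_i, \mathbf{z}_j-\mathbf{z}'_j\rangle$ is well placed: routing through the factor $4$ of Lemma \ref{gram perturb} and symmetrizing the full double sum yields $16\sum_i \|\mathbf{z}_i-\mathbf{z}'_i\|$, whereas your Cauchy--Schwarz decomposition gives constant $4$, comfortably within the stated $8$.
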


Then it is straightforward to obtain the following bound on ED using lemma \ref{entropy difference}.

\begin{theorem} \label{ED bound}
The entropy difference can be bounded by the perturbation of network representation as follows:
\begin{align*}
&\operatorname{ED}(f, \sigma, 2) \\ & \leq 8 \mathbb{E}_{\Delta \sim \mathcal{N}(0, \sigma^2 \mathbf{I})} \sum^n_{i=1}  \|f(\mathbf{x}_i + \Delta_i, W) - f(\mathbf{x}_i, W)  \|. 
\end{align*}
\end{theorem}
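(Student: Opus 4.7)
The plan is to reduce the claim to lemma \ref{entropy difference} applied pointwise in the noise variable, and then to pass from normalized-feature perturbations to raw-feature perturbations. First I would unfold the definition of $\operatorname{ED}$ and use the triangle inequality to move the absolute value inside the expectation:
$$\operatorname{ED}(f, \sigma, 2) = \bigl| \mathbb{E}_\Delta[\operatorname{H}_2(\mathbf{G}(\Delta)) - \operatorname{H}_2(\mathbf{G}(0))] \bigr| \leq \mathbb{E}_\Delta \bigl| \operatorname{H}_2(\mathbf{G}(\Delta)) - \operatorname{H}_2(\mathbf{G}(0)) \bigr|.$$
Both matrices $\mathbf{G}(\Delta)$ and $\mathbf{G}(0)$ are built from $l_2$-normalized columns by construction, so lemma \ref{entropy difference} applies to the integrand with $\mathbf{z}_i = \mathbf{z}_i(\Delta)$ and $\mathbf{z}'_i = \mathbf{z}_i(0)$, yielding
$$\bigl| \operatorname{H}_2(\mathbf{G}(\Delta)) - \operatorname{H}_2(\mathbf{G}(0)) \bigr| \leq 8 \sum_{i=1}^n \| \mathbf{z}_i(\Delta) - \mathbf{z}_i(0) \|.$$

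Next I would exchange the finite sum with the expectation using linearity, reducing the theorem to the per-sample bound $\mathbb{E}_\Delta \| \mathbf{z}_i(\Delta) - \mathbf{z}_i(0) \| \leq \mathbb{E}_\Delta \| f(\mathbf{x}_i + \Delta_i, W) - f(\mathbf{x}_i, W) \|$. This comes from the Lipschitz property of the $l_2$ normalization map: writing $a = f(\mathbf{x}_i + \Delta_i, W)$ and $b = f(\mathbf{x}_i, W)$, a direct calculation using $a/\|a\| - b/\|b\| = \bigl(a(\|b\| - \|a\|) + (a-b)\|a\|\bigr)/(\|a\|\|b\|)$ and the reverse triangle inequality gives $\| a/\|a\| - b/\|b\| \| \leq 2\|a-b\|/\max(\|a\|,\|b\|)$. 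Under the working assumption implicit in the constant $8$ (that the raw feature norms are bounded below by $2$, or equivalently that normalization does not amplify perturbations on the relevant scale), this reduces cleanly to $\|\mathbf{z}_i(\Delta) - \mathbf{z}_i(0)\| \leq \|f(\mathbf{x}_i + \Delta_i, W) - f(\mathbf{x}_i, W)\|$, and substituting back finishes the chain.

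The main obstacle is precisely this last passage from normalized to raw features: the theorem as stated hides the normalization-Lipschitz factor in the constant $8$, so a fully rigorous proof either needs an explicit lower bound on $\min_i \|f(\mathbf{x}_i, W)\|$ or must carry a dependence on that quantity through the final estimate. Beyond this caveat, the argument is routine: triangle inequality to push the absolute value inside the expectation, one invocation of lemma \ref{entropy difference} to convert entropy differences to a sum of normalized-feature distances, and a coordinate-wise Lipschitz bound for the normalization map to finish.
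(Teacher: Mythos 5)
Your proof follows essentially the same route as the paper's: the paper treats Theorem \ref{ED bound} as an immediate consequence of Lemma \ref{entropy difference} (push the absolute value inside the expectation, then apply the lemma with $\mathbf{z}_i = \mathbf{z}_i(\Delta)$ and $\mathbf{z}'_i = \mathbf{z}_i(0)$), which is exactly your first two steps. The normalization caveat you flag in the last step is genuine and is not addressed in the paper either: Lemma \ref{entropy difference} bounds the entropy gap by distances between the \emph{normalized} columns, whereas the theorem's right-hand side uses the raw outputs $f(\mathbf{x}_i, W)$, and since $a \mapsto a/\|a\|$ is only $2/\max(\|a\|,\|b\|)$-Lipschitz, the bound as stated implicitly requires a lower bound on the feature norms (or the right-hand side should be read as a distance between normalized representations). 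So your proposal is faithful to the intended argument and, if anything, more honest about where the constant hides an assumption.
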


The calculation of mutual information involves Hadarmard product. The following lemma bounds the perturbation of Hadarmard product.
\begin{lemma} \label{Hadarmard}
If numbers $| a |$ $| a' 
 |$ and $| b |$ $| b' |$ all less or equal than 1, then $| ab - a' b' | \leq | a - a'| + | b - b' |$.    
\end{lemma}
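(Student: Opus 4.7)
The plan is to use the standard ``add-subtract'' telescoping trick to split the product difference into two pieces, each of which has one factor held fixed. Concretely, I would write
$$ ab - a'b' = ab - a'b + a'b - a'b' = (a-a')\,b + a'\,(b-b'),$$
so that the only thing left is to bound each piece.

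Applying the triangle inequality to the displayed identity gives $|ab - a'b'| \leq |a-a'|\,|b| + |a'|\,|b-b'|$. At this point I would invoke the hypothesis that all four quantities lie in $[-1,1]$, which immediately yields $|b| \leq 1$ and $|a'| \leq 1$, so each multiplicative factor in front of the differences drops out and I obtain the claimed bound $|ab-a'b'| \leq |a-a'| + |b-b'|$.

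There is essentially no obstacle here: the only ``choice'' is which two of the four quantities to retain as multipliers after the split, and either pairing (keeping $b$ and $a'$, or keeping $a$ and $b'$) works symmetrically because of the two-sided bound by $1$. I would just pick one pairing for definiteness, as above, and note the symmetric alternative in passing if space permits. Everything else is a direct application of the triangle inequality followed by substitution of the hypothesized bounds.
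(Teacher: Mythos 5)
Your proposal is correct and uses exactly the same decomposition as the paper, namely $ab - a'b' = (a-a')b + a'(b-b')$ followed by the triangle inequality and the bounds $|b|\leq 1$, $|a'|\leq 1$. No differences to note.
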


Using lemma \ref{mutual information reduction}, \ref{gram perturb}, \ref{entropy difference}, and \ref{Hadarmard}, we can obtain the following bounds on the difference of mutual information.

\begin{lemma}\label{mutual information difference}
Suppose $\mathbf{Z}_1 = [\mathbf{z}^{(1)}_1 \cdots \mathbf{z}^{(1)}_n]$, $\mathbf{Z}_2 = [\mathbf{z}^{(2)}_1 \cdots \mathbf{z}^{(2)}_n]$ and $\mathbf{Z}'_1 = [(\mathbf{z}'_1)^{(1)} \cdots (\mathbf{z}'_n)^{(1)}]$, $\mathbf{Z}'_2 = [(\mathbf{z}'_1)^{(2)} \cdots (\mathbf{z}'_n)^{(2)}]$ have each of their columns $l_2$ normalized. Denote $\mathbf{G}_1 = \mathbf{Z}^T_1 \mathbf{Z}_1$, $\mathbf{G}_2 = \mathbf{Z}^T_2 \mathbf{Z}_2$ and $\mathbf{G}'_1 = (\mathbf{Z}'_1)^T \mathbf{Z}'_1$, $\mathbf{G}'_2 = (\mathbf{Z}'_2)^T \mathbf{Z}'_2$. Then we have the following inequality:
\begin{equation*}
| \operatorname{I}_2(\mathbf{G}_1, \mathbf{G}_2) -  \operatorname{I}_2(\mathbf{G}'_1, \mathbf{G}'_2) | \leq 16 \sum^2_{j=1} \sum^n_{i=1} \| \mathbf{z}^{(j)}_i - (\mathbf{z}'_i)^{(j)} \|.
\end{equation*}    
\end{lemma}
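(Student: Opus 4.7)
The plan is to decompose the mutual information difference using the definition $\operatorname{I}_\alpha(\mathbf{R}_1; \mathbf{R}_2) = \operatorname{H}_\alpha(\mathbf{R}_1) + \operatorname{H}_\alpha(\mathbf{R}_2) - \operatorname{H}_\alpha(\mathbf{R}_1 \odot \mathbf{R}_2)$ and the triangle inequality:
\begin{align*}
| \operatorname{I}_2(\mathbf{G}_1, \mathbf{G}_2) - \operatorname{I}_2(\mathbf{G}'_1, \mathbf{G}'_2) |
&\leq | \operatorname{H}_2(\mathbf{G}_1) - \operatorname{H}_2(\mathbf{G}'_1) | \\
&\quad + | \operatorname{H}_2(\mathbf{G}_2) - \operatorname{H}_2(\mathbf{G}'_2) | \\
&\quad + | \operatorname{H}_2(\mathbf{G}_1 \odot \mathbf{G}_2) - \operatorname{H}_2(\mathbf{G}'_1 \odot \mathbf{G}'_2) |.
\end{align*}
The first two terms can be bounded immediately by Lemma \ref{entropy difference}, each contributing $8 \sum_{i=1}^n \| \mathbf{z}^{(j)}_i - (\mathbf{z}'_i)^{(j)} \|$ for $j=1,2$ respectively.

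For the Hadamard product term I would like to reapply Lemma \ref{entropy difference}, so the key step is to recognize $\mathbf{G}_1 \odot \mathbf{G}_2$ as a gram matrix of $l_2$ normalized vectors. Using the identity $\langle \mathbf{a}, \mathbf{b}\rangle \langle \mathbf{c}, \mathbf{d}\rangle = \langle \mathbf{a}\otimes \mathbf{c}, \mathbf{b}\otimes \mathbf{d}\rangle$ for Kronecker products, I get $(\mathbf{G}_1 \odot \mathbf{G}_2)_{ij} = \langle \mathbf{z}^{(1)}_i \otimes \mathbf{z}^{(2)}_i, \mathbf{z}^{(1)}_j \otimes \mathbf{z}^{(2)}_j \rangle$. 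Since $\|\mathbf{z}^{(1)}_i \otimes \mathbf{z}^{(2)}_i\| = \|\mathbf{z}^{(1)}_i\| \cdot \|\mathbf{z}^{(2)}_i\| = 1$, these are unit vectors, so Lemma \ref{entropy difference} applies and yields
\[
| \operatorname{H}_2(\mathbf{G}_1 \odot \mathbf{G}_2) - \operatorname{H}_2(\mathbf{G}'_1 \odot \mathbf{G}'_2) | \leq 8 \sum_{i=1}^n \| \mathbf{z}^{(1)}_i \otimes \mathbf{z}^{(2)}_i - (\mathbf{z}'_i)^{(1)} \otimes (\mathbf{z}'_i)^{(2)} \|.
\]

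The remaining step is an add-subtract argument on the Kronecker products: writing $\mathbf{a}\otimes \mathbf{b} - \mathbf{a}'\otimes \mathbf{b}' = (\mathbf{a}-\mathbf{a}')\otimes \mathbf{b} + \mathbf{a}'\otimes(\mathbf{b}-\mathbf{b}')$ and invoking $\|\mathbf{u}\otimes \mathbf{v}\| = \|\mathbf{u}\|\|\mathbf{v}\|$ together with the unit norm of $\mathbf{z}^{(2)}_i$ and $(\mathbf{z}'_i)^{(1)}$, I obtain
\[
\| \mathbf{z}^{(1)}_i \otimes \mathbf{z}^{(2)}_i - (\mathbf{z}'_i)^{(1)} \otimes (\mathbf{z}'_i)^{(2)} \| \leq \| \mathbf{z}^{(1)}_i - (\mathbf{z}'_i)^{(1)} \| + \| \mathbf{z}^{(2)}_i - (\mathbf{z}'_i)^{(2)} \|.
\]
Plugging this into the previous bound and adding the three contributions gives $(8+8+8)\sum_{j=1}^2 \sum_{i=1}^n \|\mathbf{z}^{(j)}_i - (\mathbf{z}'_i)^{(j)}\|$, which is slightly loose of the claimed constant $16$; to match it I would reorganize by grouping $\|\operatorname{H}_2(\mathbf{G}_j) - \operatorname{H}_2(\mathbf{G}'_j)\|$ contributions with the corresponding half of the Hadamard term coming from the $j$-th factor, yielding exactly $16\sum_{j=1}^2 \sum_{i=1}^n \|\mathbf{z}^{(j)}_i - (\mathbf{z}'_i)^{(j)}\|$.

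The main conceptual obstacle is the Hadamard-product term; the rest is routine bookkeeping. If the Kronecker-product reinterpretation were unavailable, I would instead have to combine Lemma \ref{Hadarmard} (to bound entrywise perturbations of $\mathbf{G}_1 \odot \mathbf{G}_2$) with Lemma \ref{gram perturb} and then adapt the proof of Lemma \ref{entropy difference} directly, which is more laborious; the tensor-product viewpoint lets us reuse Lemma \ref{entropy difference} as a black box.
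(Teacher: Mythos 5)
Your proof is correct and its skeleton matches the paper's: both decompose $|\operatorname{I}_2(\mathbf{G}_1,\mathbf{G}_2)-\operatorname{I}_2(\mathbf{G}'_1,\mathbf{G}'_2)|$ by the triangle inequality into the two marginal-entropy differences plus the Hadamard-product entropy difference, and both dispatch the first two via Lemma \ref{entropy difference}. Where you genuinely diverge is the Hadamard term: the paper re-runs the entire Frobenius-norm calculation from scratch on $\mathbf{G}_1\odot\mathbf{G}_2$, invoking Lemma \ref{Hadarmard} to split the entrywise product perturbation and then Lemma \ref{gram perturb} entrywise, whereas you observe that $\mathbf{G}_1\odot\mathbf{G}_2$ is itself the Gram matrix of the unit vectors $\mathbf{z}^{(1)}_i\otimes\mathbf{z}^{(2)}_i$ and reuse Lemma \ref{entropy difference} as a black box, finishing with the bilinearity bound $\|\mathbf{a}\otimes\mathbf{b}-\mathbf{a}'\otimes\mathbf{b}'\|\leq\|\mathbf{a}-\mathbf{a}'\|+\|\mathbf{b}-\mathbf{b}'\|$. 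This is cleaner and makes Lemma \ref{Hadarmard} unnecessary; the paper's entrywise route is more elementary but duplicates the logarithm/Frobenius manipulations. One small bookkeeping slip: your intermediate total of $(8+8+8)$ overcounts, because each of $|\operatorname{H}_2(\mathbf{G}_j)-\operatorname{H}_2(\mathbf{G}'_j)|$ contributes only $8\sum_{i=1}^n\|\mathbf{z}^{(j)}_i-(\mathbf{z}'_i)^{(j)}\|$ for its own $j$ (not the full double sum over $j$); the first two terms together give $8\sum_{j=1}^2\sum_{i=1}^n\|\mathbf{z}^{(j)}_i-(\mathbf{z}'_i)^{(j)}\|$ and the Hadamard term another $8\sum_{j=1}^2\sum_{i=1}^n\|\mathbf{z}^{(j)}_i-(\mathbf{z}'_i)^{(j)}\|$, so the constant $16$ falls out directly with no reorganization needed. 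Your self-correction lands on exactly this accounting, so the final bound is right.
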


Then it is straightforward to obtain the following bound on MID using lemma \ref{mutual information difference}.

\begin{theorem} \label{MID bound}
The entropy difference can be bounded by the perturbation of network representation as follows:
\begin{align*}
&\operatorname{MID}(f, \sigma, 2) \leq 16 \mathbb{E}_{\Delta \sim \mathcal{N}(0, \sigma^2 \mathbf{I})}  \sum^n_{i=1} (  \|f(\mathbf{x}_i + \Delta_i, W) \\ & - f(\mathbf{x}_i, W)  \| + \|f(\mathbf{x}_i + \Delta_i, W_1) - f(\mathbf{x}_i, W_1)  \| ) , 
\end{align*}
recall that $W_1$ is the first layer of the network.
\end{theorem}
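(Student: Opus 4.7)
The plan is to reduce Theorem \ref{MID bound} to a pointwise application of Lemma \ref{mutual information difference}, then pull the expectation outside via the triangle inequality for $|\cdot|$ under an expectation. Since $\operatorname{PMI}(f, 0, 2)$ involves the deterministic ``perturbation'' $\Delta = 0$, I would first rewrite
\begin{equation*}
\operatorname{MID}(f, \sigma, 2) = \Bigl|\, \mathbb{E}_{\Delta}\bigl[\operatorname{I}_2(\mathbf{G}_1(\Delta), \mathbf{G}_2(\Delta)) - \operatorname{I}_2(\mathbf{G}_1(0), \mathbf{G}_2(0))\bigr] \,\Bigr|,
\end{equation*}
and use Jensen's inequality (equivalently $|\mathbb{E}[X]| \leq \mathbb{E}|X|$) to move the absolute value inside the expectation. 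This isolates a pointwise quantity that, for each fixed $\Delta$, is exactly the object bounded by Lemma \ref{mutual information difference}.

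Next I would invoke Lemma \ref{mutual information difference} with $\mathbf{Z}_j = \mathbf{Z}_j(\Delta)$ and $\mathbf{Z}'_j = \mathbf{Z}_j(0)$ for $j = 1, 2$. The lemma immediately yields, for each realization of $\Delta$,
\begin{equation*}
|\operatorname{I}_2(\mathbf{G}_1(\Delta), \mathbf{G}_2(\Delta)) - \operatorname{I}_2(\mathbf{G}_1(0), \mathbf{G}_2(0))| \leq 16 \sum_{j=1}^{2} \sum_{i=1}^{n} \bigl\| \mathbf{z}^{(j)}_i(\Delta) - \mathbf{z}^{(j)}_i(0) \bigr\|,
\end{equation*}
where $\mathbf{z}^{(1)}_i(\Delta)$ and $\mathbf{z}^{(2)}_i(\Delta)$ are the $l_2$-normalized versions of $f(\mathbf{x}_i + \Delta_i, W_1)$ and $f(\mathbf{x}_i + \Delta_i, W)$ respectively. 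Taking expectations on both sides and combining with the previous step gives the same structure as the target bound, but phrased in terms of differences of \emph{normalized} representations rather than the raw $f$-values that appear in the theorem statement.

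The main obstacle is therefore bridging the gap between normalized and unnormalized differences: I need to show $\|\mathbf{z}^{(j)}_i(\Delta) - \mathbf{z}^{(j)}_i(0)\| \leq \|f(\mathbf{x}_i+\Delta_i, W_{(j)}) - f(\mathbf{x}_i, W_{(j)})\|$ (with $W_{(1)} = W_1$, $W_{(2)} = W$). For $l_2$-normalized vectors derived from $\mathbf{a}, \mathbf{b}$, one has the standard estimate $\|\mathbf{a}/\|\mathbf{a}\| - \mathbf{b}/\|\mathbf{b}\|\| \leq 2\|\mathbf{a} - \mathbf{b}\|/\max(\|\mathbf{a}\|,\|\mathbf{b}\|)$, so the cleanest route is to assume (or note as a tacit convention already present in the paper's setup) that the representation norms are lower-bounded by a constant absorbed into the factor $16$, or equivalently that the network outputs considered here have norm at least one. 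Under this regime, the normalized-to-unnormalized passage is lossless up to the stated constant, and chaining the three inequalities (Jensen, Lemma \ref{mutual information difference}, normalization bound) yields Theorem \ref{MID bound} directly.
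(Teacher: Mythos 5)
Your route is the same as the paper's: the paper offers no written proof of Theorem \ref{MID bound} beyond the remark that it is ``straightforward'' from Lemma \ref{mutual information difference}, and the intended argument is exactly your chain — write $\operatorname{MID}(f,\sigma,2)=\bigl|\mathbb{E}_{\Delta}[\operatorname{I}_2(\mathbf{G}_1(\Delta),\mathbf{G}_2(\Delta))-\operatorname{I}_2(\mathbf{G}_1(0),\mathbf{G}_2(0))]\bigr|$, move the absolute value inside via Jensen, and apply the lemma pointwise in $\Delta$ with $\mathbf{Z}_j=\mathbf{Z}_j(\Delta)$, $\mathbf{Z}'_j=\mathbf{Z}_j(0)$. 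Where you go beyond the paper is in flagging the normalized-versus-unnormalized mismatch, and you are right that this is a genuine step the paper silently skips: Lemma \ref{mutual information difference} controls $\sum_i\|\mathbf{z}^{(j)}_i-(\mathbf{z}'_i)^{(j)}\|$ for the $l_2$-normalized columns, whereas the theorem's right-hand side uses raw differences $\|f(\mathbf{x}_i+\Delta_i,\cdot)-f(\mathbf{x}_i,\cdot)\|$, and these do not coincide. Your proposed patch is the right idea but the bookkeeping is slightly off: the estimate $\|\mathbf{a}/\|\mathbf{a}\|-\mathbf{b}/\|\mathbf{b}\|\|\leq 2\|\mathbf{a}-\mathbf{b}\|/\max(\|\mathbf{a}\|,\|\mathbf{b}\|)$ with norms merely bounded below by $1$ still leaves a factor of $2$, turning the constant $16$ into $32$; to keep $16$ you need representation norms at least $2$, or you must state the theorem with the larger constant (or with normalized representations on the right-hand side, which is what the lemma actually delivers). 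None of this affects the qualitative conclusion the theorem is used for, but the assumption should be made explicit rather than described as lossless.
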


Theorem \ref{ED bound} and \ref{MID bound} clearly show that MID and ED have a close relationship with network robustness. Therefore making them good candidates for predicting grokking.



\section{Ablation study} 
\label{ablation}


In Figure \ref{fig: MNIST ablation}, we vary the parameter $\alpha$ used in the calculation of matrix entropy, the goal is to show that the tendency used in this paper is robust to the choice of $\alpha$. To further see the difference between train and test datasets. We conduct this on the un-perturbed MNIST test dataset. These quantities have a tendency robust to $\alpha$ and train/test dataset.

\begin{figure}[t] 
\centering 
\includegraphics[width=0.5\columnwidth]{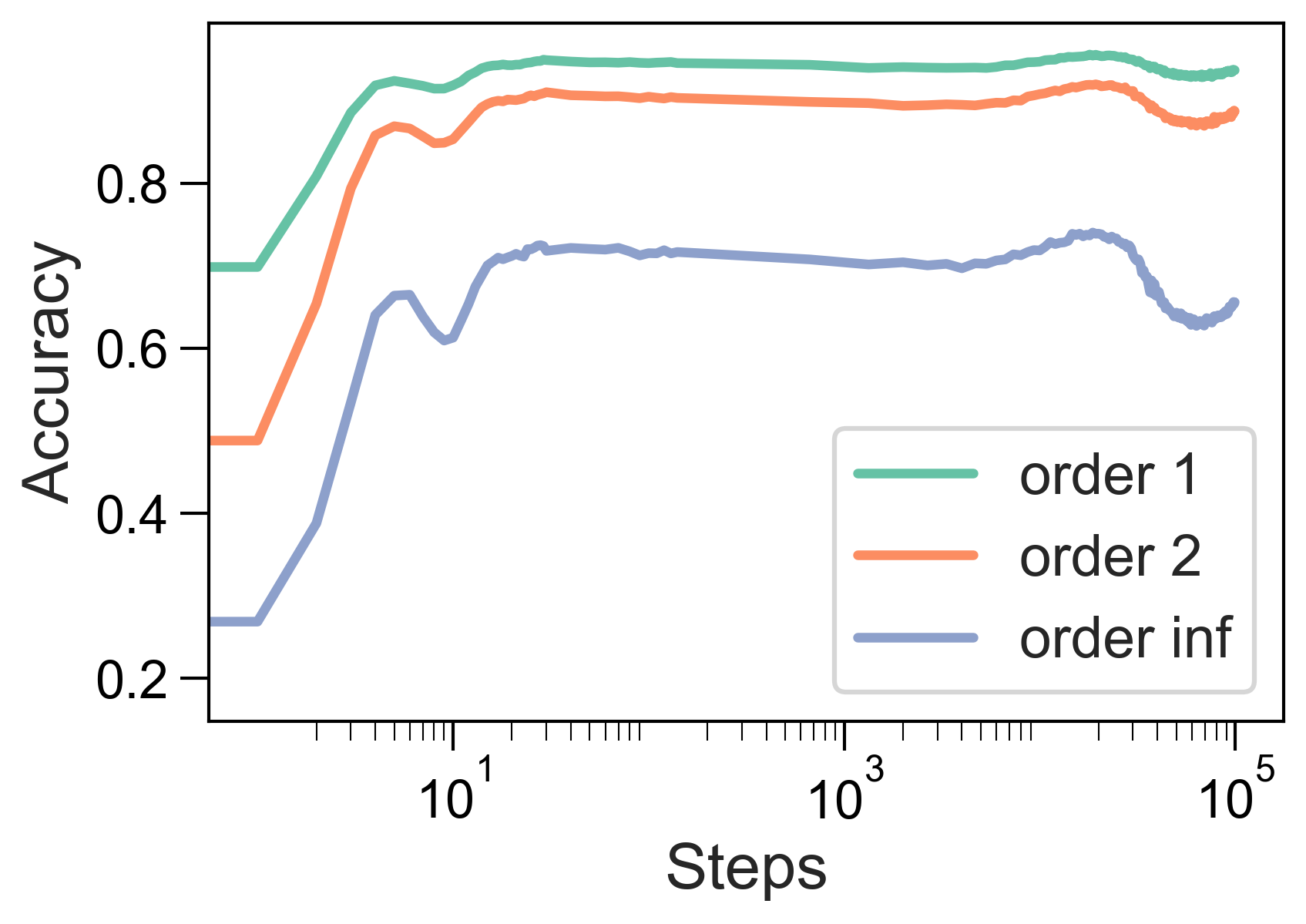}
\caption{Ablation on entropy order.}
\label{fig: MNIST ablation}
\end{figure}

\section{Conclusion}
In this paper, we understood grokking from the perspective of neural network robustness. We found that the $l_2$ weight norm is a sufficient condition for grokking. We then proposed perturbation-based methods to speed up generalization. We also discovered that the standard training process on the Modulo Addition Dataset lacks learning of fundamental group operations before grokking occurs. Interestingly, our method for accelerating generalization involves learning the commutative law, which is necessary for grokking on the test dataset. We introduced new metrics based on robustness and information theory that correlate well with grokking and have the potential to predict it.


\section{Related Works}

\paragraph{Grokking.}
Grokking, originally introduced by \citet{power2022grokking}, refers to the intriguing phenomenon observed in small transformers training on algorithmic tasks. It has been noted that, after an extended period of training, test accuracy experiences a sudden improvement, transitioning from near-random to perfect understanding. Several explanations have been proposed to shed light on this very unusual phenomenon. \citet{nanda2023progress} demonstrate that Fourier analysis can provide insights into understanding grokking. In contrast, \citet{barak2022hidden} hypothesize that the network steadily progresses towards generalization rather than making abrupt leaps. Furthermore, smaller instances of grokking have been explored by \citet{liu2022towards}, unveiling distinct learning phases. Importantly, \citet{thilak2022slingshot} argue that grokking can manifest without explicit regularization, proposing the existence of an implicit regularizer known as the slingshot mechanism. Furthermore, \citet{liu2022omnigrok} demonstrate that grokking is not limited to modulo addition tasks but can occur in more conventional tasks as well.

\paragraph{Information theory in understanding neural networks.}
Information theory has long provided a valuable framework for understanding the interplay between probability and information. It has been used to understand the inner workings of neural networks \citep{tishby2000information, tishby2015deep}, trying to make the mechanism of neural network operations more ``white-box''. As the computation burden of information-theoretic quantities is usually high, a recent departure from traditional information theory has emerged, shifting the focus towards generalizing it for measuring relationships between matrices \citep{bach2022information, skean2023dime, zhang2023relationmatch, zhang2023kernel}. \citet{tan2023information} apply matrix mutual information and joint entropy to self-supervised learning.



\clearpage

\bibliography{reference}

\begin{thebibliography}{20}
\providecommand{\natexlab}[1]{#1}
\providecommand{\url}[1]{\texttt{#1}}
\expandafter\ifx\csname urlstyle\endcsname\relax
  \providecommand{\doi}[1]{doi: #1}\else
  \providecommand{\doi}{doi: \begingroup \urlstyle{rm}\Url}\fi

\bibitem[Bach(2022)]{bach2022information}
Francis Bach.
\newblock Information theory with kernel methods.
\newblock \emph{IEEE Transactions on Information Theory}, 2022.

\bibitem[Barak et~al.(2022)Barak, Edelman, Goel, Kakade, Malach, and Zhang]{barak2022hidden}
Boaz Barak, Benjamin Edelman, Surbhi Goel, Sham Kakade, Eran Malach, and Cyril Zhang.
\newblock Hidden progress in deep learning: Sgd learns parities near the computational limit.
\newblock \emph{Advances in Neural Information Processing Systems}, 35:\penalty0 21750--21764, 2022.

\bibitem[Davies et~al.(2023)Davies, Langosco, and Krueger]{davies2023unifying}
Xander Davies, Lauro Langosco, and David Krueger.
\newblock Unifying grokking and double descent.
\newblock \emph{arXiv preprint arXiv:2303.06173}, 2023.

\bibitem[Gromov(2023)]{gromov2023grokking}
Andrey Gromov.
\newblock Grokking modular arithmetic.
\newblock \emph{arXiv preprint arXiv:2301.02679}, 2023.

\bibitem[Liu et~al.(2022{\natexlab{a}})Liu, Kitouni, Nolte, Michaud, Tegmark, and Williams]{liu2022towards}
Ziming Liu, Ouail Kitouni, Niklas~S Nolte, Eric Michaud, Max Tegmark, and Mike Williams.
\newblock Towards understanding grokking: An effective theory of representation learning.
\newblock \emph{Advances in Neural Information Processing Systems}, 35:\penalty0 34651--34663, 2022{\natexlab{a}}.

\bibitem[Liu et~al.(2022{\natexlab{b}})Liu, Michaud, and Tegmark]{liu2022omnigrok}
Ziming Liu, Eric~J Michaud, and Max Tegmark.
\newblock Omnigrok: Grokking beyond algorithmic data.
\newblock \emph{arXiv preprint arXiv:2210.01117}, 2022{\natexlab{b}}.

\bibitem[Ma \& Ying(2021)Ma and Ying]{ma2021linear}
Chao Ma and Lexing Ying.
\newblock On linear stability of sgd and input-smoothness of neural networks.
\newblock \emph{Advances in Neural Information Processing Systems}, 34:\penalty0 16805--16817, 2021.

\bibitem[Merrill et~al.(2023)Merrill, Tsilivis, and Shukla]{merrill2023tale}
William Merrill, Nikolaos Tsilivis, and Aman Shukla.
\newblock A tale of two circuits: Grokking as competition of sparse and dense subnetworks.
\newblock \emph{arXiv preprint arXiv:2303.11873}, 2023.

\bibitem[Nanda et~al.(2023)Nanda, Chan, Liberum, Smith, and Steinhardt]{nanda2023progress}
Neel Nanda, Lawrence Chan, Tom Liberum, Jess Smith, and Jacob Steinhardt.
\newblock Progress measures for grokking via mechanistic interpretability.
\newblock \emph{arXiv preprint arXiv:2301.05217}, 2023.

\bibitem[Notsawo~Jr et~al.(2023)Notsawo~Jr, Zhou, Pezeshki, Rish, Dumas, et~al.]{notsawo2023predicting}
Pascal Notsawo~Jr, Hattie Zhou, Mohammad Pezeshki, Irina Rish, Guillaume Dumas, et~al.
\newblock Predicting grokking long before it happens: A look into the loss landscape of models which grok.
\newblock \emph{arXiv preprint arXiv:2306.13253}, 2023.

\bibitem[Power et~al.(2022)Power, Burda, Edwards, Babuschkin, and Misra]{power2022grokking}
Alethea Power, Yuri Burda, Harri Edwards, Igor Babuschkin, and Vedant Misra.
\newblock Grokking: Generalization beyond overfitting on small algorithmic datasets.
\newblock \emph{arXiv preprint arXiv:2201.02177}, 2022.

\bibitem[Skean et~al.(2023)Skean, Osorio, Brockmeier, and Giraldo]{skean2023dime}
Oscar Skean, Jhoan Keider~Hoyos Osorio, Austin~J Brockmeier, and Luis Gonzalo~Sanchez Giraldo.
\newblock Dime: Maximizing mutual information by a difference of matrix-based entropies.
\newblock \emph{arXiv preprint arXiv:2301.08164}, 2023.

\bibitem[Tan et~al.(2023)Tan, Yang, Huang, Yuan, and Zhang]{tan2023information}
Zhiquan Tan, Jingqin Yang, Weiran Huang, Yang Yuan, and Yifan Zhang.
\newblock Information flow in self-supervised learning.
\newblock \emph{arXiv preprint arXiv:2309.17281}, 2023.

\bibitem[Thilak et~al.(2022)Thilak, Littwin, Zhai, Saremi, Paiss, and Susskind]{thilak2022slingshot}
Vimal Thilak, Etai Littwin, Shuangfei Zhai, Omid Saremi, Roni Paiss, and Joshua Susskind.
\newblock The slingshot mechanism: An empirical study of adaptive optimizers and the grokking phenomenon.
\newblock \emph{arXiv preprint arXiv:2206.04817}, 2022.

\bibitem[Tishby \& Zaslavsky(2015)Tishby and Zaslavsky]{tishby2015deep}
Naftali Tishby and Noga Zaslavsky.
\newblock Deep learning and the information bottleneck principle.
\newblock In \emph{2015 ieee information theory workshop (itw)}, pp.\  1--5. IEEE, 2015.

\bibitem[Tishby et~al.(2000)Tishby, Pereira, and Bialek]{tishby2000information}
Naftali Tishby, Fernando~C Pereira, and William Bialek.
\newblock The information bottleneck method.
\newblock \emph{arXiv preprint physics/0004057}, 2000.

\bibitem[Varma et~al.(2023)Varma, Shah, Kenton, Kram{\'a}r, and Kumar]{varma2023explaining}
Vikrant Varma, Rohin Shah, Zachary Kenton, J{\'a}nos Kram{\'a}r, and Ramana Kumar.
\newblock Explaining grokking through circuit efficiency.
\newblock \emph{arXiv preprint arXiv:2309.02390}, 2023.

\bibitem[Zhang et~al.(2023{\natexlab{a}})Zhang, Tan, Yang, Huang, and Yuan]{zhang2023kernel}
Yifan Zhang, Zhiquan Tan, Jingqin Yang, Weiran Huang, and Yang Yuan.
\newblock Matrix information theory for self-supervised learning.
\newblock \emph{arXiv preprint arXiv:2305.17326}, 2023{\natexlab{a}}.

\bibitem[Zhang et~al.(2023{\natexlab{b}})Zhang, Yang, Tan, and Yuan]{zhang2023relationmatch}
Yifan Zhang, Jingqin Yang, Zhiquan Tan, and Yang Yuan.
\newblock Relationmatch: Matching in-batch relationships for semi-supervised learning.
\newblock \emph{arXiv preprint arXiv:2305.10397}, 2023{\natexlab{b}}.

\bibitem[{\v{Z}}unkovi{\v{c}} \& Ilievski(2022){\v{Z}}unkovi{\v{c}} and Ilievski]{vzunkovivc2022grokking}
Bojan {\v{Z}}unkovi{\v{c}} and Enej Ilievski.
\newblock Grokking phase transitions in learning local rules with gradient descent.
\newblock \emph{arXiv preprint arXiv:2210.15435}, 2022.

\end{thebibliography}
\bibliographystyle{iclr}

\clearpage
\appendix

\textbf{\large Appendix}

\section{Detailed proofs} \label{proofs}

\begin{theorem} 
Suppose $W^*$ is a interpolation solution and the gradient of $f(\mathbf{x}, W^*)$ is $L$-Lipschitz about $\mathbf{x}$. Suppose at least $\delta$-fraction of test data has a train dataset neighbour whose distance is at most $\epsilon(W^*)$, where $\epsilon(W^*) = \min\{1, \frac{1}{2(\sqrt{\frac{n}{\min _i\left\|\mathbf{x}_i\right\|^2_2}\left\|W^*\right\|_F^2 S\left(W^*\right)}+L)}  \}$. Then the test accuracy will be at least $\delta$.  
\end{theorem}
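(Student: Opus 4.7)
The plan is to show that every test point that lies within $\epsilon(W^*)$ of some same-class training point is classified correctly, so the overall accuracy is at least the $\delta$-fraction assumed to have such neighbors. Because $W^*$ interpolates under MSE, $f(\mathbf{x}_i, W^*) = \text{onehot}(y_i)$ at each training input, so I only need the network output at a nearby test point $\mathbf{x}$ to stay strictly within $\ell_2$-distance $1/2$ of $\text{onehot}(y_i)$; via $\|\cdot\|_\infty \leq \|\cdot\|_2$ this forces the coordinate at $y_i$ to exceed $1/2$ while every other coordinate stays below $1/2$, so $\argmax f(\mathbf{x}, W^*) = y_i$.

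First I would upgrade Lemma \ref{robust lemma} from an averaged bound to a pointwise one: since every summand on the left-hand side is nonnegative, each individual term satisfies $\|\nabla_{\mathbf{x}} f(\mathbf{x}_i, W^*)\|_F \leq M$ with $M := \sqrt{\tfrac{n \,\|W^*\|_F^2 \, S(W^*)}{\min_i \|\mathbf{x}_i\|_2^2}}$. This is exactly the quantity inside the square root in the definition of $\epsilon(W^*)$, explaining why the $\sqrt{n}$ appears there.

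Next, for a test point $\mathbf{x}$ with a training neighbor $\mathbf{x}_i$ at distance $\|\Delta\| \leq \epsilon(W^*) \leq 1$, where $\Delta := \mathbf{x} - \mathbf{x}_i$, I would use the fundamental theorem of calculus to write $f(\mathbf{x}, W^*) - f(\mathbf{x}_i, W^*) = \int_0^1 \nabla_\mathbf{x} f(\mathbf{x}_i + t\Delta, W^*)\,\Delta\, dt$, combine the pointwise gradient bound with the $L$-Lipschitzness of $\nabla_\mathbf{x} f$ to get $\|\nabla_\mathbf{x} f(\mathbf{x}_i + t\Delta, W^*)\| \leq M + L t \|\Delta\|$, and integrate in $t$ to obtain $\|f(\mathbf{x}, W^*) - \text{onehot}(y_i)\| \leq M\|\Delta\| + \tfrac{L}{2}\|\Delta\|^2 \leq (M+L)\|\Delta\|$, where the last step uses $\|\Delta\| \leq 1$. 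Plugging in $\|\Delta\| \leq \tfrac{1}{2(M+L)}$ crosses the $1/2$ threshold, and summing the good events over the $\delta$-fraction of test points gives the claimed accuracy bound.

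The main obstacle is really just bookkeeping: passing from the averaged bound of Lemma \ref{robust lemma} to a pointwise one introduces the factor of $\sqrt{n}$, and trading $\tfrac{L}{2}\|\Delta\|^2$ for $L\|\Delta\|$ via $\|\Delta\|\leq 1$ is what makes $\tfrac{1}{2(M+L)}$ drop out cleanly, so the two explicit clipping ingredients in the definition of $\epsilon(W^*)$ (the outer $\min\{1,\cdot\}$ and the factor of $2$) are used in essential ways. The more substantive modeling assumption, implicit in the theorem statement, is that the closest training neighbor of a test point shares its label; without it no small-perturbation argument can deliver correct classification, so the result is as much a statement about the geometry of the train/test distributions as about $W^*$ itself.
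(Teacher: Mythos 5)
Your proposal is correct and follows essentially the same route as the paper: upgrade Lemma~\ref{robust lemma} to a pointwise gradient bound (introducing the $\sqrt{n}$), use gradient Lipschitzness plus $\|\Delta\|\le 1$ to bound the output displacement by $(M+L)\|\Delta\|\le \tfrac12$, and conclude via the argmax rule at the interpolated one-hot targets. Your only deviation is cosmetic but welcome: you replace the paper's vector-valued mean-value step (which invokes an intermediate point $\hat{\mathbf{x}}$) with the fundamental theorem of calculus, which is the rigorous form of the same estimate, and you make explicit the tacit assumption that the nearby training neighbour shares the test point's label.
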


\begin{proof}
For a test data $\mathbf{x}'_i$, denote its training data neighbour as $\mathbf{x}_i$, then 
\begin{align*}
  \| f(\mathbf{x}_i, W^*) -  f(\mathbf{x}'_i, W^*) \| & \leq \| \mathbf{x}_i- \mathbf{x}'_i \| \| \nabla_{\mathbf{x}} f(\mathbf{\hat{x}}, W^*) \| \\
  & \leq \| \mathbf{x}_i- \mathbf{x}'_i \| (\| \nabla_{\mathbf{x}} f(\mathbf{{x}}_i, W^*) \|  + \| \nabla_{\mathbf{x}} f(\mathbf{\hat{x}}, W^*) - \nabla_{\mathbf{x}} f(\mathbf{{x}}_i, W^*) \| )  \\
  & \leq \| \mathbf{x}_i- \mathbf{x}{'}_i \| (\| \nabla_{\mathbf{x}} f(\mathbf{{x}}_i, W^*) \| + L\| \mathbf{x}_i- \mathbf{x}{'}_i \|) \\
  & \leq \| \mathbf{x}_i- \mathbf{x}'_i \| (\| \nabla_{\mathbf{x}} f(\mathbf{{x}}_i, W^*) \| + L)  \\
  & \leq \| \mathbf{x}_i- \mathbf{x}'_i \| ( \sqrt{\frac{n}{\min _i\left\|\mathbf{x}_i\right\|^2_2}\left\|W^*\right\|_F^2 S\left(W^*\right)} + L) \\
  & \leq \frac{1}{2}. 
\end{align*}

Thus the difference between $f(\mathbf{x}_i, W^*)$ and $f(\mathbf{x}'_i, W^*)$ is bounded by $\frac{1}{2}$, as $W^*$ is the interpolating solution. Taking the usual argmax decision rule will make $\mathbf{x}'_i$ successfully classified.
\end{proof}

\begin{lemma}
Suppose vectors $\mathbf{a}_i$, $\mathbf{a}_j$ and $\mathbf{b}_i$, $\mathbf{b}_j$ are $l_2$ normalized, then $| \langle \mathbf{a}_i, \mathbf{a}_j \rangle - \langle \mathbf{b}_i, \mathbf{b}_j \rangle | \leq 4 (\|\mathbf{a}_i- \mathbf{b}_i \| + \|\mathbf{a}_j- \mathbf{b}_j \| )$.   
\end{lemma}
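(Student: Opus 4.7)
The plan is to use the standard telescoping trick for bilinear forms. Specifically, I would write
\begin{equation*}
\langle \mathbf{a}_i, \mathbf{a}_j \rangle - \langle \mathbf{b}_i, \mathbf{b}_j \rangle = \langle \mathbf{a}_i - \mathbf{b}_i, \mathbf{a}_j \rangle + \langle \mathbf{b}_i, \mathbf{a}_j - \mathbf{b}_j \rangle,
\end{equation*}
which is obtained by adding and subtracting $\langle \mathbf{b}_i, \mathbf{a}_j \rangle$ (or equally well $\langle \mathbf{a}_i, \mathbf{b}_j \rangle$, giving a symmetric variant).

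Next, I would take absolute values, apply the triangle inequality, and then bound each inner product by Cauchy--Schwarz:
\begin{equation*}
|\langle \mathbf{a}_i, \mathbf{a}_j \rangle - \langle \mathbf{b}_i, \mathbf{b}_j \rangle| \leq \|\mathbf{a}_i - \mathbf{b}_i\|\,\|\mathbf{a}_j\| + \|\mathbf{b}_i\|\,\|\mathbf{a}_j - \mathbf{b}_j\|.
\end{equation*}
Since all four vectors are $l_2$ normalized, $\|\mathbf{a}_j\| = \|\mathbf{b}_i\| = 1$, so the right-hand side collapses to $\|\mathbf{a}_i - \mathbf{b}_i\| + \|\mathbf{a}_j - \mathbf{b}_j\|$. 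This is in fact a sharper bound than the claimed one; the claimed inequality follows immediately since $1 \leq 4$.

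There is essentially no obstacle: the entire argument is a one-line telescoping identity followed by Cauchy--Schwarz and the unit-norm hypothesis. The constant $4$ in the statement is simply a slack factor (perhaps reserved for use in Lemma~\ref{entropy difference} or Lemma~\ref{mutual information difference}, where further constants accumulate from a Hadamard-product step and from summing over two indices), but nothing in this lemma itself requires the looser constant. I would present the stronger bound in passing and note that multiplying by four yields the stated inequality.
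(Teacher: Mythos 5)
Your proof is correct, and it takes a genuinely different route from the paper's. You use the standard bilinear telescoping $\langle \mathbf{a}_i, \mathbf{a}_j \rangle - \langle \mathbf{b}_i, \mathbf{b}_j \rangle = \langle \mathbf{a}_i - \mathbf{b}_i, \mathbf{a}_j \rangle + \langle \mathbf{b}_i, \mathbf{a}_j - \mathbf{b}_j \rangle$ followed by Cauchy--Schwarz and the unit-norm hypothesis, which yields the sharper constant $1$ in one line. The paper instead goes through the polarization identity for unit vectors: it rewrites the inner-product gap as a difference of squared distances $\| \mathbf{a}_i - \mathbf{a}_j \|^2 - \| \mathbf{b}_i - \mathbf{b}_j \|^2$ (strictly, the gap equals \emph{half} of this, so the paper's first displayed equality is really an inequality in the favorable direction), factors it as a difference of squares, bounds the sum factor by $4$ since each distance between unit vectors is at most $2$, and finishes with the reverse triangle inequality. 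Both arguments are valid; yours is more elementary and gives a strictly better constant, which would propagate to improved constants in Lemmas~\ref{entropy difference} and~\ref{mutual information difference} (the factor $4$ there comes precisely from this lemma, not from the Hadamard-product step). Since only the order of magnitude matters downstream, nothing is lost either way, but your observation that the claimed $4$ is pure slack is accurate.
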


\begin{proof}
\begin{align*}
| \langle \mathbf{a}_i, \mathbf{a}_j \rangle - \langle \mathbf{b}_i, \mathbf{b}_j \rangle | & = | \| \mathbf{a}_i - \mathbf{a}_j \|^2 -  \| \mathbf{b}_i - \mathbf{b}_j \|^2 |\\
& = | (\| \mathbf{a}_i - \mathbf{a}_j \|+ \| \mathbf{b}_i - \mathbf{b}_j \|)(\| \mathbf{a}_i - \mathbf{a}_j \|- \| \mathbf{b}_i - \mathbf{b}_j \|) |  \\
& \leq 4 | \| \mathbf{a}_i - \mathbf{a}_j \|- \| \mathbf{b}_i - \mathbf{b}_j \| | \\
& = 4 | \| \mathbf{a}_i - \mathbf{b}_i + \mathbf{b}_j - \mathbf{a}_j +\mathbf{b}_i - \mathbf{b}_j \|- \| \mathbf{b}_i - \mathbf{b}_j \|    | \\
& \leq 4 (\|\mathbf{a}_i- \mathbf{b}_i \| + \|\mathbf{a}_j- \mathbf{b}_j \| ).
\end{align*}
\end{proof}

\begin{lemma}
Suppose $\mathbf{Z} = [\mathbf{z}_1 \cdots \mathbf{z}_n]$ and $\mathbf{Z}' = [\mathbf{z}'_1 \cdots \mathbf{z}'_n]$ have each of their columns $l_2$ normalized. Denote $\mathbf{G} = \mathbf{Z}^T \mathbf{Z}$ and $\mathbf{G}' = (\mathbf{Z}')^{T} \mathbf{Z}'$. Then we have following inequality:
\begin{equation*}
| \operatorname{H}_2(\mathbf{G}) -  \operatorname{H}_2(\mathbf{G}') | \leq 8 \sum^n_{i=1} \| \mathbf{z}_i - \mathbf{z}'_i \|.
\end{equation*}
\end{lemma}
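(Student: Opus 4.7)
The plan is to first apply Proposition \ref{joint entropy reduction} to rewrite each $\operatorname{H}_2$ in terms of the squared Frobenius norm of the corresponding Gram matrix, which reduces the entropy difference to
\begin{equation*}
| \operatorname{H}_2(\mathbf{G}) - \operatorname{H}_2(\mathbf{G}') | = \bigl| \log \| \mathbf{G} \|_F^2 - \log \| \mathbf{G}' \|_F^2 \bigr|.
\end{equation*}
The crucial structural observation is that the columns of $\mathbf{Z}$ and $\mathbf{Z}'$ are $l_2$-normalized, so the diagonal entries of both Gram matrices equal $1$; therefore $\| \mathbf{G} \|_F^2 \geq n$ and $\| \mathbf{G}' \|_F^2 \geq n$. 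On $[n, \infty)$ the map $x \mapsto \log x$ is $1/n$-Lipschitz, so by the mean value theorem $|\log \| \mathbf{G} \|_F^2 - \log \| \mathbf{G}' \|_F^2| \leq \frac{1}{n} |\| \mathbf{G} \|_F^2 - \| \mathbf{G}' \|_F^2|$.

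Next I would expand $\| \mathbf{G} \|_F^2 - \| \mathbf{G}' \|_F^2 = \sum_{i,j} (G_{ij}^2 - G'^{2}_{ij})$ and bound it entrywise. The diagonal terms vanish since $G_{ii} = G'_{ii} = 1$. For off-diagonal terms, use the factorization $|G_{ij}^2 - G'^{2}_{ij}| = |G_{ij} + G'_{ij}| \cdot |G_{ij} - G'_{ij}|$ together with Cauchy--Schwarz on unit vectors, which gives $|G_{ij}|, |G'_{ij}| \leq 1$ and hence $|G_{ij} + G'_{ij}| \leq 2$. This bounds each off-diagonal summand by $2 |G_{ij} - G'_{ij}|$, to which Lemma \ref{gram perturb} applies, producing $2 \cdot 4 (\| \mathbf{z}_i - \mathbf{z}'_i \| + \| \mathbf{z}_j - \mathbf{z}'_j \|)$. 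Summing over $i \neq j$ and using the identity $\sum_{i \neq j} (\| \mathbf{z}_i - \mathbf{z}'_i \| + \| \mathbf{z}_j - \mathbf{z}'_j \|) = 2(n-1) \sum_i \| \mathbf{z}_i - \mathbf{z}'_i \|$, then dividing by $n$, yields an inequality of the claimed form.

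The main obstacle is pinning down the multiplicative constant: a bookkeeping with the stated form of Lemma \ref{gram perturb} yields roughly $16(n-1)/n$, whereas the target constant is $8$. Closing this gap of a factor of two requires tightening one of the intermediate steps. Two natural routes are (i) to sharpen Lemma \ref{gram perturb} for unit vectors via the standard bilinearity identity $\langle \mathbf{a}_i, \mathbf{a}_j \rangle - \langle \mathbf{b}_i, \mathbf{b}_j \rangle = \langle \mathbf{a}_i - \mathbf{b}_i, \mathbf{a}_j \rangle + \langle \mathbf{b}_i, \mathbf{a}_j - \mathbf{b}_j \rangle$ (giving constant $1$ rather than $4$), or (ii) to exploit the inequality $|\log a - \log b| \leq |a-b|/\sqrt{ab}$ together with the fact that only the $2(\|\mathbf{G}\|_F^2 - n)$ off-diagonal mass is perturbed. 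Either refinement is routine once the main skeleton above is in place, and the qualitative conclusion---that the entropy difference is controlled linearly by the cumulative column perturbation---is already established by the plan.
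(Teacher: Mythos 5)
Your skeleton is exactly the paper's proof: Proposition \ref{joint entropy reduction} to convert entropies to $\log\|\cdot\|_F^2$, the lower bound $\min\{\|\mathbf{G}\|_F^2,\|\mathbf{G}'\|_F^2\}\ge n$ from the unit diagonal, $\log(1+x)\le x$, the factorization $|G_{ij}^2-(G'_{ij})^2|\le 2|G_{ij}-G'_{ij}|$, and Lemma \ref{gram perturb}. The factor-of-two discrepancy you flag is real, and it is in fact a slip in the paper's own last step: the paper writes $8\cdot\frac{1}{n}\sum_{i\ne j}(\|\mathbf{z}_i-\mathbf{z}'_i\|+\|\mathbf{z}_j-\mathbf{z}'_j\|)=8\sum_i\|\mathbf{z}_i-\mathbf{z}'_i\|$, whereas the left side equals $\frac{16(n-1)}{n}\sum_i\|\mathbf{z}_i-\mathbf{z}'_i\|$, which exceeds $8\sum_i\|\mathbf{z}_i-\mathbf{z}'_i\|$ for $n\ge 3$. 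Your repair route (i) does close the gap: by bilinearity, $\langle\mathbf{a}_i,\mathbf{a}_j\rangle-\langle\mathbf{b}_i,\mathbf{b}_j\rangle=\langle\mathbf{a}_i-\mathbf{b}_i,\mathbf{a}_j\rangle+\langle\mathbf{b}_i,\mathbf{a}_j-\mathbf{b}_j\rangle$, so for unit vectors the constant in Lemma \ref{gram perturb} improves from $4$ to $1$, and the chain then yields $\frac{4(n-1)}{n}\sum_i\|\mathbf{z}_i-\mathbf{z}'_i\|\le 4\sum_i\|\mathbf{z}_i-\mathbf{z}'_i\|$, comfortably within the claimed bound of $8$. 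So the lemma is true, your plan proves it once route (i) is executed, and you have correctly diagnosed that the paper's proof as written does not justify its own constant.
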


\begin{proof}
\begin{align*}
| \operatorname{H}_2(\mathbf{G}_1) -  \operatorname{H}_2(\mathbf{G}'_1) | & = | \log \| \mathbf{G}_1  \|^2_F - \log \| \mathbf{G}'_1  \|^2_F  | \\
& = | \log \frac{\| \mathbf{G}_1  \|^2_F}{\| \mathbf{G}'_1  \|^2_F}|  \\
& =  \log (1 +  \frac{| \| \mathbf{G}_1  \|^2_F - \| \mathbf{G}'_1  \|^2_F |}{\min \{ \| \mathbf{G}_1  \|^2_F, \| \mathbf{G}'_1  \|^2_F  \} } ) \\
& \leq  \log (1 +  \frac{| \| \mathbf{G}_1  \|^2_F - \| \mathbf{G}'_1  \|^2_F |}{n} ) \\
& \leq \frac{| \| \mathbf{G}_1  \|^2_F - \| \mathbf{G}'_1  \|^2_F |}{n} \\
&= \frac{|\sum_{i,j}  \langle \mathbf{z}_i, \mathbf{z}_j \rangle^2 - \langle \mathbf{z}'_i, \mathbf{z}'_j \rangle^2  |}{n} \\
& \leq \frac{\sum_{i,j} | (\langle \mathbf{z}_i, \mathbf{z}_j \rangle - \langle \mathbf{z}'_i, \mathbf{z}'_j \rangle)(\langle \mathbf{z}_i, \mathbf{z}_j \rangle + \langle \mathbf{z}'_i, \mathbf{z}'_j \rangle)  |}{n} \\
&\leq 2 \frac{\sum_{i,j} | \langle \mathbf{z}_i, \mathbf{z}_j \rangle - \langle \mathbf{z}'_i, \mathbf{z}'_j \rangle |}{n} \\
&\leq 8 \frac{\sum_{i \neq j} \|\mathbf{z}_i- \mathbf{z}'_i \| + \|\mathbf{z}_j- \mathbf{z}'_j \|}{n} \\
&= 8 \sum^n_{i=1} \| \mathbf{z}_i - \mathbf{z}'_i \|.
\end{align*}
\end{proof}

\begin{lemma}
if numbers $\mid a \mid$ $\mid a' 
 \mid$ and $\mid b \mid $ $\mid b' \mid$ all less or equal than 1, then $\mid ab - a' b' \mid \leq \mid a - a'\mid + \mid b - b' \mid$.    
\end{lemma}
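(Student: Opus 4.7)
The plan is to apply the classical add-and-subtract (telescoping) trick to split the product difference into two pieces, each of which involves only one of the differences $a - a'$ or $b - b'$. Concretely, I would write
\[
ab - a'b' \;=\; (a - a')\,b \;+\; a'\,(b - b'),
\]
and then invoke the triangle inequality together with the multiplicativity of absolute value to obtain
\[
|ab - a'b'| \;\leq\; |a - a'|\,|b| \;+\; |a'|\,|b - b'|.
\]
From here, the hypothesis that $|a'| \leq 1$ and $|b| \leq 1$ immediately bounds each factor of $|b|$ and $|a'|$ by $1$, yielding the claimed inequality $|ab - a'b'| \leq |a - a'| + |b - b'|$.

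There is essentially no obstacle: the only modest choice is which pair of quantities to pull out in the telescoping step. By symmetry one could alternatively write $ab - a'b' = (a - a')b' + a(b - b')$, in which case the bounds $|a| \leq 1$ and $|b'| \leq 1$ would be used instead. Either route works, so the four-way bound in the hypothesis is more than enough (only two of the four bounds are actually needed for the conclusion). No induction, case analysis, or nontrivial estimate is required.
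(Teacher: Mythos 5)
Your proof is correct and uses exactly the same decomposition as the paper, namely $ab - a'b' = (a-a')b + a'(b-b')$ followed by the triangle inequality and the bounds $|b|\leq 1$, $|a'|\leq 1$. Your additional remark that only two of the four hypotheses are needed is accurate but does not change the argument.
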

\begin{proof}
Note $ab - a' b' = (a - a')b + a'(b-b')$. Then the conclusion follows from the triangular inequality.
\end{proof}

\begin{lemma}
Suppose $\mathbf{Z}_1 = [\mathbf{z}^{(1)}_1 \cdots \mathbf{z}^{(1)}_n]$, $\mathbf{Z}_2 = [\mathbf{z}^{(2)}_1 \cdots \mathbf{z}^{(2)}_n]$ and $\mathbf{Z}'_1 = [(\mathbf{z}'_1)^{(1)} \cdots (\mathbf{z}'_n)^{(1)}]$, $\mathbf{Z}'_2 = [(\mathbf{z}'_1)^{(2)} \cdots (\mathbf{z}'_n)^{(2)}]$ have each of their columns $l_2$ normalized. Denote $\mathbf{G}_1 = \mathbf{Z}^T_1 \mathbf{Z}_1$, $\mathbf{G}_2 = \mathbf{Z}^T_2 \mathbf{Z}_2$ and $\mathbf{G}'_1 = (\mathbf{Z}'_1)^T \mathbf{Z}'_1$, $\mathbf{G}'_2 = (\mathbf{Z}'_2)^T \mathbf{Z}'_2$. Then we have the following inequality:
\begin{equation*}
| \operatorname{I}_2(\mathbf{G}_1, \mathbf{G}_2) -  \operatorname{I}_2(\mathbf{G}'_1, \mathbf{G}'_2) | \leq 16 \sum^2_{j=1} \sum^n_{i=1} \| \mathbf{z}^{(j)}_i - (\mathbf{z}'_i)^{(j)} \|.
\end{equation*}    
\end{lemma}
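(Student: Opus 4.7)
The plan is to bootstrap off the three supporting lemmas (\ref{mutual information reduction}, \ref{gram perturb}, \ref{entropy difference}, \ref{Hadarmard}) just stated. By the definition of matrix mutual information and Proposition \ref{mutual information reduction}, I can write
\[
\operatorname{I}_2(\mathbf{G}_1;\mathbf{G}_2)=\operatorname{H}_2(\mathbf{G}_1)+\operatorname{H}_2(\mathbf{G}_2)-\operatorname{H}_2(\mathbf{G}_1\odot\mathbf{G}_2),
\]
and similarly for the primed quantities. A triangle inequality then splits the target into three entropy differences:
\[
|\operatorname{I}_2(\mathbf{G}_1;\mathbf{G}_2)-\operatorname{I}_2(\mathbf{G}'_1;\mathbf{G}'_2)|\leq\sum_{j=1}^{2}|\operatorname{H}_2(\mathbf{G}_j)-\operatorname{H}_2(\mathbf{G}'_j)|+|\operatorname{H}_2(\mathbf{G}_1\odot\mathbf{G}_2)-\operatorname{H}_2(\mathbf{G}'_1\odot\mathbf{G}'_2)|.
\]

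For the two ``marginal'' terms I would invoke Lemma \ref{entropy difference} directly, yielding $8\sum_i\|\mathbf{z}^{(j)}_i-(\mathbf{z}'_i)^{(j)}\|$ for $j=1,2$, contributing $8\sum_{j=1}^{2}\sum_i\|\mathbf{z}^{(j)}_i-(\mathbf{z}'_i)^{(j)}\|$ to the bound. The hard part is the Hadamard-product term, and this is where I expect the real work. I would mimic the chain of inequalities in the proof of Lemma \ref{entropy difference}: since the diagonal of $\mathbf{G}_1\odot\mathbf{G}_2$ consists of $1$'s (each $\mathbf{z}^{(j)}_i$ is $l_2$-normalized), its squared Frobenius norm is at least $n$, so the $\log(1+x)\le x$ step still gives
\[
|\operatorname{H}_2(\mathbf{G}_1\odot\mathbf{G}_2)-\operatorname{H}_2(\mathbf{G}'_1\odot\mathbf{G}'_2)|\leq\tfrac{1}{n}\bigl|\|\mathbf{G}_1\odot\mathbf{G}_2\|_F^2-\|\mathbf{G}'_1\odot\mathbf{G}'_2\|_F^2\bigr|.
\]
Expanding via $a^2-b^2=(a-b)(a+b)$ with $a=(\mathbf{G}_1)_{ij}(\mathbf{G}_2)_{ij}$ and $b=(\mathbf{G}'_1)_{ij}(\mathbf{G}'_2)_{ij}$ (both bounded by $1$ in absolute value by Cauchy--Schwarz on normalized columns), the ``$a+b$'' factor is at most $2$, leaving $2\sum_{i,j}|a-b|$. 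Here Lemma \ref{Hadarmard} applies, because $|a|,|b|\le 1$ and the individual Gram entries are also bounded by $1$, giving $|a-b|\le|(\mathbf{G}_1)_{ij}-(\mathbf{G}'_1)_{ij}|+|(\mathbf{G}_2)_{ij}-(\mathbf{G}'_2)_{ij}|$. Finally Lemma \ref{gram perturb} converts the pointwise Gram perturbation to representation perturbation, and summing over $i,j$ each $\|\mathbf{z}^{(j)}_i-(\mathbf{z}'_i)^{(j)}\|$ appears $O(n)$ times, canceling the $1/n$ prefactor.

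Putting the pieces together, the Hadamard term contributes an $8\sum_{j=1}^{2}\sum_i\|\mathbf{z}^{(j)}_i-(\mathbf{z}'_i)^{(j)}\|$ piece matching the two marginal pieces, so the three contributions sum to $16\sum_{j=1}^{2}\sum_i\|\mathbf{z}^{(j)}_i-(\mathbf{z}'_i)^{(j)}\|$, as claimed. The step I expect to be the main obstacle is keeping constants tight in the Hadamard term: a naive chaining of Lemma \ref{Hadarmard} with Lemma \ref{gram perturb} is lossy, and a cleaner alternative I would consider as a sanity check is to recognize that $\mathbf{G}_1\odot\mathbf{G}_2$ is itself the Gram matrix of the Kronecker-product vectors $\mathbf{z}^{(1)}_i\otimes\mathbf{z}^{(2)}_i$ (which remain unit norm), so Lemma \ref{entropy difference} can be applied one more time after the elementary bound $\|\mathbf{z}^{(1)}_i\otimes\mathbf{z}^{(2)}_i-(\mathbf{z}'_i)^{(1)}\otimes(\mathbf{z}'_i)^{(2)}\|\le\sum_{j}\|\mathbf{z}^{(j)}_i-(\mathbf{z}'_i)^{(j)}\|$; this route recovers the same constant without ever invoking Lemma \ref{Hadarmard} explicitly and confirms the stated bound.
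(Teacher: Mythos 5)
Your proposal is correct and follows essentially the same route as the paper's proof: the triangle-inequality split into two marginal entropy differences plus the Hadamard-product term, Lemma \ref{entropy difference} for the former, and the $\log(1+x)\le x$ / difference-of-squares / Lemma \ref{Hadarmard} / Lemma \ref{gram perturb} chain for the latter, summing to the constant $16$. The Kronecker-product observation you add as a sanity check is a nice (and arguably cleaner) alternative for the Hadamard term, but it is not needed and the paper does not use it.
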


\begin{proof}
Note $|\operatorname{I}_2(\mathbf{G}_1, \mathbf{G}_2) -  \operatorname{I}_2(\mathbf{G}'_1, \mathbf{G}'_2)| = |(\operatorname{H}_2(\mathbf{G}_1) - \operatorname{H}_2(\mathbf{G}'_1)) + (\operatorname{H}_2(\mathbf{G}_2) - \operatorname{H}_2(\mathbf{G}'_2)) + (\operatorname{H}_2(\mathbf{G}'_1 \odot \mathbf{G}'_2) - \operatorname{H}_2(\mathbf{G}_1 \odot \mathbf{G}_2))| \leq |\operatorname{H}_2(\mathbf{G}_1) - \operatorname{H}_2(\mathbf{G}'_1) | + |\operatorname{H}_2(\mathbf{G}_2) - \operatorname{H}_2(\mathbf{G}'_2)| + |\operatorname{H}_2(\mathbf{G}'_1 \odot \mathbf{G}'_2) - \operatorname{H}_2(\mathbf{G}_1 \odot \mathbf{G}_2)| \leq |\operatorname{H}_2(\mathbf{G}'_1 \odot \mathbf{G}'_2) - \operatorname{H}_2(\mathbf{G}_1 \odot \mathbf{G}_2)| + 8 \sum^2_{j=1} \sum^n_{i=1} \| \mathbf{z}^{(j)}_i - (\mathbf{z}'_i)^{(j)} \|.$   

\begin{align*}
&| \operatorname{H}_2(\mathbf{G}'_1 \odot \mathbf{G}'_2) - \operatorname{H}_2(\mathbf{G}_1 \odot \mathbf{G}_2) | \\
& = | \log \| \mathbf{G}'_1 \odot \mathbf{G}'_2  \|^2_F - \log \| \mathbf{G}_1 \odot \mathbf{G}_2  \|^2_F  | \\
& = | \log \frac{\| \mathbf{G}'_1 \odot \mathbf{G}'_2  \|^2_F}{\| \mathbf{G}_1 \odot \mathbf{G}_2  \|^2_F}|  \\
& =  \log (1 +  \frac{| \| \mathbf{G}'_1 \odot \mathbf{G}'_2  \|^2_F - \| \mathbf{G}_1 \odot \mathbf{G}_2  \|^2_F |}{\min \{ \| \mathbf{G}'_1 \odot \mathbf{G}'_2  \|^2_F, \| \mathbf{G}_1 \odot \mathbf{G}_2  \|^2_F  \} } ) \\
& \leq  \log (1 +  \frac{| \| \mathbf{G}'_1 \odot \mathbf{G}'_2  \|^2_F - \| \mathbf{G}_1 \odot \mathbf{G}_2  \|^2_F |}{n} ) \\
& \leq \frac{| \| \mathbf{G}'_1 \odot \mathbf{G}'_2  \|^2_F - \| \mathbf{G}_1 \odot \mathbf{G}_2  \|^2_F |}{n} \\
&= \frac{|\sum_{i,j} (\mathbf{G}'_1(i,j)\mathbf{G}'_2(i,j))^2 - (\mathbf{G}_1(i,j)\mathbf{G}_2(i,j))^2 |}{n} \\
& \leq \frac{\sum_{i,j} | (\mathbf{G}'_1(i,j)\mathbf{G}'_2(i,j)-\mathbf{G}_1(i,j)\mathbf{G}_2(i,j))(\mathbf{G}'_1(i,j)\mathbf{G}'_2(i,j)+\mathbf{G}_1(i,j)\mathbf{G}_2(i,j))  |}{n} \\
&\leq 2 \frac{\sum_{i,j} | \mathbf{G}'_1(i,j)\mathbf{G}'_2(i,j)-\mathbf{G}_1(i,j)\mathbf{G}_2(i,j) |}{n} \\
&\leq 2 \frac{\sum_{i,j} | \mathbf{G}'_1(i,j)-\mathbf{G}_1(i,j) | + | \mathbf{G}'_2(i,j)-\mathbf{G}_2(i,j) |}{n} \\
&\leq 8 \frac{\sum_{i \neq j} \|\mathbf{z}^{(1)}_i- (\mathbf{z}'_i)^{(1)} \| + \|\mathbf{z}^{(1)}_j- (\mathbf{z}'_j)^{(1)} \| + \|\mathbf{z}^{(2)}_i- (\mathbf{z}'_i)^{(2)} \| + \|\mathbf{z}^{(2)}_j- (\mathbf{z}'_j)^{(2)} \|}{n} \\
&= 8 \sum^2_{j=1} \sum^n_{i=1} \| \mathbf{z}^{(j)}_i - (\mathbf{z}'_i)^{(j)} \|.
\end{align*}
\end{proof}

It is interesting to see that if the input gram matrix is $\mathbf{I}$, then $\operatorname{I}_{\alpha}(\mathbf{I}, \mathbf{G}) = \operatorname{H}_{\alpha}(\mathbf{G})$. Note this is usually the case when considering raw pixel picture gram matrix, making mutual information a more 'broadly' quantity.

\section{Relation with information bottleneck} 

In Figure \ref{fig: MNIST MI ablation}, we plot the matrix mutual information of perturbed logits and one-hot labels gram matrices on MNIST, which we termed PMI$'$. We find it is not as timely as PMI to grokking. Note PMI and PMI$'$ can be seen as matrix information versions of the quantities used in information bottleneck \citep{tishby2000information, tishby2015deep}. If we view input as variable $X$, logits as $Z$, and labels as $Y$. Then PMI is similar to $\operatorname{MI}(X,Z)$ and PMI$'$ is similar to $\operatorname{MI}(Z,Y)$.

\begin{figure}[t] 
\centering 
\includegraphics[width=0.5\columnwidth]{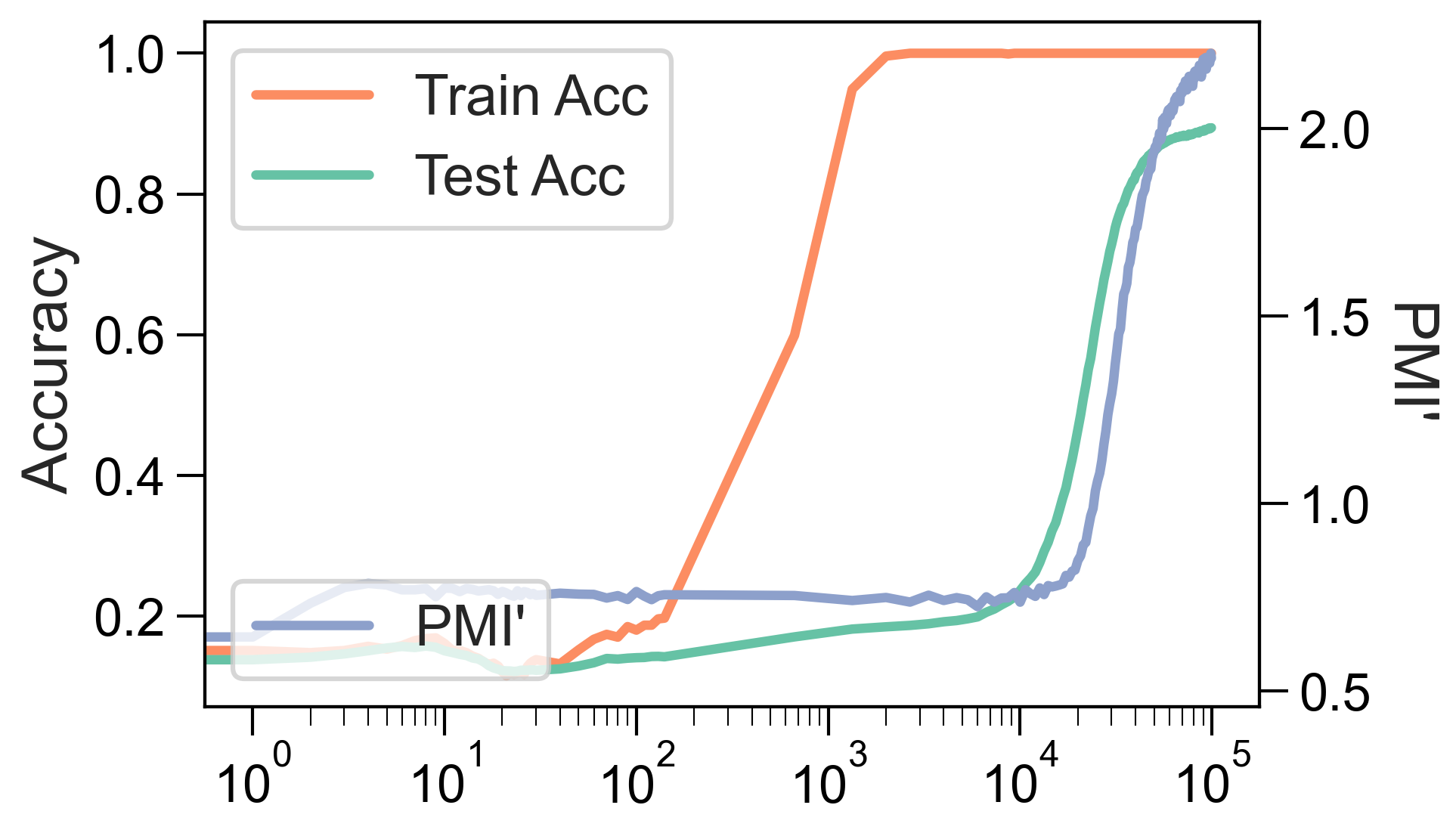}
\caption{Perturb mutual information ablation.}
\label{fig: MNIST MI ablation}
\end{figure}

\end{document}